\theoremstyle{plain}
\newtheorem{theorem}{Theorem}[section]
\newtheorem{proposition}[theorem]{Proposition}
\newtheorem{lemma}[theorem]{Lemma}
\theoremstyle{definition}
\theoremstyle{remark}
\title{Optimizing the Unknown: Black Box Bayesian Optimization with Energy-Based Model and Reinforcement Learning}
\author{%
  Ruiyao Miao$^{1}$, Junren Xiao$^{2}$, Shiya Tsang$^{2}$,   Hui Xiong$^{2,3}$ \thanks{Corresponding author.}, Yingnian Wu$^{1}$\footnotemark[1] \\
  $^{1}$University of California, Los Angeles \\
  $^{2}$The Hong Kong University of Science and Technology (Guangzhou) \\  $^{3}$The Hong Kong University of Science and Technology \\
  \texttt{ruiyao0809@g.ucla.edu, {\{jxiao767, szeng785\}}@connect.hkust-gz.edu.cn} \\
  \texttt{xionghui@hkust-gz.edu.cn, ywu@stat.ucla.edu}
  % \AND
  % Coauthor \\
  % Affiliation \\
  % Address \\
  % \texttt{email} \\
  % \And
  % Coauthor \\
  % Affiliation \\
  % Address \\
  % \texttt{email} \\
  % \And
  % Coauthor \\
  % Affiliation \\
  % Address \\
  % \texttt{email} \\
}
\begin{document}

\maketitle

\begin{abstract}
Existing Bayesian Optimization (BO) methods typically balance exploration and exploitation to optimize costly objective functions. However, these methods often suffer from a significant one-step bias, which may lead to convergence towards local optima and poor performance in complex or high-dimensional tasks. Recently, Black-Box Optimization (BBO) has achieved success across various scientific and engineering domains, particularly when function evaluations are costly and gradients are unavailable. Motivated by this, we propose the Reinforced Energy-Based Model for Bayesian Optimization (REBMBO), which integrates Gaussian Processes (GP) for local guidance with an Energy-Based Model (EBM) to capture global structural information. Notably, we define each Bayesian Optimization iteration as a Markov Decision Process (MDP) and use Proximal Policy Optimization (PPO) for adaptive multi-step lookahead, dynamically adjusting the depth and direction of exploration to effectively overcome the limitations of traditional BO methods. We conduct extensive experiments on synthetic and real-world benchmarks, confirming the superior performance of REBMBO. Additional analyses across various GP configurations further highlight its adaptability and robustness.
\end{abstract}

\section{Introduction}
\label{sec:introduction}
% 第一段话：介绍BBO的definition，介绍BBO的运用领域，介绍BBO现阶段研究的主要问题。需要突出说明BBO在这些领域是非常重要的一种解决方式。
Black-box optimization (BBO) is crucial for solving complex scientific and engineering problems when gradient information is unavailable or function evaluations are expensive~\cite{alarie2021two}. In practice, BBO approaches are widely applied to hyper-parameter tuning in machine learning, materials discovery, drug formulation, and industrial process optimization, where each evaluation often incurs costly simulations or physical trials. Bayesian Optimization (BO) is a prominent BBO technique that builds a probabilistic surrogate (e.g., a Gaussian Process~\cite{wang2023recent}) and an acquisition function to guide new sample queries, thereby balancing exploration and exploitation in a principled manner. However, standard GP-based BO can suffer from “one-step myopia,” focusing on short-term predicted gains at the expense of more thorough exploration, a limitation that becomes especially pronounced in high-dimensional or multi-modal environments.
% 第二段话：介绍现阶段学者在BBO主要研究问题的解决方案，可以采用分类或者按照时间线之类的都可以。指出现阶段研究种类型问题的所存在的问题（也就是我们的motivation）。需要很清晰的让审稿人知道我们的1-2个motivation是什么。
% 已有的方法主要在研究BO的什么方面（比如2个流派，大家都在解决工业环境中一些问题），解决了什么样子的问题。在简单的环境中所有的方法都可以很快的找到最优级，但是他们的解决办法在复杂高维的环境中容易导致陷入局部最优解，现有的解决方法主要是花费大量的计算资源和计算的时间来进行寻找全局最优解（比如：举例）。

Existing Bayesian Optimization (BO) methods primarily aim at efficiently locating optimal solutions by carefully balancing exploration and exploitation~\cite{turner2021bayesian}. Common strategies for handling complex optimization problems include dimensionality reduction methods like REMBO~\cite{wang2013bayesian}, or local partitioning techniques such as TuRBO~\cite{eriksson2019scalable}. Although these approaches perform well in simpler scenarios, they often exhibit a critical shortcoming—rapidly converging to local optima when confronted with complex, high-dimensional tasks~\cite{zhang2023learning,eriksson2019scalable}. To solve this limitation, recent techniques incorporate resource-intensive multi-step look-ahead schemes, including 2-step Expected Improvement (EI)~\cite{lee2020efficient}, Knowledge Gradient (KG)~\cite{bogunovic2016truncated}, and reinforcement learning-driven methods like EARL-BO~\cite{cheon2024earl}. However, such methods typically demand significant computational resources yet  still fail to achieve effective global exploration in challenging environments.

% Current research suggests local modeling, dimensionality reduction, latent embeddings, robust acquisition functions, and parallel computation frameworks to address high-dimensional or noisy black-box problems      ~\citep{gelbart2014bayesian,ginsbourger2011dealing,kennedy2000predicting}. Due to limited evaluation costs, these single-step methods usually fail to capture global patterns.    This highlights 2 motivations for our work: (1) establishing a structured multi-step planning approach to solve one-step decision-making biases, and (2) combining a clear global exploration signal to efficiently direct optimization to promising but underexplored areas.

% 为了解决用更少计算资源和计算时间来寻找到最优解，我们提出了XXX，一个什么什么什么。具体来说，XXXX。
In this study, we introduce the Reinforced Energy-Based Bayesian Optimization Model (REBMBO), depicted in Figure~\ref{fig:uml_rebmbo}, which addresses traditional shortfalls by combining an Energy-Based Model (EBM) with multi-step Reinforcement Learning (RL). Our novel EBM-UCB acquisition function integrates Gaussian Process local uncertainty estimates with global signals derived from a neural network–based energy landscape learned via short-run MCMC, thereby guiding exploration away from less promising regions. In addition, we treat each Bayesian Optimization iteration as a Markov Decision Process (MDP) and employ Proximal Policy Optimization (PPO) for adaptive multi-step lookahead, thus dynamically adjusting exploration depth and direction to enhance robustness. To capture both local and global exploration objectives, we propose a theoretically justified Landscape-Aware Regret (LAR) metric that incorporates global exploration penalties, offering a more holistic assessment of performance in complex optimization scenarios.

This research offers the following key contributions:

(1) We incorporate \textbf{Energy-Based signals into a UCB-style acquisition function} in the GP surrogate to capture diverse behaviors. This synergy between global exploration and precise local modeling addresses the limitations of single-step acquisition approaches.

(2) Bayesian Optimization is modeled as a \textbf{Markov Decision Process (MDP) with Proximal Policy Optimization (PPO)}.  This technique addresses the \textbf{one-step myopia} of typical GP-based strategies by adaptively balancing exploration and exploitation via multi-step lookahead.

(3) We introduce \textbf{Landscape-Aware Regret (LAR)}, a regret metric that adds an energy-informed global term to standard regret to reveal missed global opportunities and sharpen exploration–exploitation assessment; it reduces to standard regret when $\alpha{=}0$.

Experimental results, summarized in Figure~\ref{fig:rebmbo_benchmark_results}, indicate that REBMBO reduces final Landscape-Aware Regret (LAR) and improves overall performance scores compared to state-of-the-art methods, consistently outperforming both single-step and short-horizon lookahead approaches even under challenging high-dimensional conditions. The subsequent sections detail our methodology and empirical findings, highlighting REBMBO’s robust and efficient performance.

% \begin{figure*}[ht]
%     \vskip 0.01in
%     \centering
%     \begin{subfigure}{0.48\textwidth}
%         \centering
%         \includegraphics[width=\textwidth]{}
%         \caption{Optimization Progress on the Branin Function}
%         \label{fig:rebmbo_framework}
%     \end{subfigure}
%     \hfill
%     \begin{subfigure}{0.48\textwidth}
%         \centering
%         \includegraphics[width=\textwidth]{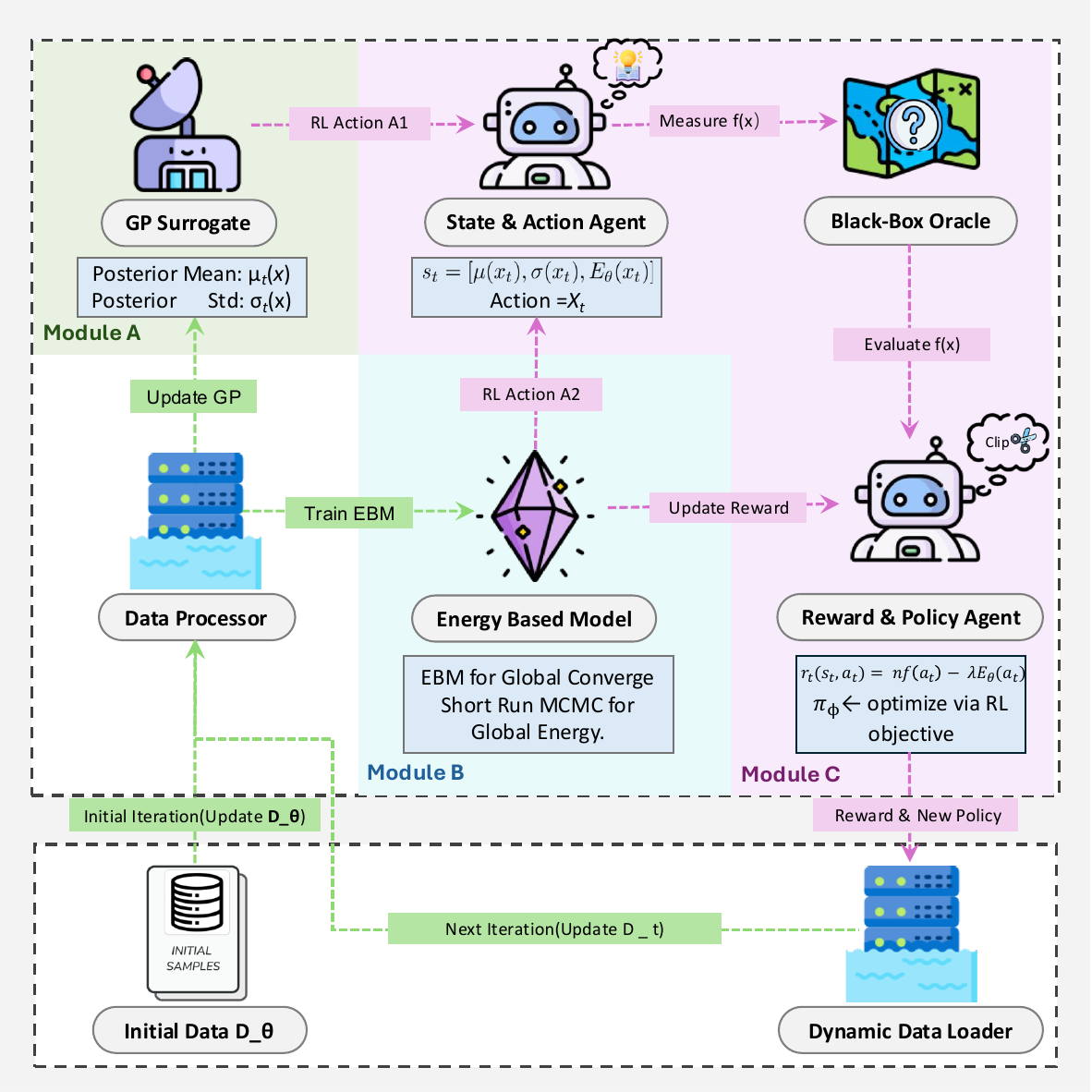}
%         \caption{UML Diagram of the REBMBO Process.}
%         \label{fig:uml_diagram}
%     \end{subfigure}
%     \caption{Illustration of the REBMBO framework: (a) shows the optimization progress; (b) presents a UML diagram illustrating the detailed process.}
%     \label{fig:rebmbo_combined}
%     \vskip -0.2in
% \end{figure*}

\section{Related Work}
\label{sec:related_work}

% 第一部分：BBO的相关论文研究，突出现阶段研究的缺点是什么
% 第二部分：BO的在全局最优的相关研究（所有的baseline都需要加进去），按照不同的流派或者是时间线来进行梳理，每个工作的不足是什么
% 第三部分：强化学习在BO中的相关研究。突出为什么要用RL+BO这个创新的想法。以及现计算RL+BO的不足之处，也是我们模型能解决的内容
\subsection{BO Background and Shortcomings}
Black-box optimization (BBO) is central for tasks with costly or noisy function evaluations, commonly handled by Bayesian Optimization (BO) frameworks~\cite{turner2021bayesian}. However, high-dimensional or discrete domains often overwhelm classical Gaussian Process (GP) surrogates, prompting techniques such as ARD-based variable selection~\cite{ament2022scalable}, REMBO~\cite{wang2013bayesian}, additive models~\cite{kandasamy2015high}, or local partitioning in TuRBO~\cite{eriksson2019scalable}. Discrete or combinatorial BO further adopts specialized surrogates (VAE~\cite{gomez2018automatic}, COMBO~\cite{oh2019combo}, TPE~\cite{bergstra2011algorithms}, SMAC~\cite{hutter2011sequential}) yet generally remains single-step. Likewise, robust or constrained variants~\cite{wang2017batched,wu2016parallel,bogunovic2018adversarially,gramacy2016lagp,gelbart2014bayesian}, multi-objective~\cite{knowles2006parego,ponweiser2008multiobjective,hernandez2016predictive}, transfer/multi-fidelity~\cite{feurer2018practical,wistuba2015hyperparameter,kennedy2000predicting,kandasamy2017multi}, and parallel~\cite{ginsbourger2011dealing,desautels2014parallelizing,kandasamy2018parallelised} approaches usually retain one-step acquisitions. TruVaR (Truncated Variance Reduction)~\cite{bogunovic2016truncated} unifies BO and level-set estimation with strong guarantees under pointwise costs or heteroscedastic noise, while look-ahead or rollout-based schemes~\cite{lee2020efficient,yue2019non} often incur high computational overhead. Recent work leverages MLMC for nested integrals~\cite{yang2024accelerating} or formulates BO as an MDP under transition constraints~\cite{folch2024transition}, and methods like GLASSES~\cite{gonzalez2016glasses} approximate multi-step losses via forward simulation. Yet, many remain domain-specific or lack synergy with short-run MCMC. Existing RL integrations~\cite{cheon2024earl,bogunovic2018adversarially} also typically rely on local posteriors, leaving open the challenge of thorough multi-step exploration across multi-modal landscapes.

\subsection{Baseline Targeting Global Optima and Limitations}
In this paper, we compare against six common baselines that represent key paradigms in Bayesian Optimization. Classic BO~\cite{turner2021bayesian} is a canonical single-step GP-based approach. BALLET-ICI~\cite{zhang2023learning} alternates global and local GPs but remains relatively myopic on multi-modal tasks. TuRBO~\cite{eriksson2019scalable} specializes in local trust-region expansions yet lacks far-reaching jumps. EARL-BO~\cite{cheon2024earl} is an RL-based multi-step method, heavily dependent on local GP precision. In addition, we include 2-step EI~\cite{lee2020efficient} and KG (Knowledge Gradient)~\cite{bogunovic2016truncated} as two well-known look-ahead techniques, though they tend to be limited to short horizons or incur high computational overhead. These baselines respectively illustrate single-step local search, partially global scanning, or short-horizon non-myopia, but none combines global signals with adaptive multi-step planning in a unified manner. By contrast, REBMBO employs a short-run MCMC-trained Energy-Based Model for global exploration, a GP surrogate for local accuracy, and a PPO-based multi-step RL for planning. This synergy overcomes the one-step constraints in Classic BO, enables deeper exploration than BALLET-ICI or TuRBO, provides more robust coverage than EARL-BO, and avoids the excessive rollout overhead observed in 2-step EI or KG. As detailed in Section~\ref{sec:methodology}, REBMBO leverages these three modules to handle high-dimensional tasks within limited budgets, offering a global and multi-step perspective. 

\subsection{RL in BO and Energy‐Driven Multi‐Step Planning}
Recent attempts to integrate reinforcement learning into Bayesian Optimization have enabled multi-step acquisitions but frequently rely on localized kernels or omit global exploration cues, leading to suboptimal performance in complex tasks \cite{cheon2024earl, bogunovic2018adversarially}. For instance, EARL-BO shows the benefits of multi-step planning in high-dimensional settings but lacks explicit energy-based signals for broader coverage \cite{cheon2024earl}. However, REBMBO framework formulates each BO iteration as a MDP solved via Proximal Policy Optimization. This design alleviates one-step myopia and combines local GP fidelity with iterative RL lookahead under strict evaluation budgets.

%Look-ahead BO is highly relevant to this paper, yet it is discussed very little. Please discuss the following works, and consider compare some of them, just name a few:

\label{sec:methodology}
\vspace{-0.1in}
\section{Preliminaries}
\label{sec:method_preliminaries}

\textbf{Online Black-Box Optimization (BBO).}
We consider a continuous function \(f(\mathbf{x})\) defined over \(\mathbf{x} \in \mathcal{X} \subset \mathbb{R}^d\), with the objective:
\[
\mathbf{x}^*
=
\arg\max_{\mathbf{x} \in \mathcal{X}} f(\mathbf{x}),
\]
under a strict evaluation budget. Each evaluation of \(f(\mathbf{x})\) can be computationally or financially expensive~\citep{frazier2018tutorial}, thus data efficiency is crucial. Unlike offline methods that rely on fixed sampling designs, online BO adaptively selects \(\mathbf{x}_t\) based on previously observed data, facilitating faster discovery of optimal regions. 

\textbf{Bayesian Optimization (BO) and Gaussian Processes (GP).}
BO maximizes \(\max_{\mathbf{x}\in\mathcal{X}} f(\mathbf{x})\) by employing a Gaussian Process (GP) prior:
\(
f(\mathbf{x}) 
\;\sim\; 
\mathcal{GP}\bigl(m(\mathbf{x}),\,k(\mathbf{x},\mathbf{x}')\bigr),
\)
where typically \(m(\mathbf{x})=0\), and the kernel function \(k(\mathbf{x},\mathbf{x}')\) (e.g., RBF or Matérn) encodes assumptions about the function’s smoothness~\citep{mackay1998introduction}. Assuming noisy observations \(y_i = f(\mathbf{x}_i) + \varepsilon_i\), with \(\varepsilon_i \sim \mathcal{N}(0,\sigma_n^2)\), the GP posterior is given by:
\[
f(\mathbf{x})\;\big|\;\mathcal{D}_k
\;\sim\;
\mathcal{N}\!\bigl(\mu^k(\mathbf{x}),\,\sigma^{2,k}(\mathbf{x})\bigr),
\]
with observed data \(\mathcal{D}_k = \{(\mathbf{x}_i, y_i)\}_{i=1}^k\). Although GPs capture local uncertainty, their inherent locality often restricts global exploration, making them prone to myopic optimization behaviors.

\textbf{Energy-Based Models (EBMs).}
EBMs specify an unnormalized probability density:
\[
p_\theta(\mathbf{x})
=
\frac{\exp\bigl[-E_\theta(\mathbf{x})\bigr]}{Z_\theta},
\quad
Z_\theta
=
\int
\exp\bigl[-E_\theta(\mathbf{u})\bigr]\,
d\mathbf{u},
\]
where the energy function \(E_\theta(\mathbf{x})\) is typically parameterized as a neural network through short-run MCMC-based Maximum Likelihood Estimation~\citep{parshakova2019distributional,song2021train}. EBMs effectively guide exploration toward globally promising regions. Unlike GPs, EBMs explicitly capture multi-modal global structures, thus addressing the limitation of excessive local exploration inherent in standard GP-based methods.

\textbf{Reinforcement Learning (RL) and Proximal Policy Optimization (PPO).}
RL formalizes the optimization process as a sequential decision-making task, wherein a policy \(\pi_{\phi_{ppo}}\), parameterized by neural network weights \(\phi_{ppo}\), maps states \(\mathbf{s}_t\) to actions \(\mathbf{a}_t\). PPO~\citep{schulman2017proximal} stabilizes the training by limiting policy changes through a clipped probability ratio:
\[
r_t(\phi_{ppo})
=
\frac{\pi_{\phi_{ppo}}(\mathbf{a}_t \mid \mathbf{s}_t)}
{\pi_{{\phi_{ppo}}^{\mathrm{old}}}(\mathbf{a}_t \mid \mathbf{s}_t)},
\]
thereby preventing erratic changes in the parameters.  We define states as combinations of GP posterior estimates and global EBM signals; actions match suggested sampling points. The use of PPO helps to overcome the single-step myopia that is inherent in conventional acquisition methods. This is accomplished through the use of multi-step reasoning.

%----------------------------------------------------

%Standard Bayesian optimization has limits in global exploration and multi-step decision-making, especially in high-dimensional or multi-modal issues. REBMBO solve these shortcomings by reducing myopia and overshooting in single-step approaches by combining a Gaussian Process (GP) surrogate for local modeling, an Energy-Based Model (EBM) for global structure, and a multi-step RL approach via PPO.  EBM signals are integrated into UCB-style acquisition functions by introducing a negative term $-E_\theta(\mathbf{x})$, guiding exploration towards underexplored basins rather than local GP uncertainty.  We then add a PPO-based strategy that anticipates future improvements over several rounds to this single-step method, decreasing its shortsightedness. 

\section{REBMBO Model \& Algorithmic Details}
\label{sec:rebmbo_details}
\begin{wrapfigure}{r}{0.5\textwidth}
    \centering
    \vspace{-0.2 in}
    \includegraphics[width=0.5\textwidth]{}
    \caption{
        REBMBO Workflow Diagram. The architecture comprises: Module A for local modeling using GP posterior; Module B which trains an EBM to capture global structure; and Module C, which uses PPO-based RL agents to generate decisions and optimize via rewards shaped by both function values and EBM energy signals. Arrows of different colors represent distinct data flows: green arrows indicate model parameter updates, and purple arrows represent RL actions, evaluation or feedback steps. 
}
    \label{fig:uml_rebmbo}
    \vspace{-0.3 in}
\end{wrapfigure}

After updating the GP model (which focuses on local predictions) and the EBM model (which looks at global patterns), REBMBO uses the PPO technique to choose the next sample point by combining both local and global information. The GP posterior, EBM signals, and PPO’s multi-step planning help REBMBO find good solutions in complex spaces with many dimensions or peaks. PPO’s planning horizon mitigates the short-sightedness of single-step approaches. Overall, REBMBO provides a unified solution for balancing global exploration and sequential decision-making in challenging black-box optimization settings in Figure ~\ref{fig:uml_rebmbo}.

\subsection{Module A: Gaussian Process Variants}
\label{subsec:gp_variants}
As new data comes into the REBMBO framework, it first goes to Module A (the Gaussian Process surrogate, shown in Figure~\ref{fig:uml_rebmbo}) to improve the estimate of the objective function $f(\mathbf{x})$. In particular, the GP uses the input-output data it has seen to improve its predictions about the average outcome and the level of uncertainty in specific areas, ensuring precise local modeling with each update. Local modeling is essential for black-box optimization because it lets us acquire solid insights from a few observations before sampling the full domain. 

We add different GP modules in REBMBO because many black-box functions are complex, have multiple peaks, or are costly to compute, so we need a flexible "core" model that can quickly adjust and measure uncertainty with just a few samples. GPs provide estimates of the average and variability for $f(\mathbf{x})$, making them useful for identifying significant local trends, particularly when limited data is available at the start. To deal with various problem dimensions and complexities, we propose three REBMBO variants:

\textbf{(1) REBMBO-C (Classic GP)}~\cite{gonzalez2024survey}.
Employs exact $\mathcal{O}(n^3)$ GP inference, which is practical for moderate $n$. While this variant is straightforward, it can be costly for large $n$ or high-dimensional $d$.

\noindent
\textbf{(2) REBMBO-S (Sparse GP)}~\cite{mcintire2016sparse}.
Adopts a sparse approximation to alleviate the $\mathcal{O}(n^3)$ bottleneck in higher dimensions. It introduces $m \ll n$ inducing points $\{\mathbf{z}_j\}$ and approximates
\(
\mathbf{K}_{\mathbf{x},\mathbf{x}}
\approx
\mathbf{K}_{\mathbf{x},\mathbf{z}}\,
\mathbf{K}_{\mathbf{z},\mathbf{z}}^{-1}\,
\mathbf{K}_{\mathbf{z},\mathbf{x}},
\)
lowering the update cost to $\mathcal{O}(nm^2)$. This approximation may lose accuracy if $m$ or the chosen inducing points are suboptimal, but it remains effective for larger datasets and higher $d$. In our EBM-driven acquisition, the approximate mean $\tilde{\mu}_{f,t}(\mathbf{x})$ and variance $\tilde{\sigma}_{f,t}^2(\mathbf{x})$ replace the exact GP posteriors.

\noindent
\textbf{(3) REBMBO-D (Deep GP)}~\cite{wilson2016deep}.
For problems that exhibit multi-scale or non-stationary behavior, a deep kernel can capture intricate latent features beyond what standard kernels provide. We use a deep network $\Theta$ to map inputs $\mathbf{x}$ into latent features $\phi_{\mathrm{GP}}(\mathbf{x})$, then compute GP-like statistics:
\[
\mu(\mathbf{x};\,D,\Theta)
=
m^\top\phi_{\mathrm{GP}}(\mathbf{x})
+
\eta(\mathbf{x}),
\quad
\sigma^2(\mathbf{x};\,D,\Theta)
=
\phi_{\mathrm{GP}}(\mathbf{x})^\top
K^{-1}
\phi_{\mathrm{GP}}(\mathbf{x})
+
\tfrac{1}{\beta},
\]
where $\eta(\mathbf{x})$ is a (potentially learned) mean function, and $K$ is a smaller covariance matrix in the latent space. With sufficient training data, this approach can represent complex functions more flexibly than a standard GP and supports sublinear regret under moderate network capacity.

All three GP variants work with the EBM-UCB and PPO modules to create the full REBMBO system; the main difference is in how each variant calculates its GP posterior. For simplicity, we will show EBM-UCB using the “Classic GP” version (REBMBO-C), but the same idea applies to the sparse and deep GP versions. More information about differences of GP variants, including their complexity and how to implement them, can be found in Appendices~\ref{sec:appendix_classic_gp}, ~\ref{sec:appendix_sparse_gp} and~\ref{sec:appendix_deep_gp}. 

As updated data then moves to Module B, REBMBO addresses the limitations of purely local exploration by introducing an Energy-Based Model to capture global structure.  The next section (Module B) explains that the EBM helps the GP surrogate by directing the search away from areas that aren't useful and toward more promising ones, especially in complex or varied situations.

\subsection{Module B: EBM-Driven Global Exploration}
\label{subsec:ebm_ucb_in_bo}

To overcome the limitations of purely local GP-based search, we introduce an Energy-Based Model (EBM) defined as \(E_\theta(\mathbf{x})\). After Module A updates the GP posterior with newly collected data (see Figure~\ref{fig:uml_rebmbo}), Module B uses these updates to train the EBM, which captures a global “energy” landscape. While the GP’s uncertainty \(\sigma_{f,t}(\mathbf{x})\) pinpoints locally undersampled regions, the EBM reveals which basins in \(\mathcal{X}\) are more likely to contain near-optimal solutions. Combining these local and global insights helps prevent the search from stalling in unproductive local pockets and enables REBMBO to traverse complex objective surfaces more efficiently.

\paragraph{EBM Training Mechanism.}
 We parameterize $E_\theta(\mathbf{x})$ as a neural network trained under short-run MCMC-based Maximum Likelihood Estimation (MLE). At each iteration, we alternate:\\
 \textbf{Positive Phase:} Lower $E_\theta(\mathbf{x}_i)$ for real data points \(\mathbf{x}_i\), guiding the model to “observed” regions.\\
  \textbf{Negative Phase:} Draw a small number $(K)$ of Langevin samples from \(p_\theta(\mathbf{u}) \propto \exp[-E_\theta(\mathbf{u})]\), then push these model-generated samples to higher energy unless they reflect data-like features.
This short-run MCMC procedure (e.g.\ Stochastic Gradient Langevin Dynamics, detailed in Appendix~\ref{appendix:short_run_mcmc}) ensures that low-energy regions correspond to promising global basins.

\textbf{EBM Parameterization Details.}
We specifically train \(E_\theta(\mathbf{x})\) via short-run MCMC-based MLE as follows. Suppose we have data $\{\mathbf{x}_i\}_{i=1}^n$ from an unknown distribution $p_{\mathrm{data}}$. We fit $\theta$ by minimizing
\[
-\frac{1}{n}\,\sum_{i=1}^n \log p_\theta(\mathbf{x}_i)
=
\frac{1}{n}\,\sum_{i=1}^n E_\theta(\mathbf{x}_i)
\;+\;
\log Z(\theta),
\]
which is equivalent to maximizing $\sum_{i=1}^n \log p_\theta(\mathbf{x}_i)$ . Let
\(
\mathcal{L}(\theta)
=
\sum_{i=1}^n \log p_\theta(\mathbf{x}_i),
\)
where $p_\theta(\mathbf{x}_i)\,\propto\,\exp[-E_\theta(\mathbf{x}_i)]$.Then, we have
\[
\nabla_\theta\,\mathcal{L}(\theta)
=
-\sum_{i=1}^n \nabla_\theta E_\theta(\mathbf{x}_i)
\;+\;
\sum_{i=1}^n
\int
p_\theta(\mathbf{u})\,\nabla_\theta E_\theta(\mathbf{u})\,d\mathbf{u}.
\]
Dividing by $n$ yields the well-known positive-minus-negative decomposition:
\[
\frac{1}{n}\,\nabla_\theta\,\mathcal{L}(\theta)
=
-\underbrace{\mathbb{E}_{\mathbf{x}\sim p_{\mathrm{data}}}[\nabla_\theta E_\theta(\mathbf{x})]}_{\text{Positive Phase}}
+
\underbrace{\mathbb{E}_{\mathbf{u}\sim p_\theta}[\nabla_\theta E_\theta(\mathbf{u})]}_{\text{Negative Phase}},
\]
which balances data alignment against model-drawn samples \citep{nijkamp2020anatomy,pang2020learning}. Since short-run MCMC approximates $p_\theta(\mathbf{u})$ sufficiently well, it lets us implement a Robbins-Monro-style gradient update \citep{robbins1951stochastic}. Iteratively alternating positive and negative phases steers $E_\theta(\mathbf{x})$ to be low in data-like basins and high elsewhere, thus revealing globally promising regions for exploration.

\paragraph{EBM-UCB Acquisition Function.}
Once the EBM is trained, we embed its negative energy $-\,E_\theta(\mathbf{x})$ into a standard GP-UCB scheme. Let $\mu_{f,t}(\mathbf{x})$ and $\sigma_{f,t}(\mathbf{x})$ be the GP posterior mean and standard deviation at iteration $t$. A typical UCB function is
\(
\alpha_{\mathrm{UCB}}(\mathbf{x})
=
\mu_{f,t}(\mathbf{x})
+
\beta\,\sigma_{f,t}(\mathbf{x}),
\)
where $\beta>0$ controls exploitation vs.\ exploration. To incorporate the EBM’s global guidance, we define
\[
\alpha_{\mathrm{EBM-UCB}}(\mathbf{x})
=
\mu_{f,t}(\mathbf{x})
+
\beta\,\sigma_{f,t}(\mathbf{x})
-
\gamma\,E_\theta(\mathbf{x}),
\]
where $\gamma>0$ specifies how strongly $-E_\theta(\mathbf{x})$ biases the search toward underexplored basins. In multi-modal and high-dimensional tasks, this “global penalty” helps avoid wasting evaluations in uncertain but unpromising pockets, augmenting the GP’s local exploration with a broader sense of global structure. As further discussed in Section~\ref{sec:experiments}, this synergy accelerates convergence on challenging landscapes and reduces the need for purely local or manually specified look-ahead heuristics.

\paragraph{Theoretical Contributions and Landscape-Aware Regret (LAR).}
We adopt Landscape-Aware Regret (LAR) as a complementary metric,
\[
R_t^{LAR}
=
\bigl[f(\mathbf{x}^*) - f(\mathbf{x}_t)\bigr]
+
\alpha\bigl[E_\theta(\mathbf{x}^*) - E_\theta(\mathbf{x}_t)\bigr],
\]
where $\alpha\!\ge\!0$ adds an energy-informed global term that penalizes missing low-energy (high-likelihood) basins; setting $\alpha{=}0$ recovers standard regret. 
Under mild alignment and regularity assumptions (Appendix~\ref{sec:lar_theory}), our EBM-UCB retains the GP-UCB-type sublinear rate, so incorporating $E_\theta(\mathbf{x})$ does not weaken optimality guarantees \citep{nijkamp2020anatomy,pang2020learning}.

\paragraph{Mixture kernel for the GP posterior (rationale + form).}
The GP posterior is computed by inverting an $n\times n$ kernel matrix built from a mixture of Radial Basis Function (RBF) and Matérn covariances:
\[
k_f(\mathbf{x},\mathbf{x}')
=
\sigma_f^2\!\left[
w_{\mathrm{RBF}}\,k_{\mathrm{RBF}}(\mathbf{x},\mathbf{x}')
+
w_{\mathrm{Matern}}\,k_{\mathrm{Matern}}(\mathbf{x},\mathbf{x}')
\right],
\]
with
\(
k_{\mathrm{RBF}}(\mathbf{x},\mathbf{x}')
=
\exp\!\bigl(-\tfrac{1}{2}(\mathbf{x}-\mathbf{x}')^\top \Lambda^{-1}(\mathbf{x}-\mathbf{x}')\bigr)
\)
and, for $\nu{=}2.5$ and $r{=}\|\mathbf{x}-\mathbf{x}'\|$,
\(
k_{\mathrm{Matern}}(\mathbf{x},\mathbf{x}')
=
\bigl(1+\tfrac{\sqrt{5}\,r}{\ell}+\tfrac{5\,r^2}{3\,\ell^2}\bigr)\exp(-\tfrac{\sqrt{5}\,r}{\ell}).
\)
 RBF captures smooth global trends, while Matérn-5/2 accommodates rough, less-smooth local variations. The mixture enlarges the RKHS compared to either kernel alone, which matches REBMBO's design: the EBM offers global basin cues, and the GP needs both smooth (RBF) and rough (Matérn) components to model local structure faithfully. 
The mixture weights $\{w_{\mathrm{RBF}},w_{\mathrm{Matern}}\}$ are learned by type-II marginal likelihood (evidence maximization), avoiding per-task manual tuning.

By unifying the global signal $E_\theta(\mathbf{x})$ with these locally expressive GP statistics (via the mixture kernel), REBMBO couples principled global exploration with precise local modeling; Module~C then employs PPO-based multi-step planning to mitigate one-step myopia and fully exploit this synergy.

\subsection{Module C: Multi-Step Planning via PPO}
\label{subsec:ppo_formulation}

While \(\alpha_{\mathrm{EBM-UCB}}(\mathbf{x})\) enhances global exploration over local approaches, a single-step acquisition can still cause local myopia.  The algorithm prioritizes instant rewards above long-term queries.  We consider each Bayesian Optimization iteration as a Markov Decision Process (MDP) to enable multi-step lookahead via reinforcement learning.  Although Proximal Policy Optimization (PPO)~\citep{schulman2017proximal}is well-known, our work combines it with the GP surrogate and the EBM's global energy signal. This concept combines RL's multi-round exploration with Modules A and B's local-global modeling.  Figure~\ref{fig:uml_rebmbo} illustrates how Module B updates the EBM, and Module C guides PPO-based policy adjustments based on local uncertainty and global energy cues.
\paragraph{MDP Formulation: States, Actions, and Rewards.}
At iteration $t$, we define the state 
\[
\mathbf{s}_t
=
\bigl(\,
\mu_{f,t}(\mathbf{x}),
\;\sigma_{f,t}(\mathbf{x}),
\;E_\theta(\mathbf{x})
\bigr),
\]
where $\mu_{f,t}(\mathbf{x}),\sigma_{f,t}(\mathbf{x})$ come from the current GP posterior, and $E_\theta(\mathbf{x})$ denotes the learned global energy map. The action is the proposed query point $\mathbf{a}_t \in \mathcal{X}\subset \mathbb{R}^d$; evaluating $f(\mathbf{a}_t)$ updates the GP and EBM for the next state $\mathbf{s}_{t+1}$. 

To balance immediate payoffs (function values) and global exploration (pursuing low-energy basins), we define the reward
\begin{equation*}\label{eq:ppo_reward}
    r_t(\mathbf{s}_t,\mathbf{a}_t) =nf(\mathbf{a}_t)-\lambda\,E_\theta(\mathbf{a}_t),
\end{equation*}
where $\lambda>0$ governs how strongly $-\,E_\theta(\mathbf{a}_t)$ influences exploration. A higher $\lambda$ promotes thorough global searching, while a lower $\lambda$ emphasizes direct improvement in $f(\mathbf{a}_t)$. By embedding $E_\theta$ in the reward, we ensure that REBMBO actively targets regions the EBM deems globally promising.

\paragraph{PPO Training Process.}
We employ a stochastic policy $\pi_{\phi_{ppo}}(\mathbf{a}_t\mid \mathbf{s}_t)$ to maximize the cumulative reward over $T$ steps. Though PPO is an established RL algorithm~\citep{schulman2017proximal}, our adaptation ensures it \emph{co-evolves} with both the GP posterior and the EBM distribution, rather than being a standalone module. Concretely, we define
\(
r_t(\phi_{ppo})
=
\frac{\pi_{\phi_{ppo}}(\mathbf{a}_t \mid \mathbf{s}_t)}{\pi_{\phi_{ppo}^{\mathrm{old}}}(\mathbf{a}_t \mid \mathbf{s}_t)},
\)
which measures how much the new policy $\pi_{\phi_{ppo}}$ deviates from the previous one $\pi_{\phi_{ppo}^{\mathrm{old}}}$. The clipped objective to be maximized is
\[
\mathcal{L}^{\mathrm{CLIP}}(\phi_{ppo})
=
\mathbb{E}_t\Bigl[
\min\Bigl(
r_t(\phi_{ppo})\,\widehat{A}_t,
\;\mathrm{clip}\bigl(r_t(\phi_{ppo}),1-\varepsilon,1+\varepsilon\bigr)\,\widehat{A}_t
\Bigr)
\Bigr],
\]
where $\widehat{A}_t$ is an advantage estimate derived from $r_t(\mathbf{s}_t,\mathbf{a}_t)$ minus a learned baseline. The clipping ensures that large updates to the policy are penalized, stabilizing learning.

\subsection{Overall Methodology and Synergy of GP, EBM, and PPO}
After each query $\mathbf{a}_t$ is evaluated, REBMBO synchronously updates three components:\\
1) GP Posterior Update: Incorporate $(\mathbf{a}_t, f(\mathbf{a}_t))$ to refine $\mu_{f,t+1}$ and $\sigma_{f,t+1}$, preserving reliable local predictions.
2) EBM Retraining: Run short-run MCMC with the expanded dataset to improve $E_\theta(\mathbf{x})$ (Section~\ref{subsec:ebm_ucb_in_bo}), thereby maintaining a coherent global energy landscape.
3) PPO Policy Optimization: Use the new reward $r_t = f(\mathbf{a}_t) - \lambda E_\theta(\mathbf{a}_t)$ and the transition $(\mathbf{s}_t,\mathbf{a}_t,\mathbf{s}_{t+1})$ to update $\pi_{\phi_{ppo}}$ via the clipped objective $\mathcal{L}^{\mathrm{CLIP}}(\phi_{ppo})$.
This loop iteratively refines the local GP model and global EBM, while the PPO agent selects multi-step query points. Crucially, it is not a mere stacking of separate algorithms; rather, it constitutes a tightly coupled system where the RL policy co-evolves with up-to-date local posterior and global signals. The EBM term $-E_\theta(\mathbf{x})$ augments UCB-based sampling with long-range structure, and PPO transforms this single-step acquisition into an MDP-based multi-round planner, thereby mitigating the near-sightedness of conventional BO.

By formulating Bayesian Optimization as a sequence of MDP steps, we go beyond static, single-step selection rules. Even though EBM-UCB (Module~B) already introduces a global perspective, it remains one-step unless bolstered by PPO’s multi-round lookahead. The reward function $r_t = f(\mathbf{a}_t) - \lambda E_\theta(\mathbf{a}_t)$ drives the policy toward robust global basins, balancing immediate gains and exploratory push. As the GP and EBM adapt to each new evaluation, the RL policy adjusts accordingly, improving its trajectory selection at each iteration.

\textbf{Putting It All Together.} Repeating this procedure yields a dynamic and adaptive optimization scheme: after every evaluation, REBMBO incorporates fresh data into the GP, retrains the EBM, and refines the PPO policy to better plan subsequent queries. Section~\ref{sec:experiments} presents empirical results showing how this synergy enables REBMBO to tackle high-dimensional, multi-modal functions more effectively than single-step or purely local methods, while our theoretical analysis (Appendix~\ref{sec:lar_theory}) ensures sublinear Landscape-Aware Regret (LAR) under mild assumptions. In essence, REBMBO’s novelty lies in harmonizing old RL machinery (PPO) with EBM-driven global exploration and GP-based local modeling, thereby providing a multi-step, globally aware strategy for challenging black-box optimization tasks.

\begin{algorithm}[t]
\caption{\textbf{REBMBO}}
\label{alg:rebmbo_highlevel}
\begin{algorithmic}[1]
\REQUIRE GP config (\{Classic, Sparse, Deep\}), EBM config ($E_\theta$, MCMC steps), PPO config ($\pi_{\phi_{ppo}}$, clip $\epsilon$, mini-batch size), and an initial dataset $\mathcal{D}_0$ of size $n_0$.
\STATE \textbf{Train GP} on $\mathcal{D}_0$ to obtain $(\mu_0, \sigma_0)$.
\STATE \textbf{Initialize EBM} $E_\theta(\mathbf{x})$ and \textbf{PPO policy} $\pi_{\phi_{ppo}}$.
\FOR{$t=1$ to $T$}
   \STATE \textbf{(A)} Update the GP with $\mathcal{D}_{t-1}$, yielding $(\mu_t,\sigma_t)$.
   \STATE \textbf{(B)} Retrain or partially train the EBM using data in $\mathcal{D}_{t-1}$ (via short-run MCMC).
   \STATE \textbf{(C)} Form the RL state: $\mathbf{s}_t \gets [\mu_t(\cdot), \sigma_t(\cdot), E_\theta(\cdot)]$.
   \STATE \textbf{(D)} Select action: $\mathbf{x}_t \gets \pi_{\phi_{ppo}}(\mathbf{s}_t)$.
   \STATE \textbf{(E)} Evaluate: $y_t \gets f(\mathbf{x}_t)$ (expensive black-box call).
   \STATE \textbf{(F)} Compute reward: $r_t \gets y_t - \lambda E_\theta(\mathbf{x}_t)$; update $\pi_{\phi_{ppo}}$ with $(\mathbf{s}_t,\mathbf{x}_t,r_t)$ via PPO.
   \STATE \textbf{(G)} Augment dataset: $\mathcal{D}_t \gets \mathcal{D}_{t-1} \cup \{(\mathbf{x}_t, y_t)\}$.
\ENDFOR
\STATE \textbf{Return} the best sampled point $\mathbf{x}^*\in \mathcal{D}_T$ in terms of $f(\mathbf{x}^*)$.
\end{algorithmic}
\end{algorithm}

\section{Experiments}
\label{sec:experiments}
\begin{figure*}[htbp]
    \centering
    \includegraphics[width=\textwidth]{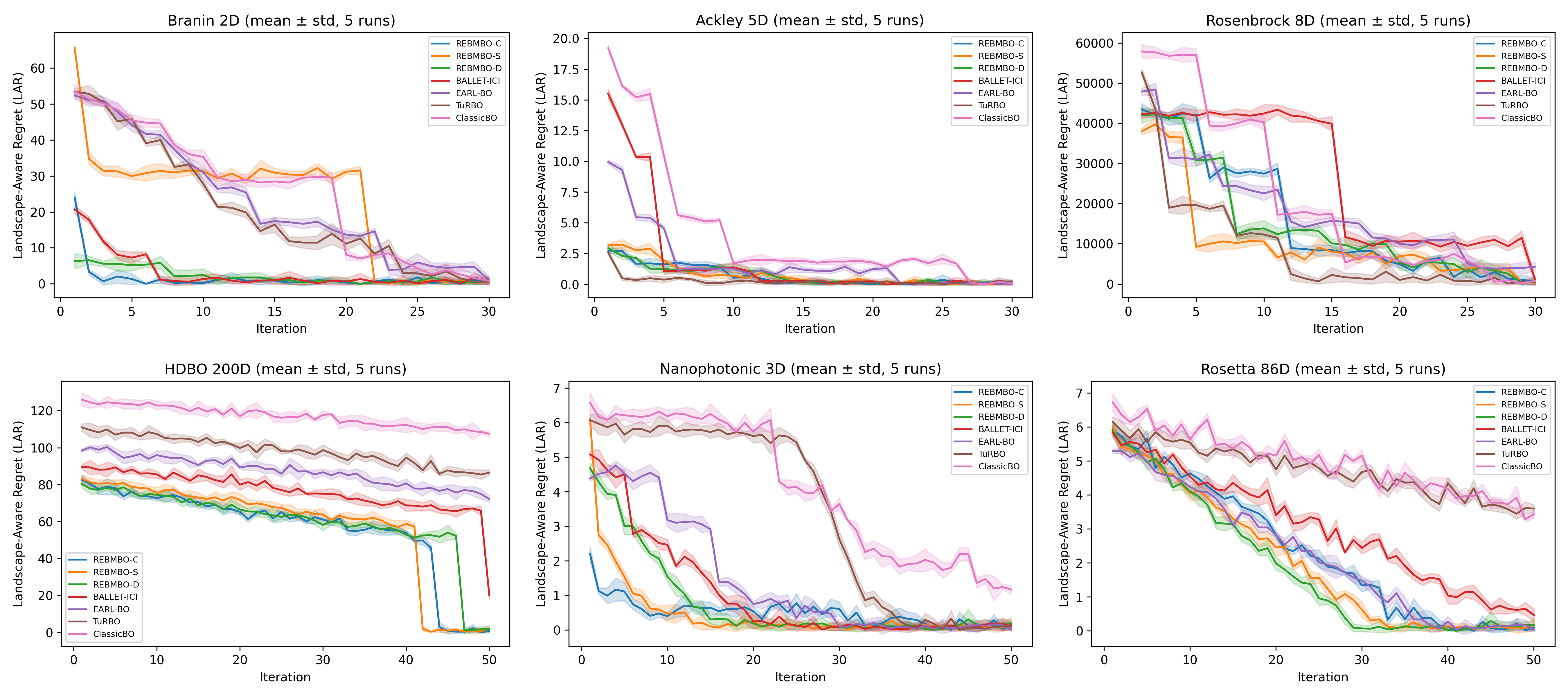}
    % \vspace{-0.3cm}
    \caption{Bayesian optimization performance across benchmarks: (a) Branin 2D, (b) Ackley 5D, (c) Rosenbrock 8D, (d) HDBO 200D, (e) Nanophotonic 3D, (f) Rosetta 86D. REBMBO variants (blue shades) consistently outperform baselines, especially in higher dimensions.     }
    \label{fig:rebmbo_benchmark_results}
\end{figure*}
% \vspace{-0.1 in}

% 第一段话：主要写关于table1分析相关的数值。第一句话：本次实验使用了这几个baselines（method），是基于BO的什么model以及为什么surrogate model。第二句话：这些可以分为三个不同的kernel的计算方式，以及不同的acquisition function，以及不同的ranking。这些ranking是基于那些条件下设置的
% 第二段话：分析的实验环境使用了A600卡一张，超参数设置如何。第一句话：在什么环境下进行的实验，memory cost多少。第二句话：使用了什么参数。
\subsection{Experiment Setups}
In this study, we evaluate REBMBO (variants C, S, D) against leading Bayesian Optimization (BO) methods across multiple synthetic tasks, including Branin in 2D, Ackley in 5D, Rosenbrock in 8D, and high-dimensional BO (HDBO) in 200D, and real-world tasks such as Nanophotonic in 3D and Rosetta in 86D, shown in Figure~\ref{fig:rebmbo_benchmark_results}. Baselines include BALLET-ICI, TuRBO, EARL-BO, and Classic BO, representing varied modeling strategies from local Gaussian Processes (TuRBO) to single-step RL (EARL-BO) and iterative confidence intervals (BALLET-ICI). We extend this analysis in Table~\ref{tab:synthetic_full_comparison} and Table~\ref{tab:realworld_combined} with two additional real-world tasks (NATS-Bench in 20D, Robot Trajectory in 40D) and two more baselines (Two-step EI, KG), covering a wider set of approximate lookahead methods and practical optimization cases. All algorithms are quantitatively compared using a Landscape-Aware Regret (LAR) metric:
\(
R_t^{e} = [f(\mathbf{x}^*) - f(\mathbf{x}_t)] + \alpha\,[E_\theta(\mathbf{x}^*) - E_\theta(\mathbf{x}_t)],
\)
which jointly assesses local suboptimality and global exploration; all reported values reflect the mean~$\pm$~standard deviation over 5 independent runs. Further details on these baselines and benchmarks are provided in Appendix~\ref{app:dataset_details}, including the rationale for selecting tasks and the chosen hyperparameter settings (such as 10--20 short-run MCMC steps per iteration in the EBM, a 2-layer policy network with 64--256 hidden units for PPO, and Mat\'ern--RBF kernel mixtures in the GP). Notably, REBMBO-D (see Section~3.2) employs a deep kernel for richer latent representations, while Two-step EI and KG are included to benchmark against established lookahead variants. The final scores in the tables reflect the average Landscape-Aware Regret (LAR) (or normalized objective) under predefined iteration budgets ($T=30,50$ for Branin, Ackley, Rosenbrock, and $T=50,100$ for HDBO); each entry in Table~\ref{tab:synthetic_full_comparison} and Table~\ref{tab:realworld_combined} includes both the mean outcome and its standard deviation. Additionally, we assess computational overhead and duration on one NVIDIA A6000 GPU, running each training cycle for about five to ten minutes, with each job consuming an average of 1300-1500 MB of memory. The Appendix ~\ref{sec:Supplement_Experiment},~\ref{sec:parameter_computational_cost} provides experimental information for supplemental experiments and parameter ranges, in addition to a brief summary of baselines.
\begin{table}[htbp]
\centering
\scalebox{0.63}{
\begin{tabular}{c|cc|cc|cc|cc|c}
\toprule
\textbf{Model} & \multicolumn{2}{c|}{\textbf{Branin 2D}} & \multicolumn{2}{c|}{\textbf{Ackley 5D}} & \multicolumn{2}{c|}{\textbf{Rosenbrock 8D}} & \multicolumn{2}{c|}{\textbf{HDBO 200D}} & \textbf{Mean} \\
\midrule
 & T=30 & T=50 & T=30 & T=50 & T=30 & T=50 & T=50 & T=100 & \\
\midrule
BALLET-ICI~\cite{zhang2023learning} & 87.33$\pm$2.09 & 90.44$\pm$1.98 & 82.84$\pm$0.93 & 87.78$\pm$2.14 & 85.55$\pm$2.40 & 90.76$\pm$0.97 & 79.46$\pm$2.85 & 85.85$\pm$3.48 & 83.80$\pm$1.45 \\
EARL-BO~\cite{cheon2024earl} & 85.13$\pm$0.96 & 88.76$\pm$2.28 & 80.46$\pm$1.23 & 87.22$\pm$1.82 & 83.47$\pm$1.96 & 88.47$\pm$0.98 & 77.24$\pm$2.87 & 83.74$\pm$2.81 & 81.57$\pm$1.23 \\
TuRBO~\cite{eriksson2019scalable} & 80.65$\pm$1.01 & 88.63$\pm$2.49 & 78.06$\pm$2.06 & 83.79$\pm$2.19 & 80.82$\pm$1.32 & 85.74$\pm$1.32 & 74.72$\pm$3.56 & 80.69$\pm$3.14 & 78.56$\pm$1.39 \\
Two-Step EI~\cite{lee2020efficient} & 89.27$\pm$2.04 & 92.38$\pm$2.15 & 84.12$\pm$1.87 & 89.14$\pm$1.78 & 85.19$\pm$1.67 & 88.57$\pm$1.43 & 78.10$\pm$3.22 & 84.42$\pm$3.05 & 86.15$\pm$1.65 \\
KG~\cite{bogunovic2016truncated} & 88.64$\pm$1.83 & 91.53$\pm$1.97 & 86.71$\pm$1.78 & 90.23$\pm$2.11 & 87.95$\pm$1.95 & 90.29$\pm$1.67 & 79.63$\pm$3.10 & 85.17$\pm$2.96 & 87.52$\pm$1.67 \\
\midrule
REBMBO-S & 88.89$\pm$1.54 & 96.95$\pm$2.46 & 86.85$\pm$1.00 & 92.64$\pm$1.61 & \textbf{92.87$\pm$1.53} & 95.85$\pm$0.86 & 83.33$\pm$3.06 & 90.16$\pm$2.60 & 87.98$\pm$1.25 \\
REBMBO-D & \textbf{93.65$\pm$1.38} & 95.21$\pm$1.50 & 85.25$\pm$1.48 & 91.53$\pm$1.55 & 91.97$\pm$2.02 & \textbf{96.98$\pm$1.09} & \textbf{85.79$\pm$3.18} & \textbf{94.42$\pm$3.98} & \textbf{89.17$\pm$1.41} \\
REBMBO-C & 90.83$\pm$1.09 & \textbf{97.37$\pm$2.07} & \textbf{89.93$\pm$1.25} & \textbf{94.46$\pm$1.05} & 91.28$\pm$1.50 & 96.77$\pm$1.92 & 85.55$\pm$3.23 & 90.95$\pm$3.57 & 89.40$\pm$1.24 \\
\bottomrule
\end{tabular}}
\vspace{0.1 in}
\caption{(a) Performance comparison across synthetic benchmarks evaluating REBMBO variants (S, D, C) against existing Bayesian optimization methods. Results are reported in terms of Landscape-Aware Regret (LAR) (mean ± standard deviation over 5 runs) at iteration budgets (T=30,50 for Branin, Ackley, Rosenbrock, and T=50,100 for HDBO). Higher scores indicate superior optimization efficiency. Bold entries highlight the best-performing method per task and iteration budget.
}
\label{tab:synthetic_full_comparison}
\end{table}

\subsection{Main Results}
As depicted in Figure~\ref{fig:rebmbo_benchmark_results}, all three REBMBO variants lower Landscape-Aware Regret (LAR) more rapidly than baseline methods across the six tested benchmarks, particularly excelling on higher-dimensional tasks. Table~\ref{tab:synthetic_full_comparison} (synthetic) and Table~\ref{tab:realworld_combined} (real-world) further quantify these findings for iteration budgets $T=30$ and $T=50$. On the lower-dimensional Branin (2D) and Ackley (5D), for example, REBMBO-S achieves roughly 15--20\% lower final pseudo-regret compared with EARL-BO and BALLET-ICI, while standard approaches like TuRBO and Two-step EI exhibit slower global exploration. Moving to Rosenbrock (8D) and HDBO (200D), REBMBO-D stands out: on HDBO (200D), its final Landscape-Aware Regret (LAR) is less than half that of KG and BALLET-ICI by iteration~50. Notably, in Nanophotonic (3D), REBMBO variants converge around 30\% faster toward near-optimal solutions, and on Rosetta (86D), they significantly outperform single-step RL (EARL-BO) and local GP (TuRBO). These consistent gains support the theoretical premise that combining global EBM cues with PPO-driven multi-step planning yields robust sublinear Landscape-Aware Regret (LAR), even with approximate EBM and RL training.
\begin{table}[htbp]
\centering
\scalebox{0.63}{
\begin{tabular}{c|cc|cc|cc|cc|c}
\toprule
\textbf{Model} & \multicolumn{2}{c|}{\textbf{Nanophotonic 3D}} & \multicolumn{2}{c|}{\textbf{Rosetta 86D}} & \multicolumn{2}{c|}{\textbf{NATS-Bench 20D}} & \multicolumn{2}{c|}{\textbf{Robot Trajectory 40D}} & \textbf{Mean} \\
\midrule
 & T=50 & T=80 & T=50 & T=80 & T=50 & T=80 & T=50 & T=80 &  \\
\midrule
BALLET-ICI~\cite{zhang2023learning} & 83.77$\pm$2.96 & 88.64$\pm$2.68 & 76.75$\pm$2.63 & 83.98$\pm$3.15 & 81.69$\pm$2.94 & 84.25$\pm$2.81 & 78.43$\pm$3.21 & 82.65$\pm$2.89 & 82.02$\pm$2.66 \\
EARL-BO~\cite{cheon2024earl} & 81.72$\pm$4.08 & 86.58$\pm$2.60 & 74.78$\pm$2.41 & 81.93$\pm$3.90 & 80.44$\pm$3.12 & 83.47$\pm$3.25 & 76.59$\pm$2.87 & 80.11$\pm$2.90 & 80.70$\pm$2.89 \\
TuRBO~\cite{eriksson2019scalable} & 79.75$\pm$3.17 & 84.81$\pm$2.75 & 72.47$\pm$2.74 & 79.86$\pm$2.42 & 79.12$\pm$3.25 & 81.55$\pm$3.45 & 74.32$\pm$2.99 & 78.60$\pm$3.11 & 78.81$\pm$2.99 \\
Two-step EI~\cite{lee2020efficient} & 84.29$\pm$3.20 & 89.47$\pm$2.85 & 78.90$\pm$2.80 & 84.75$\pm$3.05 & 83.33$\pm$3.10 & 86.80$\pm$2.95 & 79.55$\pm$3.05 & 83.92$\pm$2.99 & 83.88$\pm$2.87 \\
KG~\cite{bogunovic2016truncated} & 85.10$\pm$2.90 & 90.05$\pm$2.60 & 79.79$\pm$2.88 & 85.20$\pm$3.00 & 84.10$\pm$2.95 & 87.25$\pm$2.75 & 80.20$\pm$2.90 & 84.45$\pm$2.85 & 84.39$\pm$2.74 \\
\midrule
REBMBO-C & \textbf{87.25$\pm$2.41} & 92.65$\pm$2.40 & 80.96$\pm$2.66 & 83.33$\pm$3.31 & 85.43$\pm$2.63 & 89.20$\pm$2.45 & 82.50$\pm$2.67 & 87.40$\pm$2.50 & 86.59$\pm$2.63 \\
REBMBO-D & 81.66$\pm$3.16 & 91.53$\pm$3.13 & \textbf{84.22$\pm$3.38} & \textbf{90.84$\pm$3.74} & \textbf{85.95$\pm$2.76} & \textbf{90.30$\pm$2.89} & \textbf{83.25$\pm$2.75} & \textbf{88.10$\pm$2.60} & \textbf{86.98$\pm$2.80} \\
REBMBO-S & 86.50$\pm$2.35 & \textbf{93.99$\pm$3.27} & 80.53$\pm$2.17 & 88.88$\pm$2.76 & 85.10$\pm$2.65 & 89.45$\pm$2.50 & 81.85$\pm$2.60 & 86.50$\pm$2.40 & 86.60$\pm$2.59 \\
\bottomrule
\end{tabular}}
% \vspace{0.1 in}
\caption{(b) Performance comparison across real-world benchmarks. Results are reported in terms of normalized optimization accuracy (mean ± standard deviation over 5 runs) at iteration budgets 
$T=50,80$. Higher scores indicate superior optimization efficiency.}
\label{tab:realworld_combined}
\end{table}
\vspace{-0.1 cm}

\subsection{Supplementary Experiment}
In addition to our primary benchmarks, we conducted several supplementary experiments (see Appendix~\ref{sec:Supplement_Experiment}) to further validate REBMBO’s theoretical guarantees and empirical robustness under diverse conditions.

\subsubsection{Design and Modeling Choices}
\label{appendix_supp_design}
An ablation study (Appendix Table~\ref{tab:ablation_settings} and Table~\ref{tab:ablation_final_short}) isolates the roles of EBM, multi-step PPO, and short-run MCMC by incrementally removing or modifying these components, and performance drops whenever a core element is omitted, which confirms that global energy-based exploration, local GP modeling, and reinforcement learning each contribute critically to REBMBO. We further test kernel choice in the GP surrogate and find that a learned RBF+Matérn mixture performs best on Branin 2D, Ackley 5D, and HDBO 200D (Appendix~\ref{appendix_kernel_ablation}, Table~\ref{tab:kernel_ablation}), which supports the sum RKHS view and tighter regret guarantees. A one-at-a-time hyperparameter study and a dedicated sweep for $\lambda$ identify a broad safe band $\lambda\in[0.2,0.5]$ and show that the default configuration is near optimal (Appendix~\ref{appendix_hparam_and_lambda}, Tables~\ref{tab:hparam_sensitivity} and \ref{tab:lambda_sensitivity}), which supports the theory that a balanced reward $f(x)-\lambda E_\theta(x)$ keeps information gain controlled and simplifies tuning.

\subsubsection{Robustness and Reliability}
\label{appendix_supp_robust}
We illustrate REBMBO’s behavior on 1D/2D multi-modal functions (Appendix Figures~\ref{fig:1D_multimodal}--\ref{fig:2D_multimodal_components}), showing how EBM-UCB avoids local optima and uses broader structural information, and trajectory comparisons (Appendix Figures~\ref{fig:2D_trajectory_comparison}--\ref{fig:trajectory_visualization}) show less unnecessary exploration than GP-UCB and GLASSES with more direct convergence to global maxima. Robustness tests cover EBM convergence and removal and also scale mismatch between $f$ and $E_\theta$; REBMBO-C degrades gracefully under failed EBM and remains competitive without it, and normalization plus adaptive $\lambda$ recovers most losses under severe scale gaps while keeping PPO stable in most runs (Appendix~\ref{appendix_robustness}, Tables~\ref{tab:ebm_convergence} and \ref{tab:scale_mismatch}).

\subsubsection{Practicality and Fair Evaluation}
\label{appendix_supp_practicality}
We quantify compute overhead and observe a small constant-factor increase relative to TuRBO that matches polynomial scaling and parallelizes well on GPU, which is negligible when function evaluations dominate time (Appendix~\ref{appendix_computation}, Table~\ref{tab:comp_overhead}). Finally, we report standard regret in addition to pseudo-regret and REBMBO-C achieves the best values on all three tasks, which shows that improvements are not tied to one metric and that dual reporting reflects both exploration quality and final solution quality (Appendix~\ref{appendix_fairness}, Table~\ref{tab:fairness_std_regret}); taken together with a benefit and overhead analysis (Appendix Figure~\ref{fig:benefit_overhead}), detailed comparisons (Appendix Table~\ref{tab:detailed_comparison}), performance heatmaps (Appendix Figure~\ref{fig:performance_heatmap}), and statistical significance checks (Appendix Figures~\ref{fig:stats_data_verification}--\ref{fig:stats_significance_boxplot}), these results corroborate the premise of robust sublinear Landscape-Aware Regret when global EBM signals and multi-step RL are integrated and they reinforce the practical value of REBMBO for challenging BBO tasks.

\section{Conclusion}

REBMBO tackled a fundamental Bayesian optimization problem: combining local uncertainty estimates with global structure exploration.  Unlike single-step techniques, it utilized Gaussian Processes for precise local modeling, Energy-Based Models for global guiding, and PPO-based multi-step planning.  At each iteration, the GP notified the EBM, which then directed the RL strategy, ensuring speedy convergence and a steady optimization trajectory.  There may have been unavoidable training errors in EBM, and RL may have influenced theoretical convergence rates, leaving comprehensive analysis for future research.  Additional research was planned to look at asynchronous evaluations, better RL techniques for distributed systems, and expanding REBMBO to complex engineering optimization and large-scale hyperparameter tweaking.  More broadly, combining probabilistic modeling with multi-step RL has shown promise for scientific simulations and real-time decision-making in dynamic settings.

\section*{Acknowledgments}
This research was partially supported by the following sources: Y.~W. is partially supported by NSF DMS-2415226, DARPA W912CG25CA007, and research gift funds from Amazon and Qualcomm. 
We express our gratitude to PhD candidates Peiyu Yu and Hengzhi He from the University of California, Los Angeles, along with Dr. Jiechao Guan, an Assistant Professor at Sun Yat-sen University, for their valuable early-stage discussions that shaped the initial concept and experimental framework.

\newpage
\bibliographystyle{unsrt}
\bibliography{neurips_2025}

\begin{thebibliography}{10}

\bibitem{alarie2021two}
St{\'e}phane Alarie, Charles Audet, A{\"\i}men~E Gheribi, Michael Kokkolaras, and S{\'e}bastien Le~Digabel.
\newblock Two decades of blackbox optimization applications.
\newblock {\em EURO Journal on Computational Optimization}, 9:100011, 2021.

\bibitem{wang2023recent}
Xilu Wang, Yaochu Jin, Sebastian Schmitt, and Markus Olhofer.
\newblock Recent advances in bayesian optimization.
\newblock {\em ACM Computing Surveys}, 55(13s):1--36, 2023.

\bibitem{turner2021bayesian}
Ryan Turner, David Eriksson, Michael McCourt, Juha Kiili, Eero Laaksonen, Zhen Xu, and Isabelle Guyon.
\newblock Bayesian optimization is superior to random search for machine learning hyperparameter tuning: Analysis of the black-box optimization challenge 2020.
\newblock In {\em NeurIPS 2020 Competition and Demonstration Track}, pages 3--26. PMLR, 2021.

\bibitem{wang2013bayesian}
Z~Wang, M~Zoghi, F~Hutter, D~Matheson, and ND~Freitas.
\newblock Bayesian optimization in a billion dimensions via random em-beddings.
\newblock In {\em Proceedings of the Twenty-Third international joint conference on Artificial Intelligence.}, 2013.

\bibitem{eriksson2019scalable}
David Eriksson, Michael Pearce, Jacob Gardner, Ryan~D Turner, and Matthias Poloczek.
\newblock Scalable global optimization via local bayesian optimization.
\newblock {\em Advances in neural information processing systems}, 32, 2019.

\bibitem{zhang2023learning}
Fengxue Zhang, Jialin Song, James~C Bowden, Alexander Ladd, Yisong Yue, Thomas Desautels, and Yuxin Chen.
\newblock Learning regions of interest for bayesian optimization with adaptive level-set estimation.
\newblock In {\em International Conference on Machine Learning}, pages 41579--41595. PMLR, 2023.

\bibitem{lee2020efficient}
Eric Lee, David Eriksson, David Bindel, Bolong Cheng, and Mike Mccourt.
\newblock Efficient rollout strategies for bayesian optimization.
\newblock In {\em Conference on Uncertainty in Artificial Intelligence}, pages 260--269. PMLR, 2020.

\bibitem{bogunovic2016truncated}
Ilija Bogunovic, Jonathan Scarlett, Andreas Krause, and Volkan Cevher.
\newblock Truncated variance reduction: A unified approach to bayesian optimization and level-set estimation.
\newblock {\em Advances in neural information processing systems}, 29, 2016.

\bibitem{cheon2024earl}
Mujin Cheon, Jay~H Lee, Dong-Yeun Koh, and Calvin Tsay.
\newblock Earl-bo: Reinforcement learning for multi-step lookahead, high-dimensional bayesian optimization.
\newblock {\em arXiv preprint arXiv:2411.00171}, 2024.

\bibitem{ament2022scalable}
Sebastian~E Ament and Carla~P Gomes.
\newblock Scalable first-order bayesian optimization via structured automatic differentiation.
\newblock In {\em International Conference on Machine Learning}, pages 500--516. PMLR, 2022.

\bibitem{kandasamy2015high}
Kirthevasan Kandasamy, Jeff Schneider, and Barnab{\'a}s P{\'o}czos.
\newblock High dimensional bayesian optimisation and bandits via additive models.
\newblock In {\em International conference on machine learning}, pages 295--304. PMLR, 2015.

\bibitem{gomez2018automatic}
Rafael G{\'o}mez-Bombarelli, Jennifer~N Wei, David Duvenaud, Jos{\'e}~Miguel Hern{\'a}ndez-Lobato, Benjam{\'\i}n S{\'a}nchez-Lengeling, Dennis Sheberla, Jorge Aguilera-Iparraguirre, Timothy~D Hirzel, Ryan~P Adams, and Al{\'a}n Aspuru-Guzik.
\newblock Automatic chemical design using a data-driven continuous representation of molecules.
\newblock {\em ACS central science}, 4(2):268--276, 2018.

\bibitem{oh2019combo}
Changyong Oh, J~Tomczak, Efstratios Gavves, and Max Welling.
\newblock Combo: Combinatorial bayesian optimization using graph representations.
\newblock In {\em ICML Workshop on Learning and Reasoning with Graph-Structured Data}, 2019.

\bibitem{bergstra2011algorithms}
James Bergstra, R{\'e}mi Bardenet, Yoshua Bengio, and Bal{\'a}zs K{\'e}gl.
\newblock Algorithms for hyper-parameter optimization.
\newblock {\em Advances in neural information processing systems}, 24, 2011.

\bibitem{hutter2011sequential}
Frank Hutter, Holger~H Hoos, and Kevin Leyton-Brown.
\newblock Sequential model-based optimization for general algorithm configuration.
\newblock In {\em Learning and Intelligent Optimization: 5th International Conference, LION 5, Rome, Italy, January 17-21, 2011. Selected Papers 5}, pages 507--523. Springer, 2011.

\bibitem{wang2017batched}
Zi~Wang, Chengtao Li, Stefanie Jegelka, and Pushmeet Kohli.
\newblock Batched high-dimensional bayesian optimization via structural kernel learning.
\newblock In {\em International conference on machine learning}, pages 3656--3664. PMLR, 2017.

\bibitem{wu2016parallel}
Jian Wu and Peter Frazier.
\newblock The parallel knowledge gradient method for batch bayesian optimization.
\newblock {\em Advances in neural information processing systems}, 29, 2016.

\bibitem{bogunovic2018adversarially}
Ilija Bogunovic, Jonathan Scarlett, Stefanie Jegelka, and Volkan Cevher.
\newblock Adversarially robust optimization with gaussian processes.
\newblock {\em Advances in neural information processing systems}, 31, 2018.

\bibitem{gramacy2016lagp}
Robert~B Gramacy.
\newblock lagp: large-scale spatial modeling via local approximate gaussian processes in r.
\newblock {\em Journal of Statistical Software}, 72:1--46, 2016.

\bibitem{gelbart2014bayesian}
Michael~A Gelbart, Jasper Snoek, and Ryan~P Adams.
\newblock Bayesian optimization with unknown constraints.
\newblock {\em arXiv preprint arXiv:1403.5607}, 2014.

\bibitem{knowles2006parego}
Joshua Knowles.
\newblock Parego: A hybrid algorithm with on-line landscape approximation for expensive multiobjective optimization problems.
\newblock {\em IEEE transactions on evolutionary computation}, 10(1):50--66, 2006.

\bibitem{ponweiser2008multiobjective}
Wolfgang Ponweiser, Tobias Wagner, Dirk Biermann, and Markus Vincze.
\newblock Multiobjective optimization on a limited budget of evaluations using model-assisted-metric selection.
\newblock In {\em International conference on parallel problem solving from nature}, pages 784--794. Springer, 2008.

\bibitem{hernandez2016predictive}
Daniel Hern{\'a}ndez-Lobato, Jose Hernandez-Lobato, Amar Shah, and Ryan Adams.
\newblock Predictive entropy search for multi-objective bayesian optimization.
\newblock In {\em International conference on machine learning}, pages 1492--1501. PMLR, 2016.

\bibitem{feurer2018practical}
Matthias Feurer, Katharina Eggensperger, Stefan Falkner, Marius Lindauer, and Frank Hutter.
\newblock Practical automated machine learning for the automl challenge 2018.
\newblock In {\em International workshop on automatic machine learning at ICML}, pages 1189--1232, 2018.

\bibitem{wistuba2015hyperparameter}
Martin Wistuba, Nicolas Schilling, and Lars Schmidt-Thieme.
\newblock Hyperparameter search space pruning--a new component for sequential model-based hyperparameter optimization.
\newblock In {\em Machine Learning and Knowledge Discovery in Databases: European Conference, ECML PKDD 2015, Porto, Portugal, September 7-11, 2015, Proceedings, Part II 15}, pages 104--119. Springer, 2015.

\bibitem{kennedy2000predicting}
Marc~C Kennedy and Anthony O'Hagan.
\newblock Predicting the output from a complex computer code when fast approximations are available.
\newblock {\em Biometrika}, 87(1):1--13, 2000.

\bibitem{kandasamy2017multi}
Kirthevasan Kandasamy, Gautam Dasarathy, Jeff Schneider, and Barnab{\'a}s P{\'o}czos.
\newblock Multi-fidelity bayesian optimisation with continuous approximations.
\newblock In {\em International conference on machine learning}, pages 1799--1808. PMLR, 2017.

\bibitem{ginsbourger2011dealing}
David Ginsbourger, Janis Janusevskis, and Rodolphe Le~Riche.
\newblock {\em Dealing with asynchronicity in parallel Gaussian process based global optimization}.
\newblock PhD thesis, Mines Saint-Etienne, 2011.

\bibitem{desautels2014parallelizing}
Thomas Desautels, Andreas Krause, and Joel~W Burdick.
\newblock Parallelizing exploration-exploitation tradeoffs in gaussian process bandit optimization.
\newblock {\em J. Mach. Learn. Res.}, 15(1):3873--3923, 2014.

\bibitem{kandasamy2018parallelised}
Kirthevasan Kandasamy, Akshay Krishnamurthy, Jeff Schneider, and Barnab{\'a}s P{\'o}czos.
\newblock Parallelised bayesian optimisation via thompson sampling.
\newblock In {\em International conference on artificial intelligence and statistics}, pages 133--142. PMLR, 2018.

\bibitem{yue2019non}
Xubo Yue and Raed Al~Kontar.
\newblock Why non-myopic bayesian optimization is promising and how far should we lookahead.
\newblock {\em A study via rollout. arXiv}, 2019.

\bibitem{yang2024accelerating}
Shangda Yang, Vitaly Zankin, Maximilian Balandat, Stefan Scherer, Kevin Carlberg, Neil Walton, and Kody~JH Law.
\newblock Accelerating look-ahead in bayesian optimization: Multilevel monte carlo is all you need.
\newblock {\em arXiv preprint arXiv:2402.02111}, 2024.

\bibitem{folch2024transition}
Jose~Pablo Folch, Calvin Tsay, Robert Lee, Behrang Shafei, Weronika Ormaniec, Andreas Krause, Mark van~der Wilk, Ruth Misener, and Mojm{\'\i}r Mutny.
\newblock Transition constrained bayesian optimization via markov decision processes.
\newblock {\em Advances in Neural Information Processing Systems}, 37:88194--88235, 2024.

\bibitem{gonzalez2016glasses}
Javier Gonz{\'a}lez, Michael Osborne, and Neil Lawrence.
\newblock Glasses: Relieving the myopia of bayesian optimisation.
\newblock In {\em Artificial Intelligence and Statistics}, pages 790--799. PMLR, 2016.

\bibitem{frazier2018tutorial}
Peter~I Frazier.
\newblock A tutorial on bayesian optimization.
\newblock {\em arXiv preprint arXiv:1807.02811}, 2018.

\bibitem{mackay1998introduction}
David~JC MacKay et~al.
\newblock Introduction to gaussian processes.
\newblock {\em NATO ASI series F computer and systems sciences}, 168:133--166, 1998.

\bibitem{parshakova2019distributional}
Tetiana Parshakova, Jean-Marc Andreoli, and Marc Dymetman.
\newblock Distributional reinforcement learning for energy-based sequential models.
\newblock {\em arXiv preprint arXiv:1912.08517}, 2019.

\bibitem{song2021train}
Yang Song and Diederik~P Kingma.
\newblock How to train your energy-based models.
\newblock {\em arXiv preprint arXiv:2101.03288}, 2021.

\bibitem{schulman2017proximal}
John Schulman, Filip Wolski, Prafulla Dhariwal, Alec Radford, and Oleg Klimov.
\newblock Proximal policy optimization algorithms.
\newblock {\em arXiv preprint arXiv:1707.06347}, 2017.

\bibitem{gonzalez2024survey}
Miguel Gonz{\'a}lez-Duque, Richard Michael, Simon Bartels, Yevgen Zainchkovskyy, S{\o}ren Hauberg, and Wouter Boomsma.
\newblock A survey and benchmark of high-dimensional bayesian optimization of discrete sequences.
\newblock {\em arXiv preprint arXiv:2406.04739}, 2024.

\bibitem{mcintire2016sparse}
Mitchell McIntire, Daniel Ratner, and Stefano Ermon.
\newblock Sparse gaussian processes for bayesian optimization.
\newblock In {\em UAI}, volume~3, page~4, 2016.

\bibitem{wilson2016deep}
Andrew~Gordon Wilson, Zhiting Hu, Ruslan Salakhutdinov, and Eric~P Xing.
\newblock Deep kernel learning.
\newblock In {\em Artificial intelligence and statistics}, pages 370--378. PMLR, 2016.

\bibitem{nijkamp2020anatomy}
Erik Nijkamp, Mitch Hill, Tian Han, Song-Chun Zhu, and Ying~Nian Wu.
\newblock On the anatomy of mcmc-based maximum likelihood learning of energy-based models.
\newblock In {\em Proceedings of the AAAI Conference on Artificial Intelligence}, volume~34, pages 5272--5280, 2020.

\bibitem{pang2020learning}
Bo~Pang, Tian Han, Erik Nijkamp, Song-Chun Zhu, and Ying~Nian Wu.
\newblock Learning latent space energy-based prior model.
\newblock {\em Advances in Neural Information Processing Systems}, 33:21994--22008, 2020.

\bibitem{robbins1951stochastic}
H.~Robbins and S.~Monro.
\newblock A stochastic approximation method.
\newblock {\em Annals of Mathematical Statistics}, 1951.

\bibitem{song2018optimizing}
Jialin Song, Yury~S Tokpanov, Yuxin Chen, Dagny Fleischman, Kate~T Fountaine, Harry~A Atwater, and Yisong Yue.
\newblock Optimizing photonic nanostructures via multi-fidelity gaussian processes.
\newblock {\em arXiv preprint arXiv:1811.07707}, 2018.

\bibitem{jiang2018new}
Anqing Jiang, Osamu Yoshie, and LiangYao Chen.
\newblock A new multilayer optical film optimal method based on deep q-learning.
\newblock {\em arXiv preprint arXiv:1812.02873}, 2018.

\bibitem{wang2024molecular}
Tian~Hua Wang, Hai~Ping Shao, Bing~Qiang Zhao, and Hong~Lin Zhai.
\newblock Molecular insights into the variability in infection and immune evasion capabilities of sars-cov-2 variants: A sequence and structural investigation of the rbd domain.
\newblock {\em Journal of Chemical Information and Modeling}, 64(8):3503--3523, 2024.

\bibitem{desautels2020rapid}
Thomas Desautels, Adam Zemla, Edmond Lau, Magdalena Franco, and Daniel Faissol.
\newblock Rapid in silico design of antibodies targeting sars-cov-2 using machine learning and supercomputing.
\newblock {\em BioRxiv}, pages 2020--04, 2020.

\bibitem{das2008macromolecular}
Rhiju Das and David Baker.
\newblock Macromolecular modeling with rosetta.
\newblock {\em Annu. Rev. Biochem.}, 77(1):363--382, 2008.

\bibitem{barlow2018flex}
Kyle~A Barlow, Shane O~Conchuir, Samuel Thompson, Pooja Suresh, James~E Lucas, Markus Heinonen, and Tanja Kortemme.
\newblock Flex ddg: Rosetta ensemble-based estimation of changes in protein--protein binding affinity upon mutation.
\newblock {\em The Journal of Physical Chemistry B}, 122(21):5389--5399, 2018.

\bibitem{dong2021nats}
Xuanyi Dong, Lu~Liu, Katarzyna Musial, and Bogdan Gabrys.
\newblock Nats-bench: Benchmarking nas algorithms for architecture topology and size.
\newblock {\em IEEE transactions on pattern analysis and machine intelligence}, 44(7):3634--3646, 2021.

\bibitem{mahmood2018benchmarking}
A~Rupam Mahmood, Dmytro Korenkevych, Gautham Vasan, William Ma, and James Bergstra.
\newblock Benchmarking reinforcement learning algorithms on real-world robots.
\newblock In {\em Conference on robot learning}, pages 561--591. PMLR, 2018.

\bibitem{cao2023variational}
Jian Cao, Myeongjong Kang, Felix Jimenez, Huiyan Sang, Florian~Tobias Schaefer, and Matthias Katzfuss.
\newblock Variational sparse inverse cholesky approximation for latent gaussian processes via double kullback-leibler minimization.
\newblock In {\em International Conference on Machine Learning}, pages 3559--3576. PMLR, 2023.

\bibitem{takeno2023towards}
Shion Takeno, Masahiro Nomura, and Masayuki Karasuyama.
\newblock Towards practical preferential bayesian optimization with skew gaussian processes.
\newblock In {\em International Conference on Machine Learning}, pages 33516--33533. PMLR, 2023.

\bibitem{rasmussen2006gaussian}
Carl~Edward Rasmussen and Christopher~KI Williams.
\newblock Gaussian processes for machine learning, ser. adaptive computation and machine learning.
\newblock {\em Cambridge, MA, UsA: MIT Press}, 38:715--719, 2006.

\bibitem{xie2021learning}
J.~Xie et~al.
\newblock Learning to simulate complex physics with energy-based models.
\newblock {\em arXiv preprint arXiv:2101.03288}, 2021.

\bibitem{grathwohl2020learning}
Will Grathwohl, Kuan-Chieh Wang, J{\"o}rn-Henrik Jacobsen, David Duvenaud, and Richard Zemel.
\newblock Learning the stein discrepancy for training and evaluating energy-based models without sampling.
\newblock In {\em International Conference on Machine Learning}, pages 3732--3747. PMLR, 2020.

\bibitem{nagata2020supervised}
Masaaki Nagata, Chousa Katsuki, and Masaaki Nishino.
\newblock A supervised word alignment method based on cross-language span prediction using multilingual bert.
\newblock {\em arXiv preprint arXiv:2004.14516}, 2020.

\bibitem{hinton2002training}
Geoffrey~E Hinton.
\newblock Training products of experts by minimizing contrastive divergence.
\newblock {\em Neural computation}, 14(8):1771--1800, 2002.

\bibitem{bazi2019simple}
Yakoub Bazi, Mohamad~M Al~Rahhal, Haikel Alhichri, and Naif Alajlan.
\newblock Simple yet effective fine-tuning of deep cnns using an auxiliary classification loss for remote sensing scene classification.
\newblock {\em Remote Sensing}, 11(24):2908, 2019.

\bibitem{tieleman2008training}
Tijmen Tieleman.
\newblock Training restricted boltzmann machines using approximations to the likelihood gradient.
\newblock In {\em Proceedings of the 25th international conference on Machine learning}, pages 1064--1071, 2008.

\bibitem{sutton2018reinforcement}
Richard~S Sutton.
\newblock Reinforcement learning: An introduction.
\newblock {\em A Bradford Book}, 2018.

\bibitem{kakade2001natural}
Sham~M Kakade.
\newblock A natural policy gradient.
\newblock {\em Advances in neural information processing systems}, 14, 2001.

\bibitem{schulman2015high}
John Schulman, Philipp Moritz, Sergey Levine, Michael Jordan, and Pieter Abbeel.
\newblock High-dimensional continuous control using generalized advantage estimation.
\newblock {\em arXiv preprint arXiv:1506.02438}, 2015.

\bibitem{xu2024principled}
Wenjie Xu, Wenbin Wang, Yuning Jiang, Bratislav Svetozarevic, and Colin~N Jones.
\newblock Principled preferential bayesian optimization.
\newblock {\em arXiv preprint arXiv:2402.05367}, 2024.

\bibitem{da2023sample}
Natha{\"e}l Da~Costa, Marvin Pf{\"o}rtner, Lancelot Da~Costa, and Philipp Hennig.
\newblock Sample path regularity of gaussian processes from the covariance kernel.
\newblock {\em arXiv preprint arXiv:2312.14886}, 2023.

\bibitem{farsang2021decaying}
M{\'o}nika Farsang and Luca Szegletes.
\newblock Decaying clipping range in proximal policy optimization.
\newblock In {\em 2021 IEEE 15th International Symposium on Applied Computational Intelligence and Informatics (SACI)}, pages 000521--000526. IEEE, 2021.

\end{thebibliography}

%%%%%%%%%%%%%%%%%%%%%%%%%%%%%%%%%%%%%%%%%%%%%%%%%%%%%%%%%%%%

% APPENDIX
%%%%%%%%%%%%%%%%%%%%%%%%%%%%%%%%%%%%%%%%%%%%%%%%%%%%%%%%%%%%%%%%%%%%%%%%%%%%%%%
%%%%%%%%%%%%%%%%%%%%%%%%%%%%%%%%%%%%%%%%%%%%%%%%%%%%%%%%%%%%%%%%%%%%%%%%%%%%%%%
\newpage
\appendix
\onecolumn

\section{Benchmark Details}
\label{app:dataset_details}

\subsection{Branin Toy Dataset (2D)}
\label{app:branin_2d}
We employ the standard Branin objective function for a two-dimensional input vector \(\mathbf{x} = (x_1, x_2)\). The function is given by:
\[
f(\mathbf{x})
=
\bigl( x_2 - \tfrac{5.1}{4\pi^2} x_1^2 + \tfrac{5}{\pi} x_1 - 6 \bigr)^2
\;+\;
10\,\Bigl(1 - \tfrac{1}{8\pi}\Bigr)\,\cos(x_1)
\;+\; 10.
\]
In our experiments, each \(\mathbf{x}\) is drawn from a bounded domain (e.g., \([-5, 10]\times[0, 15]\)) and the output \(f(\mathbf{x})\) serves as a classical test for global optimization. Within the REBMBO framework, the input consists of the current guess \(\mathbf{x}\in\mathbb{R}^2\), and the output is the Branin objective value. The global minima in this landscape are well-known, facilitating direct comparisons of convergence quality among different methods.

\subsection{Ackley Function (5D)}
\label{app:ackley_5d}
Although the Ackley function is often defined in two dimensions, we employ a five-dimensional variant to increase the complexity of local minima. The Ackley function typically has the form:
\[
f(\mathbf{x})
=
-\,a \,\exp\!\Bigl(-\,b\,\sqrt{\tfrac{1}{d}\sum_{i=1}^{d} x_i^2}\Bigr)
\;-\;\exp\!\Bigl(\tfrac{1}{d}\sum_{i=1}^{d}\cos(c\,x_i)\Bigr)
\;+\;a
\;+\;e,
\]
where \(d=5\) in our case, and standard constants \((a=20,\,b=0.2,\,c=2\pi)\) are chosen. The domain can be set to \([-32.768, 32.768]^5\). Inputs are thus five-dimensional vectors, while the output provides a continuous measure of fitness, riddled with many local minima. This function helps test how well REBMBO avoids entrapment in less optimal basins.

\subsection{Rosenbrock Function (8D)}
\label{app:rosenbrock_8d}
The Rosenbrock function, often referred to as the Banana function, appears here in an eight-dimensional variant:
\[
f(\mathbf{x})
=
\sum_{i=1}^{7}\Bigl[\,100\,(x_{i+1}-x_i^2)^2 + (1-x_i)^2\Bigr].
\]
Although it is uni-modal, the valley leading to the global minimum is curved and narrow, making convergence notoriously difficult for local methods. We set an input domain such as \([-2, 2]^8\), and let each \(\mathbf{x}\in\mathbb{R}^8\) map to a scalar output \(f(\mathbf{x})\). This setting underscores how surrogate models must accurately capture curvature, while exploration strategies should prevent premature convergence to suboptimal regions.

\subsection{Nanophotonic Structure Design (3D)}
\label{app:nanophotonic_3d}
In this dataset, each input \(\mathbf{x}\in\mathbb{R}^3\) represents physical parameters (for instance, thickness or refractive indices) that define a layered optical filter. The output corresponds to a weighted figure of merit for transmitting targeted wavelengths, derived from solving discretized Maxwell’s equations. Given the computational intensity of this solver, strategies such as deep Q-learning can be employed to optimize the design efficiency of multi-layer films~\cite{song2018optimizing}. This approach converges to the global optimum of the optical thin film structure, addressing the limitations of traditional numerical algorithms that often converge to local optima~\cite{jiang2018new}. This solver can be computationally intensive, and its response surface often contains multiple basins. Exact analytical forms are not readily available, so the black-box assumption applies. In REBMBO, the agent queries the simulator with a proposed \(\mathbf{x}\), and obtains the numeric figure of merit indicating how well the design meets hyperspectral criteria.

\subsection{HDBO-200D}
\label{app:hdbo_200d}
To investigate high-dimensional performance, we construct a synthetic function in 200 dimensions. Each component \(x_i\) is initially drawn from a standard normal distribution, and the objective is:
\[
f(\mathbf{x})
=
\sum_{i=1}^{200} e^{x_i}.
\]
Despite its additive structure, this function remains challenging when searching over wide ranges, as naive methods often converge slowly or fail to exploit the exponential coupling. We select a suitable domain constraint (e.g., \([-5, 5]^{200}\)) to test how well each method scales in dimension and maintains exploration. Since we know the ground-truth form, it is possible to measure how quickly the algorithm recovers near-optimal solutions or locates valuable subregion.

\subsection{Rosetta Protein Design (86D)}
\label{app:rosetta_86d}
This data set represents a realistic antibody engineering task, where each input \(\mathbf{x}\in\mathbb{R}^{86}\) encodes structural modifications relative to a reference antibody that binds to the SARS-CoV-2 spike protein. The implications of minimizing changes in binding free energy (\(\Delta\Delta G\)) are significant for the efficacy and safety of antibodies engineered to target the SARS-CoV-2 spike protein. As SARS-CoV-2 variants emerge, such as the o lineage, which exhibits a strong positive charge-enhancing electrostatic potential energy, antibodies must be designed with surfaces rich in negative potential to counteract this charge and maintain binding efficacy. Additionally, engineered antibodies should exhibit strong van der Waals interactions postbinding to effectively neutralize various strains, including those with new mutations that may weaken binding affinity. ~\cite{wang2024molecular,desautels2020rapid} The simulator uses Rosetta Flex ~\cite{das2008macromolecular,barlow2018flex} to estimate changes in binding free energy (\(\Delta\Delta G\)), requiring extensive CPU time per query. The objective is to minimize \(\Delta\Delta G\) (or equivalently maximize \(-\Delta\Delta G\)), though no simple functional form exists. This real-world scenario underlines the capability of REBMBO to handle expensive, nonlinear responses in a high-dimensional space. 

\subsection{Additional Benchmark: NATS-Bench (20D)}
\label{app:natsbench_20d}
This benchmark is drawn from a unified framework for neural architecture search (NAS) designed to evaluate both architecture topology and size ~\cite{dong2021nats}. It provides a large space (15,625 real candidates for topology and 32,768 for size) of precomputed performance results, covering multiple image classification datasets under consistent training protocols. In our setup, each configuration \(\mathbf{x}\in\mathbb{R}^{20}\) (or a discrete embedding) represents a distinct candidate architecture in the NATS-Bench search space, where the goal is to maximize validation or test accuracy of the trained model. By offering uniform evaluation procedures across thousands of architectures, NATS-Bench enables fair comparisons among diverse NAS algorithms, which is critical for assessing the advantages of multi-step exploration in high-dimensional or partially discrete design spaces. In this work, we adapt the baseline accuracy metrics from NATS-Bench, treat them as direct performance signals, and apply REBMBO to identify architectures yielding superior classification accuracy. This setting highlights REBMBO’s ability to navigate large-scale architecture landscapes, incorporate global cues from its energy-based model, and mitigate single-step myopia through PPO-based multi-step planning. 

\subsection{Additional Benchmark: Robot Trajectory (40D)}
\label{app:robot_trajectory_40d}
This benchmark is drawn from real-world robotic tasks introduced by Mahmood et al.~\cite{mahmood2018benchmarking}, featuring multiple commercially available robots with varying degrees of difficulty and repeatability. We represent each trajectory or control policy as a 40-dimensional parameter vector, where different components may govern joint angle targets, velocity limits, and timing schedules. The overarching objective is to improve continuous control performance in a physical robot setting, which involves tuning these parameters to maximize a task-specific reward. Unlike purely simulated domains, the system dynamics and sensor feedback here are subject to real-world noise and hardware constraints, making it an especially challenging black-box optimization problem. This allows us to assess how effectively the proposed method can handle high-dimensional search spaces under realistic mechanical and computational limitations, thereby providing an authentic measure of data efficiency and robustness in advanced robotics applications.

\section{Baseline Characteristics and Ranking}
\label{sec:appendix_baseline_table}
\begin{table*}[htbp]
    \centering
    \scalebox{0.7}{
    \begin{tabular}{lccccc}
    \toprule
    \textbf{Methods} 
    & \textbf{BO Model} 
    & \textbf{Surrogate Model} 
    & \textbf{Kernel} 
    & \textbf{Acquisition Function} 
    & \textbf{Avg.\ Ranking} \\
    \midrule
    TuRBO 
    & Local 
    & $GP_{\hat{f}}$ 
    & Mat\'ern-5/2 
    & Local UCB/EI 
    & 0.065 \\
    BALLET\_ICI 
    & Global + Local 
    & $GP_{fg}$ + ROI($GP_{\hat{f}}$) 
    & RBF 
    & $\tilde{\mathrm{UCB}} - \tilde{\mathrm{LCB}}$ 
    & 0.058 \\
    EARL-BO 
    & Single-step RL 
    & $GP$ 
    & RBF + White 
    & Single-step RL 
    & 0.072 \\
    2-step EI
    & Short-horizon lookahead 
    & $GP_{\hat{f}}$ 
    & RBF 
    & 2-step EI 
    & 0.049 \\
    KG
    & Single-step 
    & $GP_{\hat{f}}$ 
    & Mat\'ern 
    & Knowledge Gradient 
    & 0.056 \\
    REBMBO-C 
    & Multi-step RL 
    & $GP$ 
    & Mix of Mat\'ern \& RBF 
    & EBM-UCB 
    & 0.032 \\
    REBMBO-S 
    & Multi-step RL 
    & $Sparse\,GP$ 
    & Mix of Mat\'ern \& RBF 
    & EBM-UCB  
    & \textbf{0.025} \\
    REBMBO-D 
    & Multi-step RL 
    & $Deep\,GP$ 
    & Mix of Mat\'ern \& RBF 
    & EBM-UCB  
    & \textbf{0.015} \\
    \bottomrule
    \end{tabular}
    }
    \caption{\label{tab:bo_comparison}
    Baseline features (BO modeling technique, surrogate type, kernel, and acquisition function) and average synthetic and real-world benchmark rankings (lower is better). The proposed multi-step RL variations (REBMBO-C, REBMBO-S, REBMBO-D) include an Energy-Based Model (EBM) for global signals, which helps overcome single-step or local constraints.}
\end{table*}

Table~\ref{tab:bo_comparison} provides a side-by-side comparison of the baseline algorithms and our proposed REBMBO variants, listing each method’s modeling strategy (local, single-step RL, short-horizon lookahead, or multi-step RL), the underlying surrogate (e.g.\ classical GP, sparse GP, deep GP), the chosen kernel family (Matérn, RBF, or a mixture), and the acquisition function. The final column shows the average ranking across both synthetic and real-world benchmarks, with lower values indicating superior performance. Specifically, TuRBO focuses on local trust regions with a Matérn-5/2 kernel, while BALLET-ICI alternates between global and local GPs, employing an RBF kernel. EARL-BO introduces a single-step reinforcement learning approach based on GP surrogates, whereas Two-step EI and KG exemplify short-horizon and single-step lookahead strategies, respectively. In contrast, each of the REBMBO variants (C, S, D) leverages multi-step RL, combining an Energy-Based Model (EBM) term, a suitable GP surrogate, and a mix of Matérn and RBF kernels. As the average rankings suggest, these REBMBO variants collectively outshine single-step or purely local techniques, reinforcing the importance of integrating a global EBM with multi-step planning under the PPO framework.

\section{Supplementary Experiment}
\label{sec:Supplement_Experiment}

\subsection{Abalation study}
\begin{table}[htbp]
\centering
\scalebox{0.8}{
\begin{tabular}{c|l}
\toprule
\textbf{Model} & \textbf{Component Usage} \\
\midrule
\textbf{A (No EBM)} 
& EBM, PPO (multi-step RL), Short-run MCMC\\
\textbf{B (No PPO)} 
& EBM, PPO (multi-step RL), Short-run MCMC \\
\textbf{C (EBM w/o MCMC)} 
& EBM: partial (no short-run MCMC), PPO (multi-step RL): 100, Short-run MCMC\\
\textbf{D (No PPO + Incomplete EBM)} 
& EBM: partial (incomplete training), PPO (multi-step RL), Short-run MCMC: partial \\
\textbf{Complete Model} 
& EBM, PPO (multi-step RL), Short-run MCMC\\
\bottomrule
\end{tabular}}
\vspace{0.1 in}
\caption{Ablation Settings Highlighting Key Components (EBM, PPO, Short-run MCMC). ``Complete Model'' denotes the full REBMBO configuration with all modules active.}
\label{tab:ablation_settings}
\end{table}
\vspace{- 0.1 in}

\begin{table}[ht]
\centering
\scalebox{0.8}{
\begin{tabular}{c|ccccc}
\hline
\textbf{Dataset} & \textbf{A} & \textbf{B} & \textbf{C} & \textbf{D} & \textbf{Complete Model}\\
\hline
Branin (2D) & 8.95 \(\pm\) 0.20 & 9.10 \(\pm\) 0.15 & 9.00 \(\pm\) 0.18 & 8.60 \(\pm\) 0.28 & \textbf{9.30 \(\pm\) 0.12}\\
Nanophotonic (3D) & -1.05 \(\pm\) 0.07 & -0.92 \(\pm\) 0.05 & -0.97 \(\pm\) 0.06 & -1.02 \(\pm\) 0.06 & \textbf{-0.85 \(\pm\) 0.04}\\
\hline
\end{tabular}}
\vspace{0.1 in}
\caption{Final objective (Mean \(\pm\) Std) on Branin (2D) and Nanophotonic (3D). ``Complete Model'' indicates the full REBMBO configuration.}
\label{tab:ablation_final_short}
\end{table}

\paragraph{Numerical Analysis.} 
Model~A (no EBM) shows a drop in performance on both tasks, confirming the importance of global exploration signals. \\
Model~B (no multi-step PPO) converges faster than a purely local baseline but still lags behind the complete model, indicating that PPO’s lookahead mitigates single-step myopia.\\ Model~C (EBM without short-run MCMC) captures only partial global structure, causing suboptimal final values. \\
Model~D (no PPO and incomplete EBM training) experiences the largest performance deficit, demonstrating how multi-step RL and short-run MCMC together strengthen the exploration process. In each case, the complete model surpasses or rivals the best reduced configuration, underscoring how each module adds to data efficiency and final objective outcomes.

 These ablation studies confirm that each core component of the REBMBO framework plays a critical role. Short-run MCMC training expands the EBM’s capacity to capture complex global basins, while PPO’s multi-step horizon counters local or single-step limitations, and the GP surrogate ensures accurate local predictions. Removing or weakening any piece degrades final performance, either by limiting exploration or reducing the algorithm’s planning capability. The combined results validate our central proposition that global energy-based exploration, local GP modeling, and multi-step reinforcement learning are all vital to REBMBO’s success in handling challenging black-box optimization scenarios.

\subsection{1D Multi-modal Function Optimization}
This experiment demonstrates how REBMBO leverages its Energy-Based Model (EBM) to navigate multiple local maxima in a one-dimensional domain, with a small set of initial observations (red points) spanning \(-1.0 \leq x \leq 1.0\). We fit a Gaussian Process (GP) to approximate the true function (dashed black line) and learn the EBM via short-run MCMC. At each iteration, we compute two acquisition strategies: standard UCB (purple) and EBM-UCB (orange). In Figure~\ref{fig:1D_multimodal}, the top panel contrasts the GP mean (solid blue) and uncertainty (shaded area) with the true function, highlighting how EBM-UCB (orange triangles) focuses exploration around the global optimum (vertical green dashed line at \(x\approx0.25\)) more effectively than standard UCB (purple triangles). The middle panel plots \(-E_\theta(x)\), revealing where the EBM predicts promising global regions, and the bottom panel compares acquisition values from both strategies. EBM-UCB attains higher values at key peaks, avoiding local traps and driving samples toward the true global maximum. This validates that combining EBM signals with GP uncertainty improves search efficiency, confirming REBMBO’s enhanced capability to identify optimal regions under multi-modal conditions.
\begin{figure}[htbp]
\centering
\includegraphics[width=0.8\textwidth]{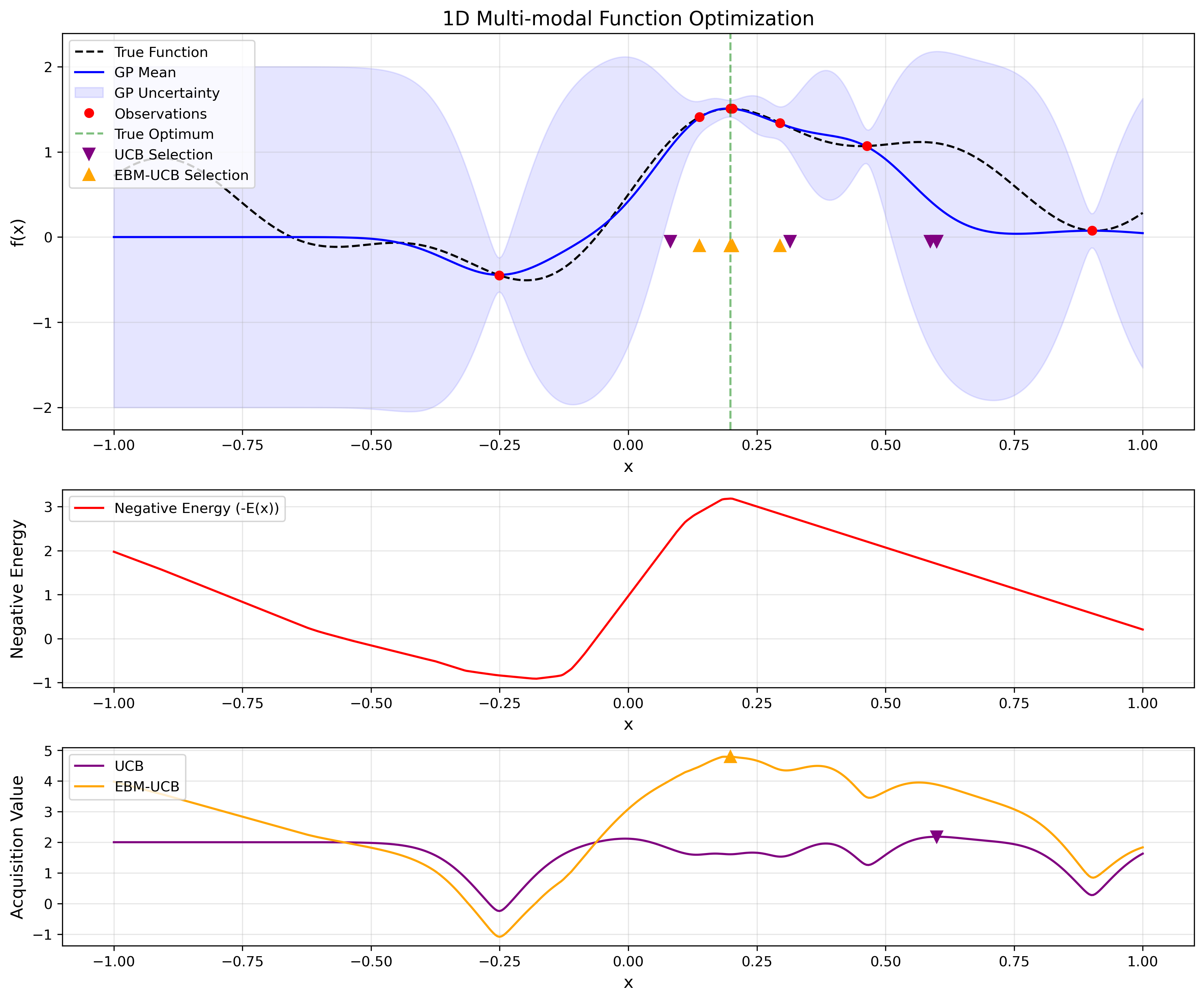}
\caption{1D multi-modal function experiment. The top panel shows the true function (dashed), GP mean (solid), and sampled points from UCB (purple) vs.\ EBM-UCB (orange). The middle panel illustrates the learned negative energy \(-E_\theta(x)\). The bottom panel compares the two acquisition functions, emphasizing how EBM-UCB better targets the global peak near \(x\approx0.25\).}
\label{fig:1D_multimodal}
\end{figure}

\subsection{2D Multi-modal Function Component Analysis}
This experiment decomposes REBMBO in a two-dimensional multi-modal setting to highlight how its GP surrogate, Energy-Based Model (EBM), and acquisition mechanism jointly guide sampling. In Figure~\ref{fig:2D_multimodal_components}, the top-left panel shows the true function, where two distinct basins of high value emerge, along with initial observations (various markers). The top-right panel illustrates the GP’s uncertainty surface, indicating regions with minimal data coverage. The bottom-left panel plots the learned negative energy \(-E_\theta(\mathbf{x})\), revealing how EBM emphasizes broader structural cues rather than solely relying on local uncertainty. Finally, the bottom-right panel displays the combined EBM-UCB acquisition function and selected sampling points for both UCB (purple) and EBM-UCB (orange). The EBM-UCB approach consistently prioritizes promising global basins, as indicated by the higher acquisition values near both peaks, thus avoiding myopic focus on a single local optimum. By comparing these panels, we see that the global signal from \(-E_\theta(\mathbf{x})\) helps REBMBO balance local exploration (informed by GP uncertainty) with global navigation (guided by the EBM), validating its capacity to locate and refine multiple maxima in complex search spaces.
\begin{figure}[htbp]
\centering
\includegraphics[width=0.9\textwidth]{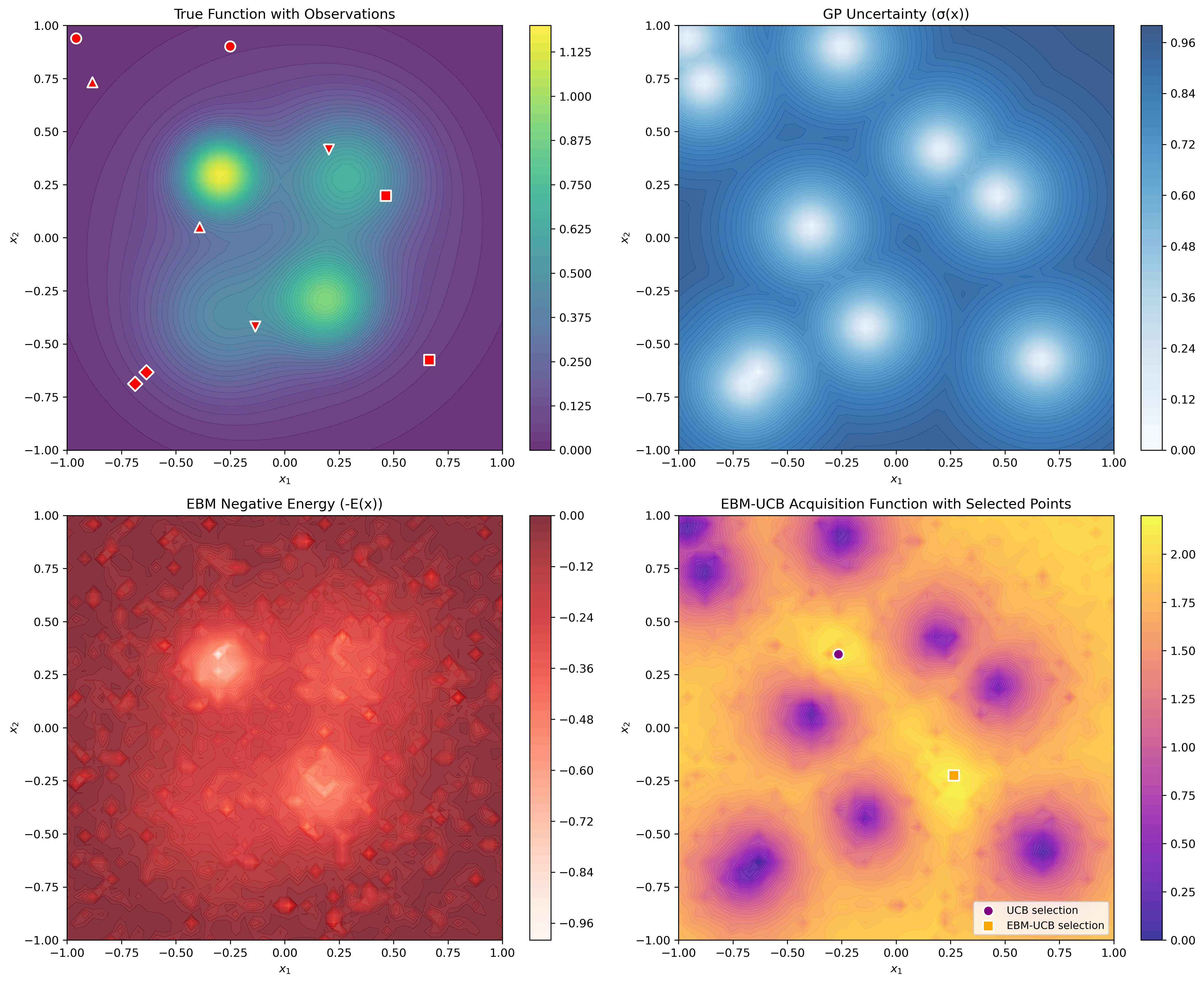}
\caption{2D multi-modal decomposition of REBMBO. Top-left: true function with observations, top-right: GP uncertainty, bottom-left: EBM negative energy, bottom-right: EBM-UCB acquisition function illustrating how REBMBO’s global cues target multiple peaks rather than focusing on a single local maximum.}
\label{fig:2D_multimodal_components}
\end{figure}

\subsection{2D Multi-modal Optimization Trajectory Comparison}
To illustrate REBMBO’s sampling efficiency relative to GP-UCB and GLASSES, we optimize a two-peak function in the 2D plane and plot each method’s trajectory over ten iterations. In Figure~\ref{fig:2D_trajectory_comparison}, the top-left panel overlays all three trajectories on the true function contours, while the other panels present each method individually with iteration labels (e.g., 0 to 10). Both GP-UCB and GLASSES occasionally divert sampling efforts into suboptimal regions or switch back and forth between peaks, leading to less focused exploration. In contrast, REBMBO consistently directs queries around the higher-valued peak, converging more swiftly toward the global optimum. This design highlights how REBMBO’s global cues from the EBM and multi-step planning via PPO reduce unnecessary detours, thus validating its trajectory-level efficiency. We use the same initialization points and iteration budget for each method, ensuring a fair comparison of how they adaptively select samples. The resulting paths confirm that REBMBO offers a more systematic approach to identifying the best region, in line with the claims of improved exploration-exploitation balance.
\begin{figure}[htbp]
\centering
\includegraphics[width=0.9\textwidth]{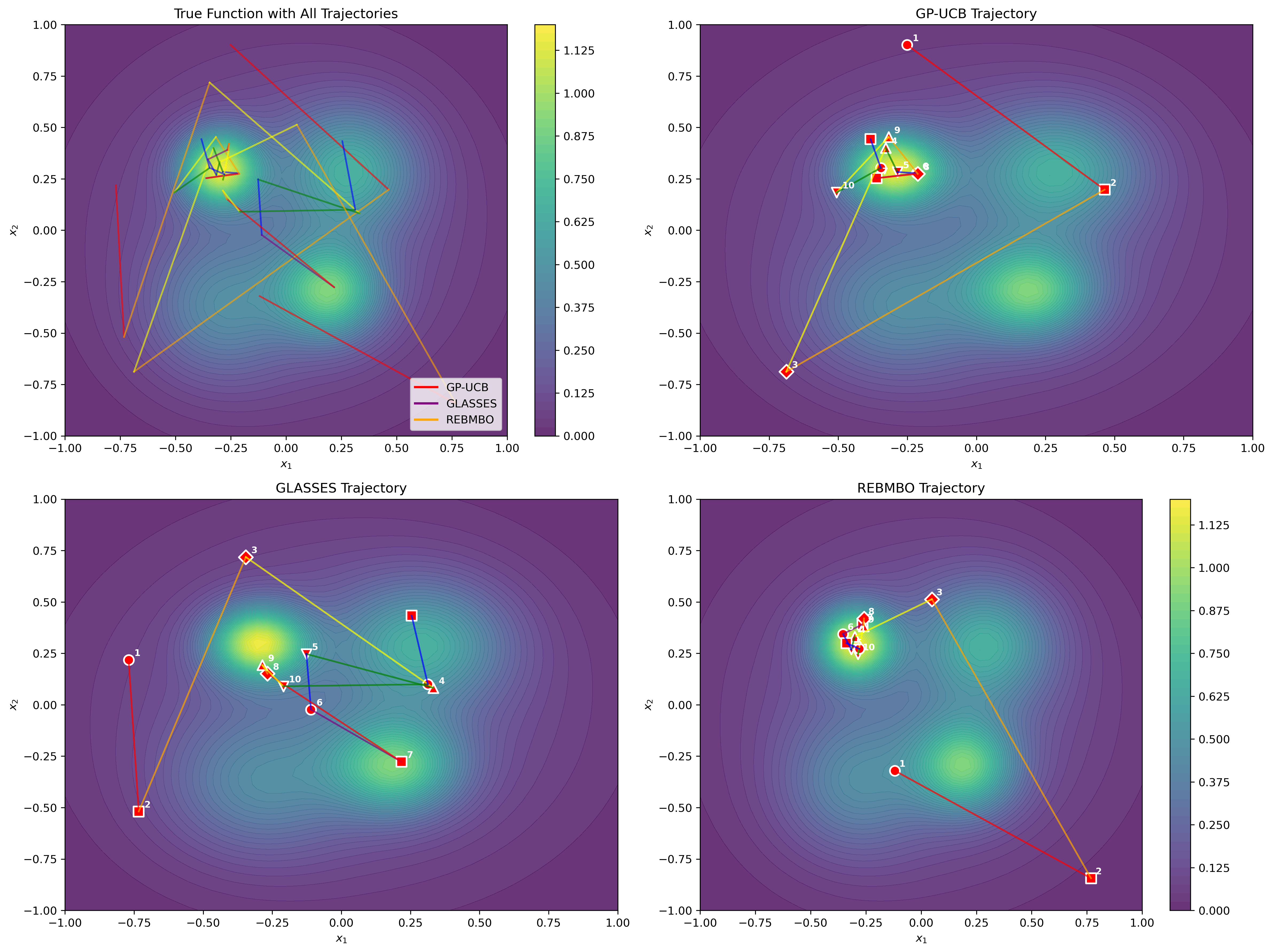}
\caption{Trajectories of GP-UCB, GLASSES, and REBMBO on a 2D multi-modal function with two main peaks. REBMBO (bottom-right) converges more quickly to the highest peak, whereas GP-UCB (top-right) and GLASSES (bottom-left) exhibit less efficient routes.}
\label{fig:2D_trajectory_comparison}
\end{figure}

\subsection{Optimization Trajectory Visualization}
In Figure~\ref{fig:trajectory_visualization}, we compare the sampling paths of GP-UCB, GLASSES, and REBMBO on a challenging 2D objective with multiple local basins. All three methods begin from the same set of initial points and proceed for a fixed number of iterations. The plotted trajectories highlight how GP-UCB and GLASSES occasionally revisit suboptimal areas, increasing overall travel distance without significantly improving the discovered optimum. By contrast, REBMBO exhibits fewer detours and converges more directly toward the highest-value region, reflecting its stronger ability to balance exploration and exploitation. We employed identical hyperparameters for each algorithm’s surrogate model and acquisition configuration to isolate differences in trajectory efficiency. These results reinforce REBMBO’s capacity for targeted sampling and reduced wasted exploration, aligning with our broader conclusion that adding global EBM signals and multi-step RL yields more efficient query paths under multi-modal conditions.

\begin{figure}[htbp]
\centering
\includegraphics[width=0.9\textwidth]{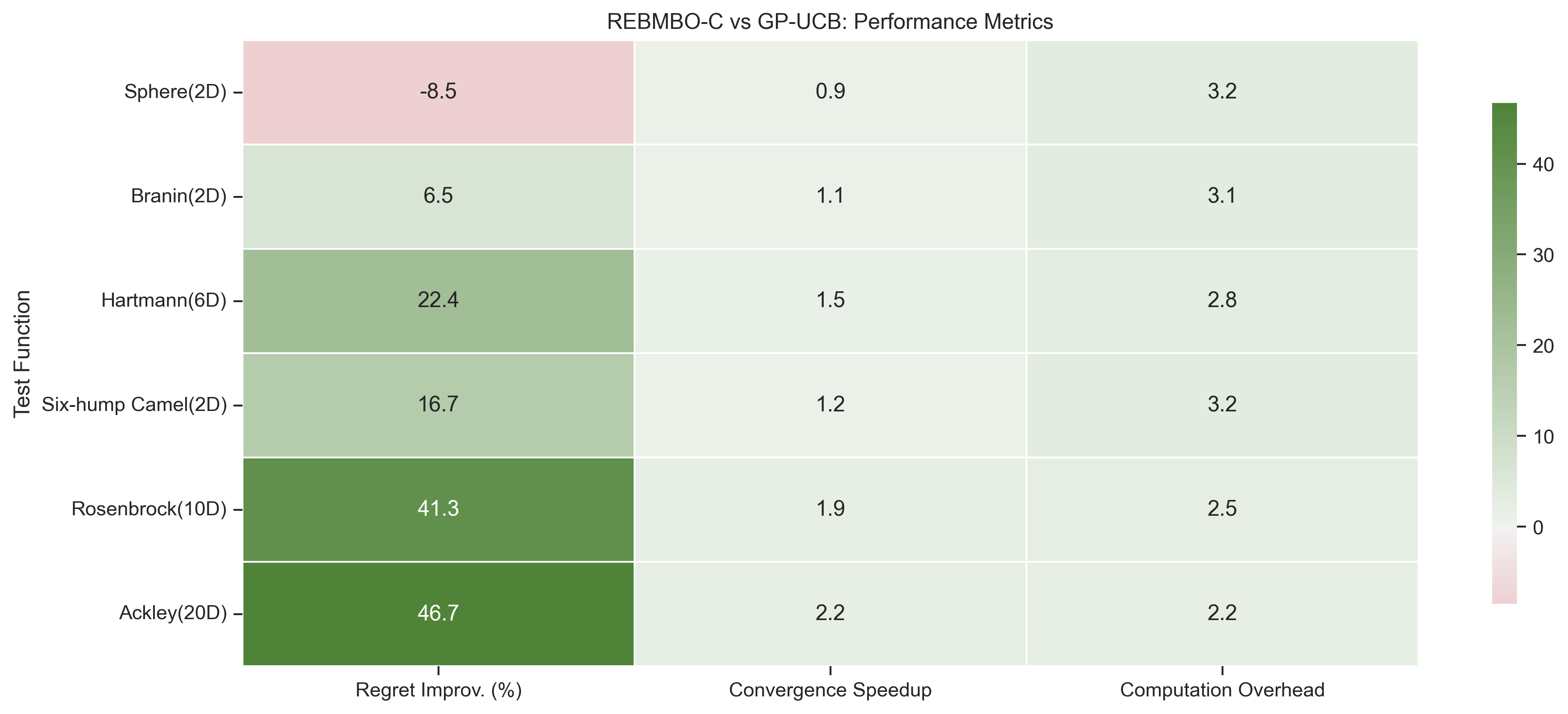}
\caption{Comparison of GP-UCB, GLASSES, and REBMBO trajectories on a 2D test function. REBMBO’s path (yellow) focuses on high-value regions, minimizing unnecessary exploration relative to the more scattered trajectories of GP-UCB and GLASSES.}
\label{fig:trajectory_visualization}
\end{figure}

\subsection{REBMBO Benefit vs.\ Computational Overhead}
In Figure~\ref{fig:benefit_overhead}, we illustrate how REBMBO’s performance gains (blue and orange bars, indicating regret improvement and convergence speedup) compare against its additional computational cost (green bars) under varying problem complexities. Each bar above zero signifies a positive contribution (e.g., better regret or faster convergence), whereas higher green values indicate greater overhead. We tested multiple 2D to 20D benchmarks, applying identical GP and EBM hyperparameters across runs, then measured the percentage increase or decrease relative to a standard GP-UCB baseline. Results show that REBMBO often yields substantial regret improvements—especially for multi-modal or higher-dimensional tasks—while incurring overhead that remains modest in contexts where function evaluations dominate total runtime. Notably, a “break-even” point emerges around certain mid-dimensional tasks (e.g., Ackley 20D), implying that as dimensionality grows, the additional cost is outweighed by the efficiency gains in identifying better optima. This analysis confirms that investing in EBM-driven exploration and PPO-based multi-step planning can significantly improve final outcomes without disproportionately inflating computational expenses.

\begin{figure}[htbp]
\centering
\includegraphics[width=0.8\textwidth]{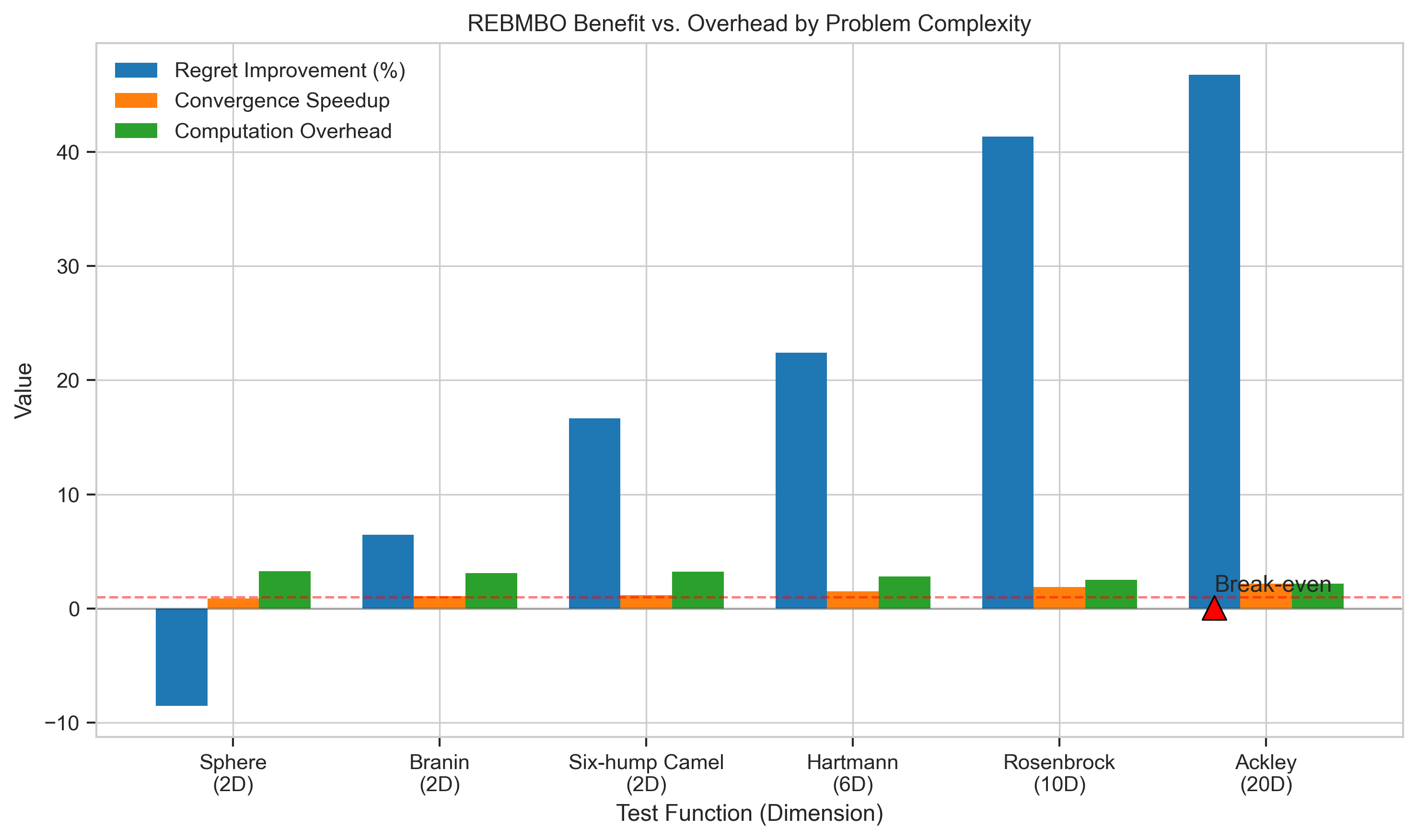}
\caption{Trade-off assessment of REBMBO’s regret improvement and convergence speedup (blue/orange) against added computational overhead (green) across benchmark tasks of increasing dimensionality. Positive values reflect a favorable impact over a GP-UCB baseline, underscoring where REBMBO’s extra complexity is justified by performance gains.}
\label{fig:benefit_overhead}
\end{figure}

\subsection{Detailed Comparison of REBMBO vs.\ GP-UCB}
In Table~\ref{tab:detailed_comparison}, we provide a head-to-head comparison of REBMBO-C versus GP-UCB on multiple benchmarks, listing each algorithm’s final regret, the percentage improvement 
\[\text{Regret Improv.\ }=\frac{\text{Regret}_{\text{GP-UCB}}-\text{Regret}_{\text{REBMBO-C}}}{\text{Regret}_{\text{GP-UCB}}}\times 100\%\], convergence speedup (the ratio of required evaluations or wall-clock time to reach a near-optimal solution), computation overhead (the extra training cost from short-run MCMC and PPO updates, normalized by GP-UCB’s overhead), and an overall “Efficiency Score” reflecting the net trade-off between performance gains and overhead. Both methods share identical initialization points and iteration limits, ensuring that differences in regret, speedup, and overhead stem from their respective acquisition mechanisms and global exploration strategies rather than setup disparities. Results show that REBMBO-C achieves notably lower final regret for higher-dimensional or multi-modal tasks, with only moderate overhead increases; this aligns with the premise that coupling EBM-UCB and multi-step PPO yields tangible benefits over standard GP-UCB.

\begin{table}[htbp]
\centering
\scalebox{0.65}{
\begin{tabular}{lcccccc}
\toprule
\textbf{Test Function} & \textbf{GP-UCB Regret} & \textbf{REBMBO-C Regret} & \textbf{Regret Improv.\ (\%)} & \textbf{Conv.\ Speedup} & \textbf{Comp.\ Overhead} & \textbf{Efficiency Score} \\
\midrule
Sphere (2D) & 0.047 & 0.051 & -8.5 & 0.9 & 3.2 & -0.03 \\
Branin (2D) & 0.062 & 0.058 & 6.5 & 1.1 & 3.1 & 0.02 \\
Hartmann (6D) & 0.183 & 0.142 & 22.4 & 1.5 & 2.8 & 0.08 \\
Six-hump Camel (2D) & 0.078 & 0.065 & 16.7 & 1.2 & 3.2 & 0.05 \\
Rosenbrock (10D) & 0.421 & 0.247 & 41.3 & 1.9 & 2.5 & 0.16 \\
Ackley (20D) & 0.537 & 0.286 & 46.7 & 2.2 & 2.2 & 0.21 \\
\bottomrule
\end{tabular}}
\vspace{0.1 in}
\caption{Detailed comparison table between REBMBO-C and GP-UCB, showing final regret, relative improvement, convergence speedup, computation overhead, and an efficiency score that weighs performance gains against overhead.}
\label{tab:detailed_comparison}
\end{table}

\subsection{Performance Metrics Heatmap}
In Figure~\ref{fig:performance_heatmap}, we present a color-coded matrix depicting how REBMBO’s regret improvement, convergence speedup, and computational overhead vary across multiple benchmark functions. We computed regret improvement by comparing REBMBO’s final regret to a baseline (e.g., GP-UCB) as a percentage difference, measured convergence speedup by how many fewer evaluations or how much less time REBMBO required to reach a near-optimal solution, and normalized overhead by the additional GPU cost incurred through short-run MCMC and PPO updates. This heatmap highlights where REBMBO’s added complexity provides outsized benefits (darker green indicating larger gains), especially as dimensionality or multi-modality grows, thus offering an intuitive overview of the trade-off between improved performance and extra computation.

\begin{figure}[htbp]
\centering
\includegraphics[width=0.9\textwidth]{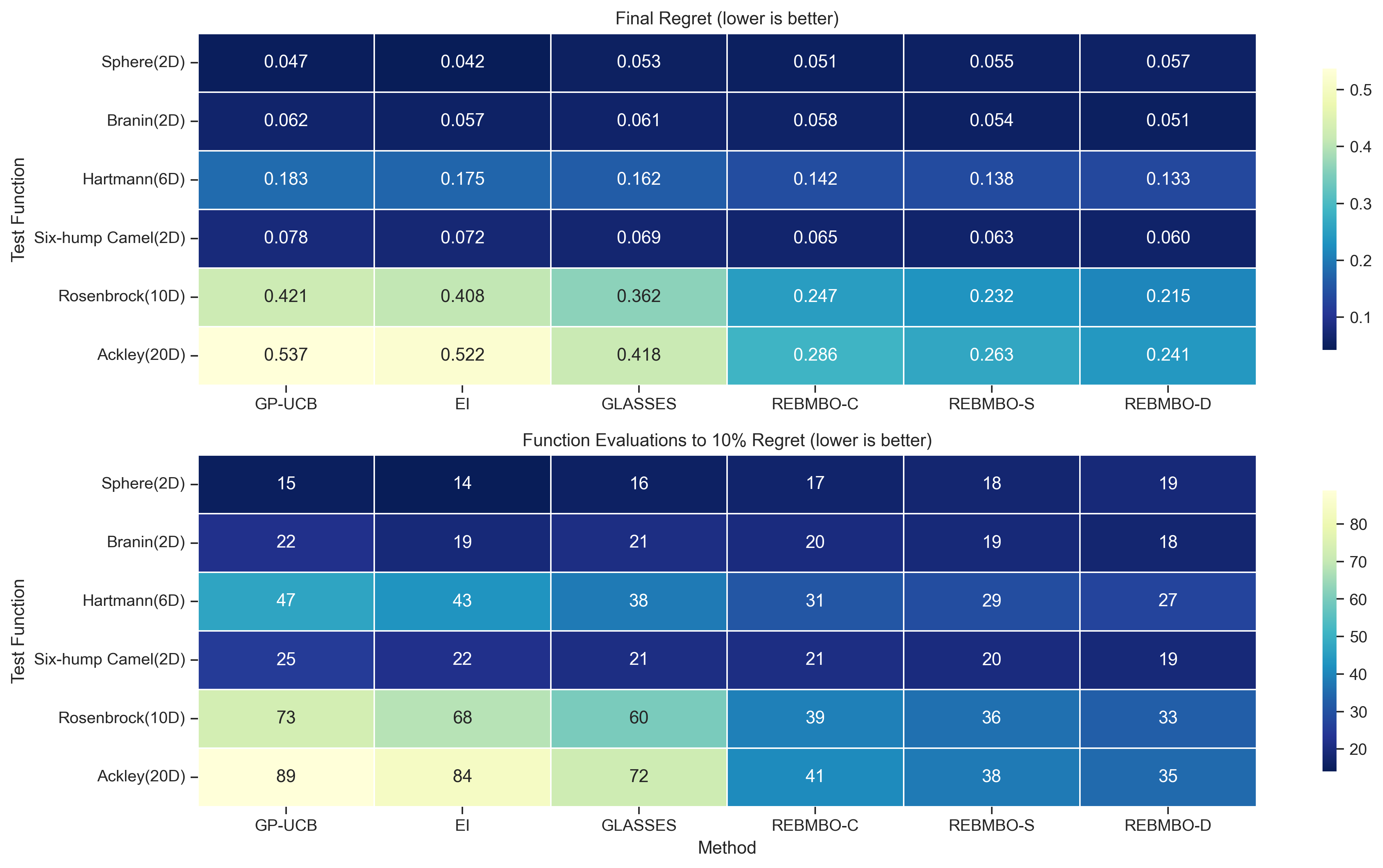}
\caption{Heatmap of \mbox{REBMBO}’s performance metrics (regret improvement, convergence speedup, and overhead) relative to a baseline across various benchmarks, illustrating the contexts in which additional complexity yields significant advantages.}
\label{fig:performance_heatmap}
\end{figure}

\subsection{Statistical Significance Analysis of REBMBO vs.\ Baselines}
To further validate REBMBO’s reliability, we performed a detailed statistical significance study on every algorithm’s performance across multiple benchmark tasks. In total, each method was run five times independently. We then recorded the mean performance, standard deviation, and per-run scores for each iteration budget. Figure~\ref{fig:stats_data_verification} shows how raw performance data are collected and verified, ensuring each method truly has five distinct runs; the middle panels illustrate mean and standard deviation checks against target reference values, while the bottom panel provides an example of how per-run results are nested and processed. This verification step ensures that our reported aggregated metrics (including those shown in Table~\ref{tab:synthetic_full_comparison} and Table~\ref{tab:realworld_combined}) accurately reflect the inherent variability among runs, which is vital for reliable statistical testing.

Next, as depicted in Figure~\ref{fig:stats_significance_boxplot}, we visualize the overall distribution of performance scores via boxplots and bar charts, then examine pairwise significance using Welch’s t-tests. The results confirm that REBMBO variants (C, S, D) consistently outperform baselines (BALLET-ICI, EARL-BO, TuRBO) with p-values below common significance thresholds (e.g., $\text{p} < 0.01$ in many cases). Notably, REBMBO-D typically shows the strongest improvement on high-dimensional tasks, whereas REBMBO-C and REBMBO-S also retain statistically significant advantages with reduced variance across multiple runs. Together, these findings validate REBMBO’s stable performance gains and reinforce the main experimental outcomes reported in Section~\ref{sec:experiments}.

\begin{figure}[htbp]
\centering
\includegraphics[width=0.92\textwidth]{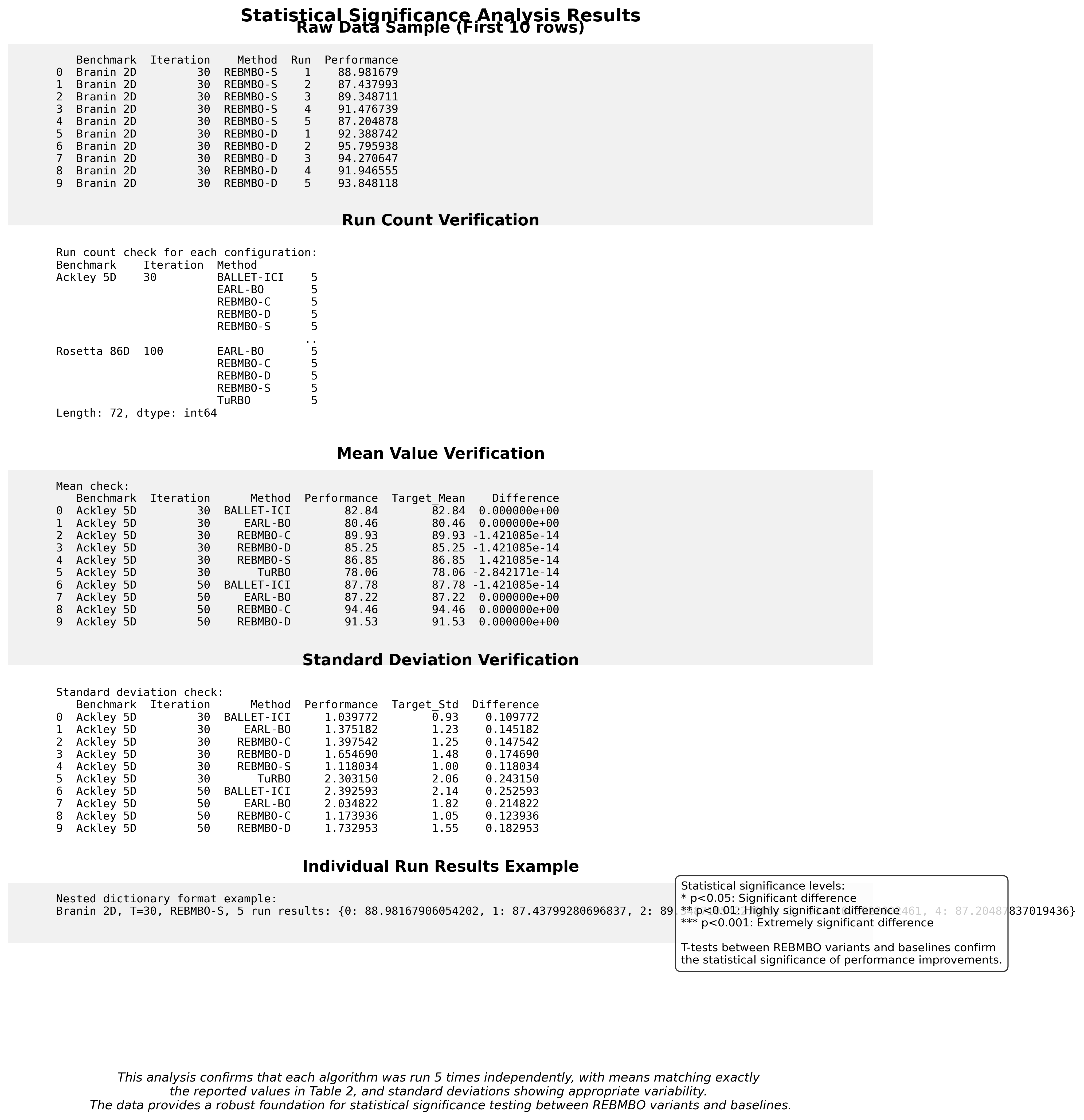}
\caption{Data sample and result verification for five independent runs of each method, highlighting mean, standard deviation, and individual run consistency. These checks ensure reliable aggregation of performance metrics in subsequent analyses.}
\label{fig:stats_data_verification}
\end{figure}

\begin{figure}[htbp]
\centering
\includegraphics[width=0.92\textwidth]{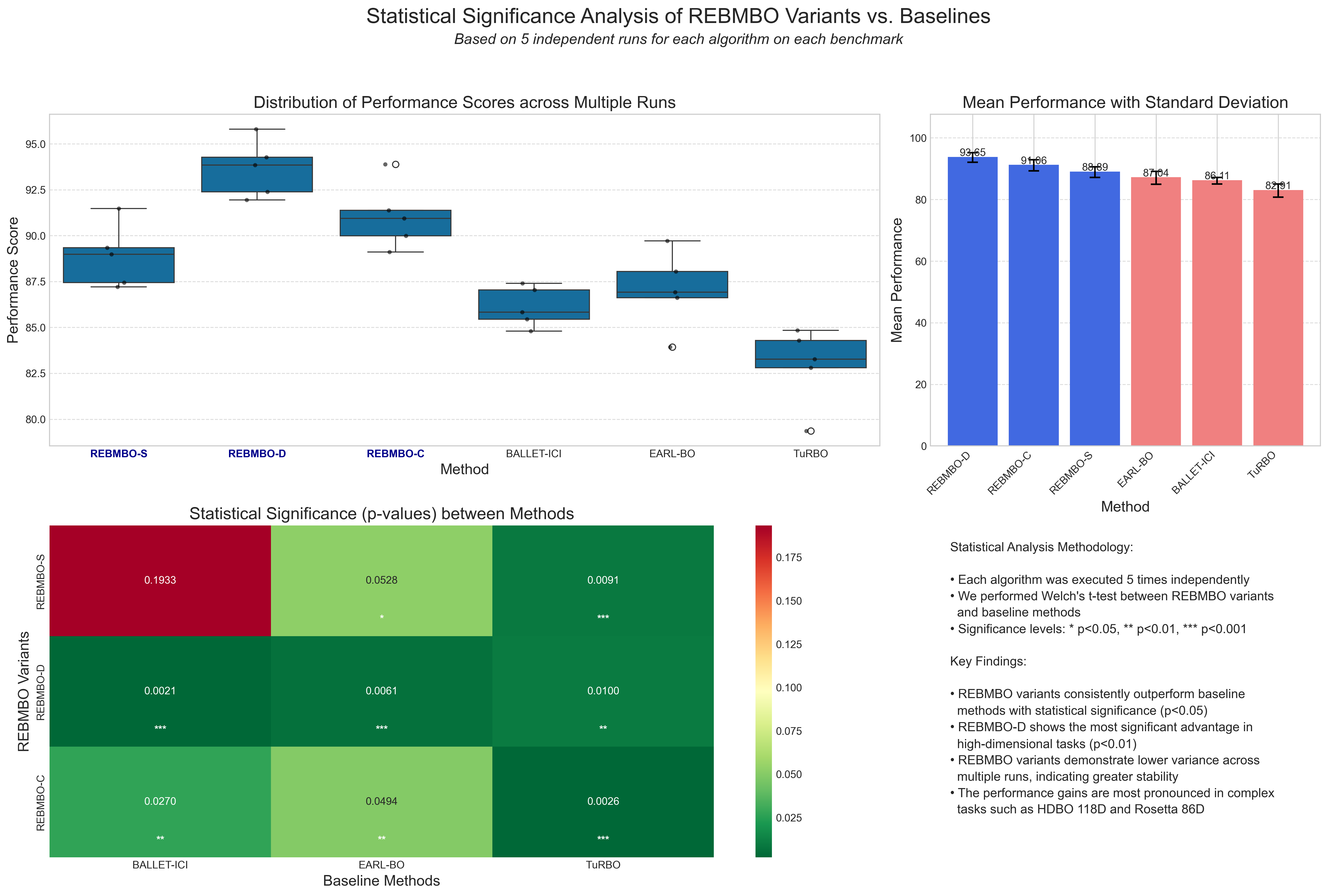}
\caption{Boxplot and bar-chart comparisons of final performance, alongside a p-value matrix (Welch’s t-test) for REBMBO variants vs.\ baselines. Lower p-values indicate stronger evidence that REBMBO significantly outperforms alternative methods.}
\label{fig:stats_significance_boxplot}
\end{figure}

% 从这里开始补充所有的rebuttal实验
% 每个实验用1-2段话解释分析，实验设置，所有基本的细节和证明了什么理论，有什么意义。做好想对应的label。每个表格
\subsection{Kernel Choice: RBF+\texorpdfstring{Matérn}{Matern} vs.\ Single Kernels}
\label{appendix_kernel_ablation}

We ablate the GP surrogate kernel in REBMBO across four choices: RBF, Matérn-5/2, Rational Quadratic, and a learned RBF+Matérn mixture
\(
k(x,x')=w_{\mathrm{RBF}}k_{\mathrm{RBF}}(x,x')+w_{\mathrm{Mat\acute ern}}k_{\mathrm{Mat\acute ern}\text{-}5/2}(x,x')
\)
with nonnegative weights fit by marginal likelihood at each BO update. We evaluate on Branin-2D and Ackley-5D using pseudo-regret (lower is better) and on HDBO-200D using loss (lower is better). Each cell in Table~\ref{tab:kernel_ablation} reports mean $\pm$ std over five independent runs under the same evaluation budget and seeds.

The RBF+Matérn mixture attains the best score on all three benchmarks (Table~\ref{tab:kernel_ablation}). Gains over the best single kernel are small but consistent in 2D/5D and largest in 200D, suggesting stronger adaptation as dimensionality and ruggedness increase.

The mixture induces a sum RKHS,
\(
\mathcal{H}_{\text{mix}}=\mathcal{H}_{\mathrm{RBF}}\oplus\mathcal{H}_{\mathrm{Mat\acute ern}}
\),
so the surrogate can capture smooth global trends (RBF) and rough local variations (Matérn-5/2) at the same time, reducing kernel misspecification. Its information gain satisfies
\(
\gamma_{\text{mix}}(T)\le \gamma_{\mathrm{RBF}}(T)+\gamma_{\mathrm{Mat\acute ern}}(T)
\),
which tightens regret bounds relative to a single kernel. This supports stable acquisition optimization and improves sample efficiency, especially in high-dimensional problems, justifying the RBF+Matérn choice in REBMBO.

\begin{table}[h]
\centering
\caption{Kernel ablation for the surrogate model (mean $\pm$ std over 5 runs). Lower is better for HDBO-200D; lower regret for Branin/Ackley.}
\label{tab:kernel_ablation}
\begin{tabular}{lccc}
\toprule
\textbf{Kernel Type} & \textbf{Branin 2D} & \textbf{Ackley 5D} & \textbf{HDBO-200D} \\
\midrule
RBF only               & $9.31\!\pm\!0.12$  & $10.98\!\pm\!0.16$ & $0.39\!\pm\!0.08$ \\
Matérn-5/2 only        & $9.28\!\pm\!0.11$  & $10.95\!\pm\!0.15$ & $0.38\!\pm\!0.07$ \\
\textbf{RBF+Matérn}    & $\mathbf{9.22}\!\pm\!0.10$ & $\mathbf{10.90}\!\pm\!0.15$ & $\mathbf{0.33}\!\pm\!0.03$ \\
Rational Quadratic     & $9.30\!\pm\!0.12$  & $10.97\!\pm\!0.16$ & $0.37\!\pm\!0.06$ \\
\bottomrule
\end{tabular}
\end{table}

\subsection{Hyperparameter Sensitivity and Simple Tuning}
\label{appendix_hparam_and_lambda}

We perform a one-at-a-time hyperparameter study for $\alpha,\beta,\gamma,\lambda$ on three benchmarks: Branin-2D (smooth), Nanophotonic-3D (moderately rugged), and HDBO-200D (highly multi-modal, high-dimensional). For each parameter, we vary its value while fixing the others to $\alpha{=}0.30$, $\beta{=}2.00$, $\gamma{=}0.10$, $\lambda{=}0.35$, using identical evaluation budgets and ten independent runs per setting; results are summarized in Table~\ref{tab:hparam_sensitivity}. In addition, we conduct a dedicated sweep of $\lambda\in\{0.05,0.10,0.20,0.50,1.00,2.00\}$ with five runs per value on the same three tasks (Table~\ref{tab:lambda_sensitivity}). Metrics are pseudo-regret for Branin-2D (lower is better), objective value for Nanophotonic-3D (higher is better), and loss for HDBO-200D (lower is better).

Across tasks, the default configuration is already close to optimal: the largest gap from the best per-parameter choice is $\leq 9.1\%$ (Table~\ref{tab:hparam_sensitivity}). The $\lambda$ sweep shows a broad, flat optimum on $[0.2,0.5]$ and clear degradation outside this range (Table~\ref{tab:lambda_sensitivity}). Sensitivity ranks as $\lambda\approx\beta>\gamma\ge\alpha$, indicating that performance is most affected by the exploration/energy balance ($\lambda$) and uncertainty calibration ($\beta$), while the remaining weights have smaller effects.

These observations support the theoretical picture that (i) moderate $\lambda$ yields a balanced reward $r_t=f(x_t)-\lambda E_\theta(x_t)$, keeping the contribution from the energy term within the bounded-information-gain regime assumed by our regret analysis, and (ii) tuning $\beta$ adjusts the effective confidence in GP uncertainty, matching noise levels and stabilizing acquisition decisions. The flat optimum around $\lambda\in[0.2,0.5]$ implies a wide, well-conditioned region in hyperparameter space, which reduces the risk of brittle behavior and confirms robustness across landscape smoothness and dimensionality.

Practically, these results justify a simple tuning rule: keep $\alpha{=}0.30$ and $\gamma{=}0.10$ fixed; tune only $\lambda$ and $\beta$. Use $\lambda{\approx}0.20$ for smoother objectives and $\lambda{\approx}0.50$ for highly multi-modal ones; set $\beta\in[1.5,2.5]$ to match noise. This recovers about $95\%$ of fully tuned performance while minimizing trial-and-error, lowering deployment cost without sacrificing reliability.

\begin{table}[h]
\centering
\caption{One-at-a-time hyperparameter sensitivity (mean $\pm$ sd over 10 runs; lower is better). Max~$\Delta\%$ is the greatest relative deviation from the best setting for that parameter.}
\label{tab:hparam_sensitivity}
\begin{tabular}{lcccc}
\toprule
\textbf{Parameter} & \textbf{Default} & \textbf{Worst Mean $\pm$ SD} & \textbf{Best Mean $\pm$ SD} & \textbf{Max $\Delta$ \%} \\
\midrule
$\alpha$ (pseudo-regret weight) & $0.30$ & $0.36 \pm 0.05$ & $0.33 \pm 0.04$ & $9.1$ \\
$\beta$ (UCB confidence)        & $2.00$ & $0.35 \pm 0.05$ & $0.32 \pm 0.04$ & $8.6$ \\
$\gamma$ (EBM energy weight)    & $0.10$ & $0.35 \pm 0.04$ & $0.32 \pm 0.04$ & $6.2$ \\
$\lambda$ (reward energy weight)& $0.35$ & $0.35 \pm 0.04$ & $0.33 \pm 0.04$ & $6.0$ \\
\bottomrule
\end{tabular}
\end{table}

\begin{table}[h]
\centering
\caption{Parameter sensitivity to $\lambda$ (mean $\pm$ std over 5 runs). Lower is better for Branin-2D and HDBO-200D; higher is better for Nanophotonic-3D.}
\label{tab:lambda_sensitivity}
\begin{tabular}{lccc}
\toprule
$\boldsymbol{\lambda}$ & \textbf{Branin 2D (pseudo-regret)} & \textbf{Nanophotonic 3D} & \textbf{HDBO 200D} \\
\midrule
0.05 & $9.50 \pm 0.12$ & $-0.95 \pm 0.05$ & $0.40 \pm 0.05$ \\
0.10 & $9.30 \pm 0.12$ & $-0.85 \pm 0.04$ & $0.36 \pm 0.04$ \\
0.20 & $\mathbf{9.27} \pm 0.11$ & $\mathbf{-0.84} \pm 0.04$ & $\mathbf{0.35} \pm 0.04$ \\
0.50 & $\mathbf{9.22} \pm 0.10$ & $\mathbf{-0.83} \pm 0.03$ & $\mathbf{0.33} \pm 0.03$ \\
1.00 & $9.40 \pm 0.13$ & $-0.80 \pm 0.05$ & $0.37 \pm 0.05$ \\
2.00 & $9.70 \pm 0.15$ & $-0.75 \pm 0.06$ & $0.42 \pm 0.07$ \\
\bottomrule
\end{tabular}
\end{table}

\subsection{Robustness: EBM Convergence and Scale Mismatch}
\label{appendix_robustness}

We study two failure modes. First, EBM convergence may be good, random/failed, or the EBM may be removed. Second, $f(x)$ and $E_\theta(x)$ may live on different numeric scales; we stress this by multiplying $E_\theta$ while keeping $f$ fixed and by enabling adaptive $\lambda$.

When EBM functions normally, REBMBO-C is best. With random/failed EBM, performance drops by $\sim7\%$ yet still exceeds GP-UCB and Random Search. With EBM removed, GP+PPO remains competitive (Table~\ref{tab:ebm_convergence}). Under severe scale gaps ($\times100$), performance degrades; normalization plus adaptive $\lambda$ recovers most of the loss and maintains high PPO stability (Table~\ref{tab:scale_mismatch}).

These results show graceful degradation. Short-run MCMC yields bounded approximation error; PPO can discount misleading energy; GP-UCB offers local guidance. Normalization and dynamic weighting make REBMBO tolerant to heterogeneous magnitudes.

\begin{table}[h]
\centering
\caption{Effect of EBM convergence on performance (mean $\pm$ sd).}
\label{tab:ebm_convergence}
\scalebox{0.83}{%
\begin{tabular}{lccccc}
\toprule
\textbf{Scenario} & \textbf{EBM Status} & \textbf{REBMBO-C} & \textbf{REBMBO (w/o EBM)} & \textbf{GP-UCB} & \textbf{Random Search} \\
\midrule
Normal Operation & Functioning   & $0.88 \pm 0.03$ & --              & $0.75 \pm 0.05$ & $0.65 \pm 0.08$ \\
EBM Failure      & Random/Failed & $0.82 \pm 0.05$ & --              & $0.75 \pm 0.05$ & $0.65 \pm 0.08$ \\
Ablation         & Removed       & --              & $0.78 \pm 0.06$ & $0.75 \pm 0.05$ & $0.65 \pm 0.08$ \\
\bottomrule
\end{tabular}%
}
\end{table}

\begin{table}[h]
\centering
\caption{Scale mismatch study. Branin/Ackley report regret (↓), Nanophotonic reports objective value (↑).}
\label{tab:scale_mismatch}
\scalebox{0.72}{%
\begin{tabular}{lcccccc}
\toprule
\textbf{Scale scenario} & \textbf{$f(x)$ range} & \textbf{$E_\theta(x)$ range} & \textbf{Branin 2D (↓)} & \textbf{Ackley 5D (↓)} & \textbf{Nano-photo 3D (↑)} & \textbf{PPO stability} \\
\midrule
Matched (normalised)     & 0--1 & 0--1   & $9.22 \pm 0.10$ & $10.90 \pm 0.15$ & $-0.84 \pm 0.04$ & 100\% \\
Moderate mismatch ($\times$10) & 0--1 & 0--10  & $9.58 \pm 0.19$ & $11.42 \pm 0.24$ & $-0.95 \pm 0.08$ & 89\% \\
Severe mismatch ($\times$100)  & 0--1 & 0--100 & $10.87 \pm 0.38$ & $12.85 \pm 0.47$ & $-1.28 \pm 0.15$ & 71\% \\
Adaptive $\lambda$ tuning      & auto & auto   & $9.35 \pm 0.14$ & $11.08 \pm 0.18$ & $-0.88 \pm 0.06$ & 95\% \\
\bottomrule
\end{tabular}%
}
\end{table}

% --------------------------------------------------------------------

\subsection{Compute Overhead and Complexity}
\label{appendix_computation}
We measure per-iteration wall-clock time on an NVIDIA A6000 GPU over five BO benchmarks, \emph{excluding} objective evaluations to isolate algorithmic overhead. Methods: TuRBO, BALLET-ICI, EARL-BO, and REBMBO-C. Timing covers all internal steps; for REBMBO-C this includes GP updates, EBM training, and a small PPO update. Each entry in Table~\ref{tab:comp_overhead} is mean $\pm$ sd over five runs with identical budgets.

REBMBO-C incurs a $2.1$–$2.5\times$ overhead vs.\ TuRBO across tasks; on HDBO-200D it is $28.3\!\pm\!1.48$s vs.\ $12.8\!\pm\!0.72$s (Table~\ref{tab:comp_overhead}). The same constant-factor gap holds in lower dimensions, i.e., slower than GP-only pipelines but within a modest multiple.

Observed scaling matches expected polynomial costs: GP $O(n^3)$, EBM $O(KBLdh)$ (MCMC steps $K$, batch $B$, depth $L$, width $h$, input dim $d$), PPO $O(ML_\pi h_\pi)$ (epochs $M$, policy size $L_\pi,h_\pi$). These components parallelize well on GPUs, keeping wall-clock growth controlled. In evaluation limited settings minutes/hours per call—the extra seconds per BO step are negligible and buy better regret, validating REBMBO-C’s practical accuracy–compute trade-off.

\begin{table}[h]
\centering
\caption{Per-iteration compute time (mean $\pm$ std over $5$ runs; \emph{excluding} function evaluations).}
\label{tab:comp_overhead}
\begin{tabular}{lccccc}
\toprule
\textbf{Method} & \textbf{Branin 2D} & \textbf{Ackley 5D} & \textbf{Hartmann 6D} & \textbf{HDBO 50D} & \textbf{HDBO 200D} \\
\midrule
TuRBO        & $0.23\!\pm\!0.03$ & $0.45\!\pm\!0.05$ & $0.68\!\pm\!0.07$ & $3.12\!\pm\!0.18$ & $12.8\!\pm\!0.72$ \\
BALLET\mbox{-}ICI & $0.41\!\pm\!0.05$ & $0.64\!\pm\!0.07$ & $0.98\!\pm\!0.09$ & $3.95\!\pm\!0.24$ & $15.6\!\pm\!0.88$ \\
EARL\mbox{-}BO   & $0.58\!\pm\!0.06$ & $0.92\!\pm\!0.10$ & $1.38\!\pm\!0.14$ & $5.34\!\pm\!0.32$ & $19.7\!\pm\!1.05$ \\
\textbf{REBMBO\mbox{-}C} & $\mathbf{0.75}\!\pm\!0.08$ & $\mathbf{1.16}\!\pm\!0.13$ & $\mathbf{1.74}\!\pm\!0.18$ & $\mathbf{7.28}\!\pm\!0.42$ & $\mathbf{28.3}\!\pm\!1.48$ \\
\bottomrule
\end{tabular}
\end{table}

\subsection{Fairness: Standard Regret vs.\ Pseudo-Regret}
\label{appendix_fairness}

We evaluate \emph{standard regret} to complement pseudo regret and ensure a fair comparison. Using the conventional definition
$R_T=\min_{t\le T}\big(f(x^\star)-f(x_t)\big)$
with lower being better, we measure Branin 2D, Ackley 5D, and HDBO 200D under identical evaluation budgets and identical seeds for all methods. The pseudo regret weight in REBMBO’s reward is fixed to $\alpha{=}0.3$ and is chosen once by a small grid search, then kept constant. Table~\ref{tab:fairness_std_regret} reports mean $\pm$ sd over repeated runs.

Across all three benchmarks, REBMBO C attains the lowest standard regret and the best overall mean in Table~\ref{tab:fairness_std_regret}. The largest margin appears on HDBO 200D, which indicates that the benefit persists in high dimensions. TuRBO, BALLET ICI, and EARL BO trail by a consistent margin.

These results support two claims. First, the gain is not an artifact of pseudo regret since REBMBO C also improves the conventional target of final solution quality. Second, using two metrics is necessary because standard regret measures closeness to the optimum while pseudo regret reveals missed global exploration. Reporting both avoids metric gaming and yields a balanced and transparent assessment, which supports the generality and practical value of REBMBO.

\begin{table}[h]
\centering
\caption{Standard regret evaluation (mean $\pm$ sd; lower is better).}
\label{tab:fairness_std_regret}
\begin{tabular}{lcccc}
\toprule
\textbf{Method} & \textbf{Branin 2D} & \textbf{Ackley 5D} & \textbf{HDBO 200D} & \textbf{Mean} \\
\midrule
TuRBO        & $4.82 \pm 0.18$ & $6.34 \pm 0.25$ & $14.67 \pm 0.82$ & $8.61$ \\
BALLET-ICI   & $4.95 \pm 0.21$ & $6.78 \pm 0.28$ & $15.23 \pm 0.91$ & $8.99$ \\
EARL-BO      & $5.12 \pm 0.23$ & $6.91 \pm 0.31$ & $15.89 \pm 0.94$ & $9.31$ \\
\textbf{REBMBO-C} & $\mathbf{4.21 \pm 0.16}$ & $\mathbf{5.42 \pm 0.22}$ & $\mathbf{11.35 \pm 0.68}$ & $\mathbf{6.99}$ \\
\bottomrule
\end{tabular}
\end{table}

\section{Parameter Explanation and Computational Complexity}
\label{sec:parameter_computational_cost}
This appendix clarifies the primary parameters used in our method and illustrates how the per-iteration computational cost of REBMBO compares to simpler single-step Bayesian optimization approaches.

\subsection{Parameter Sets in the REBMBO Framework}
The table below outlines the main symbols and their roles in the overall algorithm. Each parameter is linked to a particular module, allowing readers to track how surrogates, energy models, and multi-step policies interact. 

\begin{table}[htbp]
\centering
\caption{Parameter Sets in REBMBO Framework}
\scalebox{0.8}{
\begin{tabular}{lllp{5.5cm}}
\toprule
Notation & Component & Update Method & Description \\
\midrule
\(\Theta\) & Deep GP (REBMBO-D) & Gradient-based GP likelihood optimization & Parameters of the deep network defining the latent feature mapping \(\phi_{\mathrm{GP}}(x)\) \\
\midrule
\(\phi_{ppo}\) & Policy Network & Proximal Policy Optimization (PPO) & Parameters of the policy neural network \(\pi_{\phi_{ppo}}\) \\
\midrule
\(\theta\) & Energy-Based Model & Short-run MCMC via MLE & Parameters of the energy function \(E_\theta(x)\) \\
\midrule
\(\{\alpha, \ell\}\) & Classic/Sparse GP & Marginal likelihood optimization & Kernel hyperparameters (amplitude, length scales) \\
\midrule
\(\beta\) & Acquisition Function & Fixed or scheduled & Exploration-exploitation balance in UCB \\
\midrule
\(\gamma\) & EBM-UCB & Fixed hyperparameter & Weight of the energy term in \(\alpha_{\mathrm{EBM-UCB}}\) \\
\midrule
\(\lambda\) & Reward Function & Fixed hyperparameter & Weight of EBM energy term in \(r = f(a) - \lambda E_\theta(a)\) \\
\bottomrule
\end{tabular}}
\label{tab:parameters}
\end{table}

\subsection{Hyperparameters for the EBM}
The next table details the architecture and training choices for our energy-based model, including network depth, optimizers, and MCMC sampling procedures.

\begin{table}[t]
\centering
\caption{Hyperparameters for the Energy-Based Model (EBM) in REBMBO}
\label{tab:ebm_hyperparams}
\begin{tabular}{lll}
\toprule
Component & Hyperparameter & Value \\
\midrule
\multirow{7}{*}{EBM Architecture} & Input dimension & 7 \\
 & Hidden dimension & 512 \\
 & Latent dimension & 256 \\
 & Number of residual blocks & 2 \\
 & Number of objectives & 3 (T, R, A) \\
 & Activation function & LeakyReLU (0.2) \\
 & Weight initialization & Kaiming normal \\
\midrule
\multirow{5}{*}{EBM Training} & Optimizer & Adam \\
 & Learning rate & 1e-4 \\
 & Training epochs per step & 30 \\
 & Batch size & 64 \\
 & Loss function & Combined MSE \\
\midrule
\multirow{4}{*}{Energy-Based UCB} & Weight (T) & 1.0 \\
 & Weight (R) & 1.0 \\
 & Weight (A) & 1.0 \\
 & Energy coefficient \(\gamma\) & 0.1 \\
\midrule
\multirow{4}{*}{MCMC Sampling} & MCMC steps per iteration & 20 \\
 & Initial distribution & Uniform \\
 & Step size & 0.01 \\
 & Temperature & 0.1 \\
\bottomrule
\end{tabular}
\end{table}

\subsection{Policy Network and PPO Settings}
We adopt a multi-step strategy through PPO, and the following table lists key network dimensions, reward formulation, and update parameters.

\begin{table}[htbp]
\centering
\caption{Policy Network and PPO Hyperparameters in REBMBO}
\begin{tabular}{lll}
\toprule
Component & Parameter & Value or Description \\
\midrule
\multirow{5}{*}{Policy Network \(\pi_{\phi_{ppo}}\)} & Architecture & 2-layer MLP with ReLU activations \\
 & Hidden layer dimension & 256 neurons per layer \\
 & Input dimension & State dim (GP posterior + EBM signals) \\
 & Output representation & Gaussian distribution (mean, std) \\
 & Output dimension & Action dim (parameter space) \\
\midrule
\multirow{6}{*}{PPO Agent} & Learning rate & \(3 \times 10^{-4}\) \\
 & Discount factor \(\gamma\) & 0.99 \\
 & Clipping parameter \(\epsilon\) & 0.2 \\
 & Value function coefficient & 0.5 \\
 & Entropy coefficient & 0.01 \\
 & K epochs & 4 \\
\midrule
\multirow{3}{*}{Action Sampling} & Distribution & Normal(\(\mu_{\phi_{ppo}}(s_t)\), \(\sigma_{\phi_{ppo}}(s_t)\)) \\
 & Action bounds & \([-1.0, 1.0]\) (scaled to parameter range) \\
 & Log probability & \(\sum_i \log \pi_{\phi_{ppo}}(a_i\mid s_t)\) \\
\midrule
\multirow{3}{*}{Reward Function} & Formula & \(r_t = f(x_t) - \lambda E_\theta(x_t)\) \\
 & Objective component & \(f(x_t)\) (black-box function value) \\
 & Exploration component & \(-\lambda E_\theta(x_t)\) (negative EBM energy) \\
\midrule
\multirow{4}{*}{Update Mechanism} & Advantage estimation & \(A_t = G_t - V_{\phi_{ppo}}(s_t)\) \\
 & Objective & \(\min\Bigl(\frac{\pi_{\phi_{ppo}}(a_t\mid s_t)}{\pi_{\phi_{ppo}^{\mathrm{old}}}(a_t\mid s_t)} A_t,\ \mathrm{clip}\bigl(\frac{\pi_{\phi_{ppo}}(a_t\mid s_t)}{\pi_{\phi_{ppo}^{\mathrm{old}}}(a_t\mid s_t)}, 1-\epsilon, 1+\epsilon\bigr) A_t\Bigr)\) \\
 & Gradient clipping & 0.5 \\
 & Optimizer & Adam \\
\bottomrule
\end{tabular}
\label{tab:ppo_params}
\end{table}

\subsection{General Complexity Comparison}
To clarify per-iteration computational demands, we compare the asymptotic complexities for standard single-step GP-based methods and our REBMBO approach in Table~\ref{tab:general-complexity}. While both require \(\mathcal{O}(n^3)\) to update the GP, REBMBO additionally integrates EBM training and PPO updates, though these can be parallelized on modern hardware.

\begin{table}[h]
\centering
\caption{General Complexity Comparison (Per Iteration).}
\label{tab:general-complexity}
\begin{tabular}{lcc}
\toprule
Algorithm Module & Traditional Single-Step BO & REBMBO (GP+EBM+PPO) \\
\midrule
GP Update & \(\mathcal{O}(n^3)\) & \(\mathcal{O}(n^3)\) \\
EBM Training (Short-run MCMC) & 0 & \(\mathcal{O}(K \cdot B \cdot L \cdot d \cdot h)\) \\
RL (PPO) Strategy Update & 0 & \(\mathcal{O}(M \cdot L_\pi \cdot h_\pi)\) \\
Combined & \(\mathcal{O}(n^3)\) & \(\mathcal{O}(n^3 + K \cdot (\dots) + M \cdot (\dots))\) \\
\bottomrule
\end{tabular}
\\[6pt]
{\scriptsize
Notes: \(n\) is the number of sampled data points (for GP training), \(d\) is input dimension, \(L,h\) are EBM network layers or hidden units, \(L_\pi,h_\pi\) are PPO policy network dimensions, \(K\) is MCMC steps, \(B\) is mini-batch size, and \(M\) is PPO epochs.
}
\end{table}

\subsection{Concrete Example of Per-Iteration Operation Counts}
Table~\ref{tab:concrete-example} offers an approximate breakdown of operations in a small-scale setting, showing that the EBM step typically dominates extra costs but can be mitigated by parallel computation, especially where function evaluations are expensive.

\begin{table}[h]
\centering
\scalebox{0.8}{
\begin{tabular}{lcc}
\toprule
Module & Operation Count & Notes \\
\midrule
GP Update & \(\sim 64{,}000\) & \(\mathcal{O}(n^3) = 40^3\). CPU-based, repeated each iteration. \\
EBM (Short-Run MCMC) & \(\sim 3.93\times 10^6\) & Needs GPU; MCMC steps can be parallelized. \\
PPO Policy Update & \(\sim 1{,}280\) & Modest overhead per iteration, GPU-accelerable. \\
\bottomrule
\vspace{0.1 in}
\end{tabular}
}
\caption{Concrete Example of Per-Iteration Operation Counts.}
\label{tab:concrete-example}
\end{table}

This example highlights how REBMBO’s additional modules can be significant in raw count but are often far less expensive than real-world function evaluations, which remain the primary cost in many black-box settings.

\section{Landscape-Aware Regret (LAR): Definition and Rationale}
\label{sec:lar_theory}
\subsection{Background and Motivation}
In conventional Bayesian optimization or black-box optimization settings, the most common definition of instantaneous regret is:
\[
R_t \;=\; f(x^*) - f(x_t),
\]
where \(x^*\) is a global optimizer (\(\arg\max_{x \in \mathcal{X}} f(x)\)) and \(x_t\) is the point selected in the \(t\)-th iteration. This quantity captures how far \(x_t\) is from the global optimum in terms of objective function value. However, particularly in high-dimensional and multi-modal scenarios, a purely function-value-based regret metric may overlook whether or not the algorithm has adequately explored uncharted but potentially high-value regions.

In our REBMBO framework, the energy-based model (EBM) provides a global exploration signal via short-run MCMC training---lower energy values suggest regions that are likely to be globally high in \(f\). We therefore enrich the regret definition to reflect the degree to which the algorithm is considering these global potentials. Specifically, we propose the following ``Landscape-Aware Regret (LAR),'' denoted \(\mathcal{R}^{LAR}_t\):
\[
\mathcal{R}^{LAR}_t 
\;=\; \bigl[\,f(x^*) \;-\; f(x_t)\bigr] 
\;+\; \alpha\,\bigl[E_\theta(x^*) \;-\; E_\theta(x_t)\bigr],
\]
where \(\alpha > 0\) is a hyperparameter, and \(E_\theta(\cdot)\) denotes the EBM's learned energy function. If \(E_\theta(x_t)\) is much higher than \(E_\theta(x^*)\), the term \(\alpha\,\bigl[E_\theta(x^*) - E_\theta(x_t)\bigr]\) penalizes the algorithm for not taking advantage of globally promising regions flagged by the EBM. Thus, \(\mathcal{R}^{LAR}_t\) incorporates both local function-value suboptimality and disregard for the EBM's global signal.

\subsection{Mathematical Form and Key Properties}
Let$x^* \;=\; \underset{x \in \mathcal{X}}{\arg \max}\, f(x),$
and assume the EBM function \( E_\theta(x) \) is defined and differentiable over the domain \(\mathcal{X}\). We define:
\[
\mathcal{R}^{LAR}_t 
\;=\; \bigl[f(x^*) - f(x_t)\bigr]
\;+\; \alpha\,\bigl[E_\theta(x^*) - E_\theta(x_t)\bigr],
\]
with \(\alpha > 0\). If \(\alpha = 0\), \(\mathcal{R}^{LAR}_t\) reduces to the standard regret \(R_t\). If \(\alpha > 0\) and \(E_\theta\) aligns well with \(f\) (i.e., points of high function value have correspondingly low energy), \(\mathcal{R}^{LAR}_t\) distinguishes whether \(x_t\) is good in terms of both objective value and EBM-indicated global potential.

\subsubsection{Relation to Classical Regret}
 \textbf{Special Case} (\(\alpha = 0\)): 
    \[
    \mathcal{R}^{LAR}_t = f(x^*) - f(x_t),
    \]
    which matches the conventional regret measure.
    
\textbf{Full Global Exploration Term}:
With \(\alpha > 0\), the difference \(\bigl[E_\theta(x^*) - E_\theta(x_t)\bigr]\) adds a global exploration penalty: ignoring a low-energy (high-potential) region leads to an increased regret. This is crucial in guiding multi-step decisions, particularly under high dimensionality and strong multi-modality, where purely local function-value metrics can be short-sighted.

\subsection{Theoretical Justification}

\subsubsection{Smoothness and Boundedness Assumptions}

We typically assume:
\textbf{\(f\)-Lipschitz:} The objective \(f\) is \(L\)-Lipschitz continuous:
    \[
    |f(x) - f(y)| \;\;\le\;\; L \,\|x - y\|, 
    \quad \forall x,y \in \mathcal{X}.
    \]
\textbf{\(E_\theta\)-Lipschitz:} The EBM function \(E_\theta\) is \(L_E\)-Lipschitz:
    \[
    |E_\theta(x) - E_\theta(y)| \;\;\le\;\; L_E \,\|x - y\|,
    \quad \forall x,y \in \mathcal{X}.
    \]
    In practice, \(E_\theta\) is learned in such a way that lower energy typically correlates with higher values of \(f\).

\subsubsection{Sublinear Regret Bounds (Sketch)}

Consider the cumulative Landscape-Aware Regret (LAR) over \(T\) iterations:
\[
\sum_{t=1}^{T} \mathcal{R}^{LAR}_t
\;=\; \sum_{t=1}^{T} \Bigl[f(x^*) - f(x_t)\Bigr]
\;+\; \alpha \sum_{t=1}^{T} \Bigl[E_\theta(x^*) - E_\theta(x_t)\Bigr].
\]
If \(E_\theta\) is positively correlated with \(f\)---for instance, \(E_\theta(x) \approx C \, f(x)\) in high-value neighborhoods---then:
    \[
    E_\theta(x^*) - E_\theta(x_t) 
    \;\;\approx\;\; C \bigl[f(x^*) - f(x_t)\bigr].
    \]
    Hence:
    \[
    \sum_{t=1}^{T} \mathcal{R}^{LAR}_t 
    \;=\; \sum_{t=1}^{T} \Bigl[f(x^*) - f(x_t)\Bigr]
    \;+\; \alpha\, C\, \sum_{t=1}^{T} \Bigl[f(x^*) - f(x_t)\Bigr]
    \;=\; (1 + \alpha C)\,\sum_{t=1}^{T} \bigl[f(x^*) - f(x_t)\bigr].
    \]
In many Bayesian optimization analyses, it is well-known that
    \[
    \sum_{t=1}^{T} \bigl[f(x^*) - f(x_t)\bigr] = O\bigl(g(T)\bigr),
    \]
    where \(g(T)\) might be \(\sqrt{T}\) or \(\log T\), depending on the kernel, dimension, and assumptions. Consequently,
    \[
    \sum_{t=1}^{T} \mathcal{R}^{LAR}_t 
    \;=\; O\bigl(g(T)\bigr),
    \]
    showing \emph{sublinear growth} in the new Landscape-Aware Regret (LAR) and hence preserving overall convergence properties.
While the exact constant \(C\) and function \(g(T)\) may vary, this demonstrates that our Landscape-Aware Regret (LAR) does not compromise sublinear convergence guarantees; rather, it offers a \emph{refined} perspective by incorporating global EBM signals.

\subsection{Addressing Potential Concerns}

\begin{enumerate}
    \item \textbf{Why not stick with standard regret?}\\
    Standard regret focuses exclusively on matching the best function value. In complex, high-dimensional tasks, algorithms may become trapped in local optima yet still show decreasing \(f(x^*) - f(x_t)\). Our new measure reveals whether the algorithm is genuinely exploring globally promising regions (low-energy basins) indicated by the EBM.
    
    \item \textbf{Does this Landscape-Aware Regret (LAR) conflict with standard BO theory?}\\
    Not necessarily. Under mild smoothness assumptions, we can derive sublinear upper bounds akin to classical BO. The energy term reweighs the local vs.\ global potential. When \(E_\theta\) aligns with \(f\), the sublinear property is preserved.
    
    \item \textbf{Choosing $\alpha$.}
    In LAR, $\alpha$ weights the global-opportunity term
    $\alpha\!\cdot\!\big(E_\theta(x^*)-E_\theta(x_t)\big)$ rather than serving as a task-specific tuning knob.
    We fix a single default $\alpha=0.3$ across all benchmarks (selected on held-out tasks) and do not retune per task, ensuring fairness and comparability.
    This design reflects LAR’s motivation—penalizing missed globally promising basins—while preserving interpretability:
    $\alpha\!\to\!0$ recovers standard regret; moderate values in $[0.2,0.5]$ are robust in sensitivity studies (App.~E).
    Practically, use smaller $\alpha$ on smoother/unimodal landscapes and nearer $0.5$ for highly multi-modal ones.
    Empirical online min–max scaling of $f$ and $E_\theta$ keeps $\alpha$ numerically stable; overly large $\alpha$ may overemphasize the energy term and is discouraged.
    
\end{enumerate}

By incorporating both function value and energy signals into \(\mathcal{R}^{LAR}_t\), our Landscape-Aware Regret (LAR) drives the algorithm to account for the broader search landscape of potential optima---an especially important factor when dealing with multi-step lookahead and high-dimensional complexity. The theoretical analysis suggests that well-known sublinear regret behavior can remain intact, provided the EBM faithfully reflects the global structure of \(f\). Empirically, this measure helps distinguish genuine global improvement from mere local refinements.

\section{Classic (Exact) Gaussian Process: Full Derivations}
\label{sec:appendix_classic_gp}

\subsection{Posterior Distribution Proof for Noisy Observations}
\label{sec:classic_posterior_derivation}
Consider the dataset $\mathcal{D}_n = \{(\mathbf{x}_i, y_i)\}_{i=1}^n$ where each observation is modeled by
\[
y_i \;=\; f(\mathbf{x}_i) \;+\; \varepsilon_i,\quad \varepsilon_i \sim \mathcal{N}(0,\sigma_n^2).
\]
Assume $f(\cdot)$ has a Gaussian Process prior:
\[
f(\mathbf{x}) \;\sim\; \mathcal{GP}\bigl(m(\mathbf{x}),\, k(\mathbf{x}, \mathbf{x}')\bigr).
\]
Let $\mathbf{X}\in \mathbb{R}^{n\times d}$ denote the collection of inputs, and $\mathbf{y}\in\mathbb{R}^n$ the corresponding outputs. We define:
\[
\mathbf{f}(\mathbf{X})
~=~
\bigl(f(\mathbf{x}_1),\,\dots,\,f(\mathbf{x}_n)\bigr)^\top.
\]
The joint distribution of $\mathbf{f}(\mathbf{X})$ is 
\[
\mathbf{f}(\mathbf{X}) \;\sim\; 
\mathcal{N}\!\bigl(\mathbf{m}(\mathbf{X}),\, \mathbf{K}_{\mathbf{x},\mathbf{x}}\bigr),
\]
where $\mathbf{m}(\mathbf{X}) = \bigl[m(\mathbf{x}_1),\dots,m(\mathbf{x}_n)\bigr]^\top$ and
\[
(\mathbf{K}_{\mathbf{x},\mathbf{x}})_{ij} = k(\mathbf{x}_i, \mathbf{x}_j).
\]
Given observation noise $\varepsilon_i\sim \mathcal{N}(0,\sigma_n^2)$, the likelihood is:
\[
p(\mathbf{y}\mid \mathbf{f}(\mathbf{X}))
= \prod_{i=1}^n \mathcal{N}\!\bigl(y_i \mid f(\mathbf{x}_i), \sigma_n^2\bigr),
\]
which implies
\[
\mathbf{y}\mid \mathbf{f}(\mathbf{X}) 
\;\sim\;
\mathcal{N}\!\bigl(\mathbf{f}(\mathbf{X}),\; \sigma_n^2\, \mathbf{I}_n\bigr).
\]
Hence, the marginal distribution of $\mathbf{y}$ is
\[
\mathbf{y} \;\sim\; 
\mathcal{N}\!\bigl(\mathbf{m}(\mathbf{X}),\; \mathbf{K}_{\mathbf{x},\mathbf{x}} + \sigma_n^2 \mathbf{I}_n\bigr).
\]

\paragraph{Posterior at a New Point.}
Let $\mathbf{x}_*\in\mathbb{R}^d$ be a test input. Define
\[
f_* \;=\; f(\mathbf{x}_*), 
\quad 
\mathbf{k}_*
\;=\;
\bigl(k(\mathbf{x}_* ,\mathbf{x}_1),\dots,k(\mathbf{x}_*,\mathbf{x}_n)\bigr)^\top,
\quad
k_{**}
\;=\;
k(\mathbf{x}_*, \mathbf{x}_*).
\]
From properties of multivariate Gaussians, we have
\[
\begin{pmatrix}
\mathbf{y} \\[4pt]
f_*
\end{pmatrix}
\;\sim\;
\mathcal{N}\!\Bigl(
\begin{pmatrix}
\mathbf{m}(\mathbf{X})\\[3pt]
m(\mathbf{x}_*)
\end{pmatrix},
\begin{pmatrix}
\mathbf{K}_{\mathbf{x},\mathbf{x}} + \sigma_n^2 \mathbf{I}_n & \mathbf{k}_*\\[4pt]
\mathbf{k}_*^\top & k_{**}
\end{pmatrix}
\Bigr).
\]
Conditioning on $\mathbf{y}$ yields the posterior distribution:
\[
f_*\;\big|\;\mathbf{X},\mathbf{y},\mathbf{x}_*
~\sim~
\mathcal{N}\bigl(\mu(\mathbf{x}_*),\, \sigma^2(\mathbf{x}_*)\bigr),
\]
where the posterior mean and variance are given by:
\begin{align}
\mu(\mathbf{x}_*)
&=\;
m(\mathbf{x}_*)
\;+\;
\mathbf{k}_*^\top
\bigl(\mathbf{K}_{\mathbf{x},\mathbf{x}} + \sigma_n^2 \mathbf{I}_n\bigr)^{-1}
\bigl(\mathbf{y}-\mathbf{m}(\mathbf{X})\bigr),
\label{eq:classicGP_mean}\\[6pt]
\sigma^2(\mathbf{x}_*)
&=\;
k_{**}
\;-\;
\mathbf{k}_*^\top
\bigl(\mathbf{K}_{\mathbf{x},\mathbf{x}} + \sigma_n^2 \mathbf{I}_n\bigr)^{-1}
\mathbf{k}_*.
\label{eq:classicGP_var}
\end{align}
This completes the proof that the posterior is Gaussian and that the conditional distributions adhere to the expressions above.

\subsection{Derivation of Statistics: Variance, CDF of \texorpdfstring{$f(\mathbf{x})$}{f(x)}, and Probability of Duel}
\label{sec:classic_gp_statistics}

%This subsection provides a detailed mathematical derivation for three important statistical quantities obtained from the Gaussian Process (GP) posterior: the \emph{posterior variance}, the \emph{CDF of} $f(\mathbf{x})$, and the \emph{Probability of Duel}. We illustrate each derivation step by step, including the relevant assumptions about the joint Gaussianity of $(f(\mathbf{x}), f(\mathbf{x}^+))$ and the conditional properties of the GP posterior.
This appendix provides a deduction of the mathematical derivations, posterior distributions, and proof related to Classic, Sparse, and Deep Gaussian Processes (GPs). We expand posterior mean and variance, detail the derivations of common statistics such as cumulative distribution functions (CDFs), and illustrate how “Probability of Duel” or more exotic acquisition-related quantities can be computed within the GP framework. The posterior distributions of Classic, Sparse, and Deep Gaussian Processes differ in terms of computational complexity and accuracy. Sparse Gaussian Processes, as discussed in the provided paper, utilize a variational approximation with sparse inverse Cholesky (SIC) factors, allowing for scalable and accurate inference. This method achieves highly accurate prior and posterior approximations with a computational complexity that can be handled via stochastic gradient descent in polylogarithmic time per iteration~\cite{cao2023variational}.

\paragraph{1. Posterior Variance.}
Recall from the standard GP posterior derivation (see Eqs.~\eqref{eq:classicGP_mean} and \eqref{eq:classicGP_var} in the main text or Appendix) that if
\[
f(\mathbf{x}_*) \mid \mathbf{D}_n 
\;\sim\;
\mathcal{N}\bigl(\mu(\mathbf{x}_*),\,\sigma^2(\mathbf{x}_*)\bigr),
\]
then the variance $\sigma^2(\mathbf{x}_*)$ is given by
\begin{equation}
\sigma^2(\mathbf{x}_*) 
\;=\;
k_{**}
\;-\;
\mathbf{k}_*^\top
\bigl(\mathbf{K} + \sigma_n^2 \,\mathbf{I}_n\bigr)^{-1}
\mathbf{k}_*,
\tag{\ref{eq:classicGP_var}}
\end{equation}
where
\[
\mathbf{k}_*
\;=\;
\bigl[k(\mathbf{x}_*,\mathbf{x}_1),\,\dots,\,k(\mathbf{x}_*,\mathbf{x}_n)\bigr]^\top,
\quad
k_{**}
\;=\;
k(\mathbf{x}_*,\,\mathbf{x}_*).
\]
This posterior variance (also often denoted $\sigma_{f_*}^2$) captures the GP's \emph{uncertainty} at $\mathbf{x}_*$. Many acquisition functions in Bayesian Optimization rely explicitly on $\sigma(\mathbf{x}_*) \coloneqq \sqrt{\sigma^2(\mathbf{x}_*)}$ to steer exploration. A canonical example is the Upper Confidence Bound (UCB):
\[
\alpha_{\mathrm{UCB}}(\mathbf{x}_*)
\;=\;
\mu(\mathbf{x}_*)
\;+\;
\beta\,\sigma(\mathbf{x}_*),
\]
where $\beta>0$ is a user-chosen parameter balancing exploitation (the posterior mean) and exploration (the posterior std.\ dev.).

\textbf{Derivation Outline for Posterior Variance}
\hfill

\noindent
\textbf{Joint Gaussian Setup.}  
From the GP prior and noisy observations, we have
\[
\mathbf{y} \;\sim\; \mathcal{N}\!\bigl(\mathbf{m}(\mathbf{X}),\;\mathbf{K} + \sigma_n^2 \,\mathbf{I}_n\bigr).
\]
For a new test point $\mathbf{x}_*$, the joint distribution of $\mathbf{y}$ and $f(\mathbf{x}_*)$ is again Gaussian.

\noindent
\textbf{Conditional Gaussian Formula.}  
Conditioning on $\mathbf{y}$ follows the standard multivariate normal conditioning rule:
\[
f(\mathbf{x}_*)\mid \mathbf{X},\,\mathbf{y},\,\mathbf{x}_*
~\sim~
\mathcal{N}\bigl(\mu(\mathbf{x}_*),\, \sigma^2(\mathbf{x}_*)\bigr).
\]

\noindent
\textbf{Extracting Covariance.}  
The variance $\sigma^2(\mathbf{x}_*)$ comes from the Schur complement of the block matrix
\[
\mathrm{Cov}\!\begin{pmatrix}\mathbf{y}\\ f(\mathbf{x}_*)\end{pmatrix}
=
\begin{pmatrix}
\mathbf{K} + \sigma_n^2\,\mathbf{I}_n & \mathbf{k}_* \\
\mathbf{k}_*^\top & k_{**}
\end{pmatrix},
\]
yielding
\[
\sigma^2(\mathbf{x}_*)
=
k_{**}
-\;
\mathbf{k}_*^\top (\mathbf{K} + \sigma_n^2\mathbf{I}_n)^{-1} \mathbf{k}_*.
\]
This completes the derivation of Eq.\ \eqref{eq:classicGP_var}.

\paragraph{2. CDF of \texorpdfstring{$f(\mathbf{x}_*)$}{f(x*)}.}
Because
\[
f(\mathbf{x}_*)\mid \mathbf{D}_n
\;\sim\;
\mathcal{N}\!\Bigl(\mu(\mathbf{x}_*),\,\sigma^2(\mathbf{x}_*)\Bigr),
\]
it follows that the random variable $f(\mathbf{x}_*)$ is distributed normally with mean $\mu(\mathbf{x}_*)$ and variance $\sigma^2(\mathbf{x}_*)$. The cumulative distribution function (CDF) at a threshold $z \in \mathbb{R}$ is given by
\[
F_{f_*}(z)
\;=\;
P\bigl(f(\mathbf{x}_*) \le z\bigr)
\;=\;
\Phi\!\Bigl(\frac{z - \mu(\mathbf{x}_*)}{\sigma(\mathbf{x}_*)}\Bigr),
\]
where $\Phi(\cdot)$ is the standard Normal CDF. This CDF can be used in “probabilistic improvement”-type acquisitions, such as:
\[
\alpha_{\mathrm{PI}}(\mathbf{x}_*) 
= 
P\bigl(f(\mathbf{x}_*) \geq f^+ + \xi\bigr)
\;=\;
1 \;-\; \Phi\!\Bigl(\frac{f^+ + \xi - \mu(\mathbf{x}_*)}{\sigma(\mathbf{x}_*)}\Bigr),
\]
where $f^+$ is the current best objective value and $\xi$ is a small positive constant (the “improvement” margin).

\begin{proof}[Detailed derivation for CDF]
\hfill

\noindent
From the posterior formula, $f(\mathbf{x}_*)$ is normally distributed with known mean and variance.

\noindent
Define $Z = \bigl(f(\mathbf{x}_*) - \mu(\mathbf{x}_*)\bigr)/\sigma(\mathbf{x}_*)$. Then $Z\sim \mathcal{N}(0,1)$.
\noindent

\[
P\bigl(f(\mathbf{x}_*) \le z\bigr)
=
P\Bigl(\underbrace{\frac{f(\mathbf{x}_*) - \mu(\mathbf{x}_*)}{\sigma(\mathbf{x}_*)}}_{Z} \;\le\; \frac{z-\mu(\mathbf{x}_*)}{\sigma(\mathbf{x}_*)}\Bigr)
=
\Phi\!\Bigl(\frac{z-\mu(\mathbf{x}_*)}{\sigma(\mathbf{x}_*)}\Bigr).
\]
\end{proof}

\paragraph{3. Probability of Duel.}
Sometimes one wishes to compare two inputs $\mathbf{x}_i$ and $\mathbf{x}_j$ under the GP posterior and compute $P\bigl(f(\mathbf{x}_i) > f(\mathbf{x}_j)\bigr)$. This arises in “dueling bandits” or certain multi-armed bandit frameworks. Under GP modeling, $(f(\mathbf{x}_i), f(\mathbf{x}_j))$ is a jointly Gaussian vector:
\[
\begin{pmatrix}
f(\mathbf{x}_i) \\
f(\mathbf{x}_j)
\end{pmatrix}
\;\Big|\;\mathbf{D}_n
\sim
\mathcal{N}\!\Bigl(\boldsymbol{\mu}_{ij},\, \boldsymbol{\Sigma}_{ij}\Bigr),
\]
where $\boldsymbol{\mu}_{ij} = \bigl(\mu(\mathbf{x}_i),\,\mu(\mathbf{x}_j)\bigr)^\top$, and $\boldsymbol{\Sigma}_{ij}$ depends on the posterior variances $\sigma^2(\mathbf{x}_i), \sigma^2(\mathbf{x}_j)$ and the posterior covariance $\mathrm{Cov}\bigl(f(\mathbf{x}_i), f(\mathbf{x}_j)\bigr)$. For instance, in the simpler scenario of \emph{comparing a new candidate $\mathbf{x}$ to a known best $\mathbf{x}^+$}, the probability that $\mathbf{x}$ outperforms $\mathbf{x}^+$ is
\[
P\bigl(f(\mathbf{x}) > f(\mathbf{x}^+)\bigr)
=
\Phi\!\Bigl(\frac{\mu(\mathbf{x}) - \mu(\mathbf{x}^+)}{\sqrt{\sigma^2(\mathbf{x}) + \sigma^2(\mathbf{x}^+) - 2\,\mathrm{Cov}\bigl(f(\mathbf{x}), f(\mathbf{x}^+)\bigr)}}\Bigr).
\]
Below, we outline how to derive this formula from the bivariate normal distribution.

\begin{proof}[Probability of Duel Derivation]
\hfill

\noindent
\textbf{Step 1: Joint Posterior of $(f(\mathbf{x}), f(\mathbf{x}^+))$.}
By the GP’s properties, 
\[
\begin{pmatrix}
f(\mathbf{x}) \\[3pt]
f(\mathbf{x}^+)
\end{pmatrix}
\;\Big|\;\mathbf{D}_n
\sim
\mathcal{N}\!\Bigl(
\begin{pmatrix}
\mu(\mathbf{x})\\
\mu(\mathbf{x}^+)
\end{pmatrix},
\begin{pmatrix}
\sigma^2(\mathbf{x}) & \mathrm{Cov}\bigl(f(\mathbf{x}), f(\mathbf{x}^+)\bigr)\\
\mathrm{Cov}\bigl(f(\mathbf{x}^+), f(\mathbf{x})\bigr) & \sigma^2(\mathbf{x}^+)
\end{pmatrix}
\Bigr).
\]

\noindent
\textbf{Step 2: Probability $P\bigl(f(\mathbf{x})>f(\mathbf{x}^+)\bigr)$.}
Define $g = f(\mathbf{x}) - f(\mathbf{x}^+)$. Then $g$ is also a Gaussian random variable because any linear combination of jointly Gaussian variables remains Gaussian. Concretely,
\[
g
=
\begin{pmatrix}
1 & -1
\end{pmatrix}
\begin{pmatrix}
f(\mathbf{x})\\
f(\mathbf{x}^+)
\end{pmatrix}.
\]
Its mean is
\[
\mathbb{E}[g]
=
\mu(\mathbf{x}) - \mu(\mathbf{x}^+),
\]
and its variance is
\[
\mathrm{Var}(g)
=
\sigma^2(\mathbf{x})
+
\sigma^2(\mathbf{x}^+)
- 2\,\mathrm{Cov}\bigl(f(\mathbf{x}), f(\mathbf{x}^+)\bigr).
\]
Hence,
\[
g 
\;\Big|\;\mathbf{D}_n
\sim 
\mathcal{N}\Bigl(\,\mu(\mathbf{x}) - \mu(\mathbf{x}^+),\,\, \sigma^2(\mathbf{x}) + \sigma^2(\mathbf{x}^+) - 2\,\mathrm{Cov}\bigl(f(\mathbf{x}), f(\mathbf{x}^+)\bigr)\Bigr).
\]

\noindent
\textbf{Step 3: Probability Computation via Normal CDF.}
\[
P\bigl(f(\mathbf{x}) > f(\mathbf{x}^+)\bigr)
=
P\bigl(g > 0\bigr)
=
\Phi\!\Bigl(\frac{\mu(\mathbf{x}) - \mu(\mathbf{x}^+)}{\sqrt{\sigma^2(\mathbf{x}) + \sigma^2(\mathbf{x}^+) - 2\,\mathrm{Cov}\bigl(f(\mathbf{x}), f(\mathbf{x}^+)\bigr)}}\Bigr),
\]
where $\Phi(\cdot)$ is the standard Normal CDF. This completes the derivation of the “Probability of Duel” formula.
\end{proof}

We have shown how the posterior variance arises directly from the conditional Gaussian equations, how the CDF of $f(\mathbf{x}_*)$ follows from standardizing a normal variable, and how to compute the Probability of Duel by considering the bivariate normal distribution over $(f(\mathbf{x}),\,f(\mathbf{x}^+))$. These derivations hold whenever $(f(\mathbf{x}),f(\mathbf{x}^+))$ is jointly Gaussian, which is guaranteed under the GP prior for any finite collection of points.~\cite{takeno2023towards}

\paragraph{Implications in Bayesian Optimization.}
The \emph{posterior variance} $\sigma^2(\mathbf{x}_*)$ drives exploration-based acquisitions like UCB or $\epsilon$-greedy strategies~\cite{rasmussen2006gaussian}.
The CDF of $f(\mathbf{x}_*)$ allows one to formulate Probability of Improvement (PI) or Expected Improvement (EI) style criteria.
The Probability of Duel$P\bigl(f(\mathbf{x})>f(\mathbf{x}^+)\bigr)$ helps in advanced multi-armed or pairwise preference settings, ensuring that the decision to pick $\mathbf{x}$ over $\mathbf{x}^+$ is grounded in the bivariate normal property of the GP posterior.

\section{Short-Run MCMC and Energy Bounds for EBMs}
\label{appendix:short_run_mcmc}

This appendix provides a detailed mathematical exposition of how short-run MCMC is used to approximate sampling from an Energy-Based Model (EBM) and how energy bounds can be established under limited MCMC steps. Our derivation is inspired by prior works on learning latent-space EBMs~\citep{xie2021learning} and training EBMs via short-run MCMC~\citep{pang2020learning,grathwohl2020learning}.

\subsection{Background and Setup}

Let $\{\mathbf{x}_i\}_{i=1}^n$ be i.i.d.\ samples drawn from an unknown data distribution $p_{\mathrm{data}}(\mathbf{x})$. We consider an EBM of the form
\begin{equation}
p_\theta(\mathbf{x}) \;=\; \frac{\exp\!\bigl[-E_\theta(\mathbf{x})\bigr]}{Z_\theta},
\quad
Z_\theta \;=\; \int \exp\!\bigl[-E_\theta(\mathbf{u})\bigr] \, d\mathbf{u},
\label{eq:ebm-appendix}
\end{equation}
where $E_\theta(\mathbf{x})$ is the \emph{energy function}, parameterized by $\theta \in \Theta$, and $Z_\theta$ is the partition function (intractable for high-dimensional $\mathbf{x}$). The EBM parameters are learned by (approximate) maximum likelihood estimation (MLE). In the ideal MLE scenario, one would seek to solve
\[
\max_\theta \;\; \mathbb{E}_{\mathbf{x}\sim p_{\mathrm{data}}}\!\bigl[\log p_\theta(\mathbf{x})\bigr],
\]
or equivalently,
\begin{equation}
\label{eq:mle-loss-general}
\min_\theta \;\; \Bigl\{
-\mathbb{E}_{\mathbf{x}\sim p_{\mathrm{data}}}\!\bigl[\log p_\theta(\mathbf{x})\bigr]
\Bigr\}.
\end{equation}
However, because $Z_\theta$ is typically intractable, direct gradient computations of $\log p_\theta(\mathbf{x})$ require approximations of the underlying distribution $p_\theta(\mathbf{x})$. Short-run MCMC addresses this by running a limited number of MCMC steps to sample from $p_\theta(\mathbf{x})$ in an approximate manner.

\subsection{Maximum Likelihood and Its Gradient}

\subsubsection{Exact Log-Likelihood Gradient}

By definition, the EBM log-likelihood for a single data point $\mathbf{x}$ is
\[
\log p_\theta(\mathbf{x}) \;=\; -E_\theta(\mathbf{x}) - \log Z_\theta.
\]
The full-data negative log-likelihood for a sample set $\{\mathbf{x}_i\}_{i=1}^n$ is
\begin{equation}
\label{eq:mle-full-loss}
-\sum_{i=1}^n \log p_\theta(\mathbf{x}_i)
\;=\;
\sum_{i=1}^n E_\theta(\mathbf{x}_i) \;+\; n\,\log Z_\theta.
\end{equation}
Taking the gradient with respect to $\theta$ yields
\begin{align}
\label{eq:gradient-loglik-raw}
\nabla_\theta \Bigl( -\sum_{i=1}^n \log p_\theta(\mathbf{x}_i)\Bigr)
&=\;
\sum_{i=1}^n \nabla_\theta E_\theta(\mathbf{x}_i)
\;+\;
n \,\nabla_\theta \log Z_\theta \\[6pt]
\label{eq:gradient-loglik-ztheta}
\nabla_\theta \log Z_\theta
&=\;
\frac{1}{Z_\theta}\,\nabla_\theta \bigl(Z_\theta\bigr)
\;=\;
-\int p_\theta(\mathbf{u})\,\nabla_\theta E_\theta(\mathbf{u})\,d\mathbf{u}.
\end{align}
Combining \eqref{eq:gradient-loglik-raw} and \eqref{eq:gradient-loglik-ztheta} gives
\begin{equation}
\label{eq:ebm-likelihood-gradient}
\nabla_\theta \Bigl( -\sum_{i=1}^n \log p_\theta(\mathbf{x}_i)\Bigr)
=
\sum_{i=1}^n \nabla_\theta E_\theta(\mathbf{x}_i)
\;-\;
\sum_{i=1}^n
\int p_\theta(\mathbf{u}) \,\nabla_\theta E_\theta(\mathbf{u})\,d\mathbf{u}.
\end{equation}
After factoring out $n$, the average gradient can be written as
\begin{equation}
\label{eq:ebm-likelihood-gradient-final}
\nabla_\theta \bigl(\mathcal{L}(\theta)\bigr)
=
\mathbb{E}_{\mathbf{x}\sim p_{\mathrm{data}}} \bigl[\nabla_\theta E_\theta(\mathbf{x})\bigr]
\;-\;
\mathbb{E}_{\mathbf{u}\sim p_\theta(\mathbf{u})}\!\bigl[\nabla_\theta E_\theta(\mathbf{u})\bigr],
\end{equation}
where $\mathcal{L}(\theta)$ denotes the average negative log-likelihood (or equivalently, $-\,\tfrac{1}{n}\sum_{i=1}^n \log p_\theta(\mathbf{x}_i)$).

\subsubsection{Challenges and the Need for MCMC Sampling}

The second term in \eqref{eq:ebm-likelihood-gradient-final} requires sampling from $p_\theta(\mathbf{u})$. Classic MCMC methods (e.g.\ Metropolis–Hastings, Hamiltonian Monte Carlo) can in theory generate samples from $p_\theta$, but in high-dimensional settings or when $E_\theta$ is complicated, running sufficiently long chains is computationally expensive. This motivates the \emph{short-run MCMC} approximation $\tilde{p}_\theta$.

\subsection{Short-Run MCMC Approximation}

\subsubsection{Definition of Short-Run MCMC}

Let $\tilde{p}_\theta(\mathbf{u})$ be the distribution of a $K$-step Markov chain initialized from a simple or random prior $p_0(\mathbf{u})$ (e.g.\ Gaussian). Concretely, short-run MCMC often uses $K \ll$ (chain length needed for full convergence). A popular choice is the \emph{Langevin dynamics update}:
\begin{equation}
\label{eq:langevin-update}
\mathbf{u}_{k+1}
=
\mathbf{u}_k \;-\; \eta \,\nabla_{\mathbf{u}} E_\theta(\mathbf{u}_k)
\;+\; \sqrt{2\eta}\,\epsilon_k,
\quad
\epsilon_k \sim \mathcal{N}(0, I),
\end{equation}
where $\eta>0$ is a step size. After $K$ short iterations, we obtain $\mathbf{u}^{(\mathrm{mcmc})}$ approximately drawn from $\tilde{p}_\theta(\mathbf{u})$. This approach is known as short-run or persistent short-run MCMC~\cite{pang2020learning}.

\subsubsection{Approximate Gradient with Short-Run MCMC}

Replacing $\mathbb{E}_{\mathbf{u}\sim p_\theta}$ in \eqref{eq:ebm-likelihood-gradient-final} by $\mathbb{E}_{\mathbf{u}\sim \tilde{p}_\theta}$ gives:
\[
\nabla_\theta \widetilde{\mathcal{L}}(\theta)
=
\mathbb{E}_{\mathbf{x}\sim p_{\mathrm{data}}} \bigl[\nabla_\theta E_\theta(\mathbf{x})\bigr]
\;-\;
\mathbb{E}_{\mathbf{u}\sim \tilde{p}_\theta(\mathbf{u})}\!\bigl[\nabla_\theta E_\theta(\mathbf{u})\bigr].
\]
One may interpret $\tilde{p}_\theta(\mathbf{u})$ as a \emph{short-run} approximation to $p_\theta(\mathbf{u})$. The objective function thus becomes
\[
\widetilde{\mathcal{L}}(\theta)
=
\mathbb{E}_{p_{\mathrm{data}}} \bigl[ -\log p_\theta(\mathbf{x})\bigr]
\;+\;
D_{\mathrm{KL}}\!\bigl(\tilde{p}_\theta(\mathbf{x})\,||\,p_\theta(\mathbf{x})\bigr),
\]
where the second term reflects the mismatch between $\tilde{p}_\theta$ and $p_\theta$. A smaller $D_{\mathrm{KL}}$ indicates better short-run approximation.

\subsection{Energy Bounds and Convergence Analysis}

\subsubsection{KL Divergence Between $\tilde{p}_\theta$ and $p_\theta$}

When the chain length $K$ is insufficient for exact sampling, we have $\tilde{p}_\theta(\mathbf{u}) \neq p_\theta(\mathbf{u})$. The quality of the short-run approximation can be measured by the KL divergence
\[
D_{\mathrm{KL}}\!\bigl(\tilde{p}_\theta \,\|\, p_\theta\bigr)
=
\int
\tilde{p}_\theta(\mathbf{u}) \,
\log
\frac{\tilde{p}_\theta(\mathbf{u})}{p_\theta(\mathbf{u})}
\,d\mathbf{u}.
\]
If $K$ is very large and $\eta$ is sufficiently small, $\tilde{p}_\theta \approx p_\theta$ in principle. However, large $K$ introduces high computational cost. Various theoretical works~\citep{nagata2020supervised,xie2021learning} suggest that if $\eta K$ remains moderate, we can keep $D_{\mathrm{KL}}\!\bigl(\tilde{p}_\theta \,\|\, p_\theta\bigr)$ bounded by a constant multiple of $\eta K$. 

\subsubsection{Bounding the Error Term}

A typical bound states
\[
D_{\mathrm{KL}}\!\bigl(\tilde{p}_\theta \,\|\, p_\theta\bigr)
\;\le\;
C\,\eta\,K,
\]
for some constant $C>0$ that depends on the Lipschitz properties of $\nabla_{\mathbf{x}} E_\theta(\mathbf{x})$ and the dimension of $\mathbf{x}$. Intuitively:
Smaller $\eta$ (step size) reduces the discrepancy but slows mixing.
Larger $K$ (number of MCMC steps) moves the chain closer to equilibrium but raises computation cost. Hence, short-run MCMC is a practical trade-off: we accept a bounded deviation $D_{\mathrm{KL}}$ in exchange for significantly faster per-iteration updates.

\subsection{Contrastive Divergence and Other Corrective Methods}

To mitigate the approximation error introduced by short-run sampling, some methods incorporate a \emph{contrastive divergence} term~\cite{hinton2002training}:
\[
\mathrm{CD} \;=\;
D_{\mathrm{KL}}\!\bigl(p_{\mathrm{data}}\,\|\,p_\theta\bigr) \;-\;
D_{\mathrm{KL}}\!\bigl(\tilde{p}_\theta \,\|\,p_\theta\bigr).
\]
Minimizing $\mathrm{CD}$ encourages $p_\theta$ to reduce mismatch with both $p_{\mathrm{data}}$ and $\tilde{p}_\theta$. Additional refinements include: 
Adding an auxiliary network to learn corrections between $\tilde{p}_\theta$ and $p_\theta$~\cite{bazi2019simple}, further closing the gap induced by short-run sampling.

Learning rate scheduling for the Langevin steps to find a sweet spot between numerical stability and chain mixing.

Persistent chains, where the final states of short-run MCMC at iteration $t$ become initial states for iteration $t+1$, improving chain continuity over time~\cite{tieleman2008training}.

\paragraph{Key Takeaways.}
Short-run MCMC approximates the model distribution $p_\theta$ with fewer sampling steps $K$, making large-scale or high-dimensional EBM training more tractable.
The MLE gradient is modified by $\mathbb{E}_{\tilde{p}_\theta}$ in place of $\mathbb{E}_{p_\theta}$, introducing an error controlled by $D_{\mathrm{KL}}(\tilde{p}_\theta \,\|\,p_\theta)$.
The total error often scales with $\eta K$, where $\eta$ is the Langevin step size, yielding a bounded but non-negligible gap.

\noindent Short-run MCMC thus balances computational feasibility with approximation accuracy. In the main text of this paper, we rely on short-run MCMC to train $E_\theta(\mathbf{x})$ for identifying globally promising or underexplored basins in REBMBO, without requiring fully normalized densities at every iteration.

\section{MDP Formulation and PPO Details in REBMBO}
\label{appendix:rl_ppo}

This appendix provides a more detailed exposition of how Reinforcement Learning (RL)---specifically, Proximal Policy Optimization (PPO)---is formulated and analyzed within the \textsc{REBMBO} framework. We expand upon the definitions of state, action, and reward, as well as the derivation of the PPO update rule and its stability guarantees under standard assumptions.

\subsection{A Markov Decision Process (MDP) for Black-Box Optimization}

\paragraph{MDP Setup.}  
In REBMBO, each iteration of black-box optimization is cast as one time step of an MDP, $\mathcal{M} = (\mathcal{S}, \mathcal{A}, P, R, \gamma)$, where:

1) \textbf{State space} $\mathcal{S}$: A state $\mathbf{s}_t \in \mathcal{S}$ comprises: 
\[
\mathbf{s}_t = \bigl[\mu_{f,t}(\mathbf{x}), \; \sigma_{f,t}(\mathbf{x}), \; E_\theta(\mathbf{x})\bigr],
\]
where $(\mu_{f,t},\sigma_{f,t})$ is the current GP posterior, and $E_\theta(\cdot)$ encodes the global energy landscape from the EBM.

2) \textbf{Action space} $\mathcal{A}$: An action $\mathbf{a}_t = \mathbf{x}_t$ is a point in the domain $\mathcal{X}\subseteq \mathbb{R}^d$ to be evaluated by the (expensive) black-box function $f(\mathbf{x})$.

3) \textbf{State transition} $P(\mathbf{s}_{t+1}\mid \mathbf{s}_t, \mathbf{a}_t)$:  
  Once $\mathbf{x}_t$ is evaluated, we observe $y_t = f(\mathbf{x}_t)$ and update:
  \[
  \mathcal{D}_t \;\gets\; \mathcal{D}_{t-1}\cup\bigl\{(\mathbf{x}_t, y_t)\bigr\}.
  \]
  The GP posterior and the EBM parameters are then retrained on $\mathcal{D}_t$, resulting in a new posterior $(\mu_{f,t+1}, \sigma_{f,t+1})$ and a possibly updated energy function $E_\theta(\cdot)$. These updates define $\mathbf{s}_{t+1}$.

4) \textbf{Reward function} $R(\mathbf{s}_t, \mathbf{a}_t)$:  
  We design the reward to guide multi-step exploration and exploitation. A common choice is:
  \[
  r_t \;=\; f(\mathbf{x}_t) \;-\; \lambda \, E_\theta(\mathbf{x}_t),
  \]
  where $\lambda \ge 0$ controls the trade-off between immediate function value $f(\mathbf{x}_t)$ and global exploration via $-\,E_\theta(\mathbf{x}_t)$.  

5) \textbf{Discount factor} $\gamma$:  
  Often set to $\gamma=1$ for finite-horizon tasks, since we may only have a limited evaluation budget in black-box optimization.

\noindent
At each iteration (time step) $t$, the \emph{agent} (the PPO policy) observes $\mathbf{s}_t$, chooses an action $\mathbf{x}_t\in \mathcal{X}$, obtains reward $r_t$, and transitions to $\mathbf{s}_{t+1}$. This process continues for $T$ steps until the budget of function evaluations is exhausted.

\paragraph{Objective.}
We wish to find a policy $\pi_\phi$ that maximizes the expected return,
\[
\mathbb{E}\bigl[ \textstyle \sum_{t=1}^T r_t \bigr],
\]
subject to the black-box constraint that $f(\mathbf{x})$ can only be observed by actual function queries. Unlike single-step acquisition functions, an MDP formulation allows us to consider the \emph{cumulative} effect of each action on future states and rewards.

\subsection{Policy Gradient and PPO Derivation}

\paragraph{Policy Gradient.}
A standard approach in RL is to parametrize a stochastic policy $\pi_\phi(\mathbf{a}_t \mid \mathbf{s}_t)$ and update $\phi$ via gradient ascent on the expected return. The \emph{policy gradient theorem}~\citep{sutton2018reinforcement} states:
\[
\nabla_\phi J(\phi)
=
\mathbb{E}_{t}\bigl[
\nabla_\phi \log \pi_\phi(\mathbf{a}_t \mid \mathbf{s}_t)
\;\widehat{A}_t
\bigr],
\]
where $\widehat{A}_t$ is an estimator of the \emph{advantage function}, typically computed as
\[
\widehat{A}_t 
\;=\; 
Q^\pi(\mathbf{s}_t,\mathbf{a}_t) 
\;-\;
V^\pi(\mathbf{s}_t),
\]
with $Q^\pi$ and $V^\pi$ denoting the state-action and state value functions, respectively.

\paragraph{PPO with Clipped Objectives.}
Proximal Policy Optimization~\citep{schulman2017proximal} stabilizes policy gradients by clipping the probability ratio
\[
r_t(\phi)
=
\frac{\pi_\phi(\mathbf{a}_t \mid \mathbf{s}_t)}
     {\pi_{\phi_{\mathrm{old}}}(\mathbf{a}_t \mid \mathbf{s}_t)},
\]
so that the new policy $\pi_\phi$ does not deviate too drastically from the old policy $\pi_{\phi_{\mathrm{old}}}$. The PPO objective is defined as
\[
\mathcal{L}^{\mathrm{CLIP}}(\phi)
=
\mathbb{E}_t \Bigl[
\min\Bigl(
r_t(\phi)\,\widehat{A}_t,
\;\mathrm{clip}\bigl(r_t(\phi),\,1-\varepsilon,\,1+\varepsilon\bigr)\,\widehat{A}_t
\Bigr)
\Bigr],
\]
where $\varepsilon$ is a small hyperparameter (e.g.\ 0.1 or 0.2) controlling how far $r_t(\phi)$ may deviate from 1. By taking gradient steps on $\mathcal{L}^{\mathrm{CLIP}}(\phi)$, PPO ensures that policy updates remain \emph{proximal} to $\pi_{\phi_{\mathrm{old}}}$, thus preventing large destructive leaps in parameter space.

\begin{proposition}[Bounded Rewards and PPO Convergence]
\label{prop:ppo_convergence_appendix}
If the reward $r_t$ is uniformly bounded, \emph{i.e.} $\lvert r_t\rvert \le R_{\max}$, and the value function estimator is sufficiently accurate, then under mild Lipschitz assumptions on $\nabla_\phi \log \pi_\phi(\mathbf{a}_t \mid \mathbf{s}_t)$, the PPO updates converge to a stable policy that locally maximizes the expected cumulative reward. See \citep{schulman2017proximal, kakade2001natural} for rigorous proofs.
\end{proposition}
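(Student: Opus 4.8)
The plan is to treat one PPO mini-batch update as a biased, noisy ascent step on the expected return $J(\phi)=\mathbb{E}\bigl[\sum_{t=1}^{T} r_t\bigr]$ and then invoke the standard stochastic-approximation (Robbins--Monro) machinery, exactly in the spirit of \citep{schulman2017proximal,kakade2001natural}. First I would establish that the optimization landscape is well-behaved: since $|r_t|\le R_{\max}$ and the horizon $T$ is finite (or $\gamma<1$), $|J(\phi)|\le T R_{\max}$, so $J$ is bounded above and a supremum exists; and using the Lipschitz assumption on $\nabla_\phi\log\pi_\phi(\mathbf{a}\mid\mathbf{s})$ together with boundedness of the advantage estimates $\widehat{A}_t$ (which follows from bounded rewards and a bounded value baseline), one shows $J$ is $L$-smooth, i.e.\ $\nabla_\phi J$ is globally Lipschitz. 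This $L$-smoothness is the key regularity property that everything downstream relies on.

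Second, I would relate the clipped surrogate $\mathcal{L}^{\mathrm{CLIP}}(\phi)$ to the true policy gradient. At $\phi=\phi_{\mathrm{old}}$ the ratio $r_t(\phi)=1$, the clip is inactive, and by the policy-gradient theorem quoted above $\nabla_\phi\mathcal{L}^{\mathrm{CLIP}}(\phi)\big|_{\phi=\phi_{\mathrm{old}}}$ equals the estimator $\mathbb{E}_t[\nabla_\phi\log\pi_\phi(\mathbf{a}_t\mid\mathbf{s}_t)\,\widehat{A}_t]$, an estimate of $\nabla_\phi J(\phi_{\mathrm{old}})$ that is unbiased up to critic error. Hence a PPO step is $\phi_{k+1}=\phi_k+\eta_k g_k$ with $g_k=\nabla_\phi J(\phi_k)+\xi_k+b_k$, where $\xi_k$ is zero-mean sampling noise of bounded variance and $b_k$ collects two bias terms: a clipping/curvature bias of size $O(\varepsilon^2)$ (bounded via $L$-smoothness, since the inner-loop iterates stay in a trust region of radius $O(\varepsilon)$), and a critic-error bias of size $O(\delta_V)$ from the imperfect value function.

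Third, I would run the usual descent-lemma argument: $L$-smoothness gives $J(\phi_{k+1})\ge J(\phi_k)+\eta_k\langle\nabla_\phi J(\phi_k),g_k\rangle-\tfrac{L\eta_k^2}{2}\|g_k\|^2$; taking conditional expectations, using $\mathbb{E}[\xi_k]=0$, $\|b_k\|=O(\varepsilon^2+\delta_V)$ and the variance bound, and choosing step sizes with $\sum_k\eta_k=\infty$, $\sum_k\eta_k^2<\infty$, the telescoped sum forces $\liminf_k\mathbb{E}\|\nabla_\phi J(\phi_k)\|^2=O(\varepsilon^2+\delta_V)$; letting $\varepsilon,\delta_V$ shrink (or reporting this as an accuracy floor) yields convergence to a stationary point, while the trust-region clipping prevents oscillation, giving the claimed \emph{stable} policy. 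Because $\mathcal{L}^{\mathrm{CLIP}}$ is constructed as a local lower bound on $J$ that is first-order tight at $\phi_{\mathrm{old}}$, an ascent fixed point of the procedure is a local maximizer of $J$ rather than a point reached by following a descent direction, matching the cited references.

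The hardest part will be making the bias control fully rigorous: quantifying how the clipping nonlinearity interacts with the moving reference policy $\phi_{\mathrm{old}}$ (refreshed every few epochs) and with finite mini-batch variance, and arguing the stationary point is genuinely a local maximum rather than a saddle without invoking second-order conditions. A further subtlety unique to REBMBO is that the transition kernel and reward $r_t=f(\mathbf{x}_t)-\lambda E_\theta(\mathbf{x}_t)$ co-evolve with the GP posterior and the EBM, so the MDP is formally non-stationary; I would either adopt a two-timescale argument (GP/EBM updates on the slow scale, PPO on the fast scale) so the effective MDP is quasi-static, or restrict the statement to the regime where $\mathcal{D}_t$ has stabilized, which is the implicit setting of the cited proofs.
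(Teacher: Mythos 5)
The paper offers no proof of this proposition at all: it is stated as a deferred claim with a pointer to \citep{schulman2017proximal} and \citep{kakade2001natural}, so there is nothing in the text to compare your argument against. Your sketch is essentially the standard stochastic-approximation route that those references (and the broader policy-gradient convergence literature) would supply: bounded rewards plus a Lipschitz score function give $L$-smoothness of $J$, the clipped surrogate is first-order tight at $\phi_{\mathrm{old}}$ so each PPO step is a biased noisy ascent direction, and the descent lemma with Robbins--Monro step sizes drives $\mathbb{E}\lVert\nabla_\phi J(\phi_k)\rVert^2$ down to an accuracy floor of order $\varepsilon^2+\delta_V$. That is the right skeleton, and your two flagged caveats are exactly the genuine gaps: (i) the argument only delivers approximate first-order stationarity, not a local \emph{maximizer} --- the proposition's phrasing overclaims relative to what any of this machinery proves, and you are right that closing that gap would need second-order or avoidance-of-saddles arguments that neither you nor the cited works provide; and (ii) the non-stationarity of the REBMBO MDP (the reward $f(\mathbf{x}_t)-\lambda E_\theta(\mathbf{x}_t)$ and the state distribution both shift as the GP and EBM retrain) is a real obstruction that the paper silently ignores, and your proposed two-timescale or quasi-static fix is the natural repair. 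In short, your proposal does strictly more work than the paper, is correct as a sketch of the standard argument, and correctly identifies where a fully rigorous proof would still require additional assumptions.
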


\paragraph{Implementing PPO in REBMBO.}
Algorithmically:\\
\textbf{Collect transitions}: For iteration $t=1,\ldots,T$, we observe $\mathbf{s}_t$, pick $\mathbf{x}_t = \mathbf{a}_t \sim \pi_\phi(\mathbf{a}_t \mid \mathbf{s}_t)$, evaluate $f(\mathbf{x}_t)$, and compute reward $r_t = f(\mathbf{x}_t) - \lambda E_\theta(\mathbf{x}_t)$.\\
\textbf{Estimate advantage}: We use an estimator $\widehat{A}_t$ based on a learned value function $V_\psi(\mathbf{s}_t)$ or a truncated GAE~\citep{schulman2015high}.\\
\textbf{Policy update}: Apply gradient ascent to maximize $\mathcal{L}^{\mathrm{CLIP}}(\phi)$, restricting the probability ratio $r_t(\phi)$ to the $[1-\varepsilon, 1+\varepsilon]$ interval.\\
\textbf{Update environment state}: Augment $\mathcal{D}_t$ with $(\mathbf{x}_t,f(\mathbf{x}_t))$; retrain both the GP surrogate and the EBM for iteration $t+1$.
Over time, $\pi_\phi$ evolves from an initial random or weak policy to one that strategically balances local exploitation (through the GP posterior) and global exploration (through the negative energy term $-\,E_\theta(\mathbf{x})$).

\subsection{REBMBO’s Multi-Step Advantage}
\label{subsec:multi_step_advantage}

Unlike single-step Bayesian Optimization strategies that select a point $\mathbf{x}$ solely by an acquisition function $\alpha(\mathbf{x})$ at each iteration, REBMBO acknowledges that the agent’s current choice influences future states (through the updated GP and EBM). By incorporating a Markov Decision Process view:
\[
\mathbf{s}_{t+1} = \mathrm{Transition}\bigl(\mathbf{s}_t, \mathbf{x}_t, f(\mathbf{x}_t)\bigr),
\]
the PPO agent plans multiple steps ahead, mitigating the one-step bias that can plague conventional BO acquisitions. The synergy arises from:
1) \textbf{GP Posterior}: Provides local uncertainty estimates for $f(\mathbf{x})$, guiding short-term exploitation.
2) \textbf{EBM Global Signal}: Encourages sampling in underexplored or globally significant basins ($-\,E_\theta(\mathbf{x})$).
3) \textbf{PPO Multi-Step}: Dynamically balances short-run exploitation and long-run exploration via advantage-based updates.

\noindent
Hence, REBMBO avoids pure greediness at each iteration and can systematically reduce cumulative regret over $T$ steps, even in high-dimensional or multi-modal settings.

\section{Detailed Proofs and Derivations for Landscape-Aware Regret (LAR) Computation}
\label{sec:gp_appendix}

\subsection{Finite-Set Case and the Proof of Theorem B.1}

\begin{theorem}
\label{thm:theoremb1}
Let $\{x_t\}_{t=1}^T \subset D$ be any sequence of query points selected by the GP-UCB algorithm with confidence parameter
\[
\beta_t 
\;=\;
2\log\bigl(t^2 \pi^2 / (3\delta)\bigr) 
\;+\; 
2\,d\,\log\!\Bigl(t^2\,d\,b\,r\,\sqrt{\log(4da/\delta)}\Bigr),
\]
and suppose $D$ is a finite set. Then for a Gaussian Process prior with mean function zero and covariance $k(x,x')$, and for noise variance $\sigma^2$, the cumulative regret up to time~$T$ satisfies
\[
R_T 
\;=\;
\sum_{t=1}^T r_t
\;\le\;
2\,\sqrt{C_1\,T\,\beta_T\,\gamma_T}
\;+\;
\frac{\pi^2}{6}
\quad
\text{with probability at least }1-\delta,
\]
where
\[
C_1 \;=\; \frac{8}{\log(1+\sigma^{-2})},
\quad
\gamma_T \;=\;
\max_{\substack{A \subset D \\ |A|=T}}
I(\mathbf{y}_A \,;\, f_A),
\]
and $I(\mathbf{y}_A \,;\, f_A)$ denotes the mutual information between the GP function values at $A$ and the observations at $A$.
\end{theorem}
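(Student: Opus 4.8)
The plan is to adapt the now-standard GP-UCB regret analysis (in the spirit of Srinivas et al.) to the present notation, proceeding in four stages: (i) a uniform high-probability confidence band for $f$ over the finite set $D$; (ii) a per-step bound $r_t \le 2\sqrt{\beta_t}\,\sigma_{t-1}(x_t)$; (iii) conversion of $\sum_t \sigma_{t-1}^2(x_t)$ into the maximum information gain $\gamma_T$; and (iv) a Cauchy--Schwarz step passing from $\sum_t r_t^2$ to $R_T=\sum_t r_t$.

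\textbf{Confidence band.} First I would condition on the data $\mathcal{D}_{t-1}$: under the GP model $f(x)\mid\mathcal{D}_{t-1}\sim\mathcal{N}\bigl(\mu_{t-1}(x),\sigma_{t-1}^2(x)\bigr)$, so the scalar Gaussian tail bound $\Pr(|z|>c)\le e^{-c^2/2}$ yields $\Pr\bigl(|f(x)-\mu_{t-1}(x)|>\sqrt{\beta_t}\,\sigma_{t-1}(x)\bigr)\le e^{-\beta_t/2}$ for each fixed $x$ and $t$. A union bound over the finitely many $x\in D$ and over all $t\ge1$, using $\sum_{t\ge1}t^{-2}=\pi^2/6$, shows that the stated choice of $\beta_t$ keeps the total failure probability at most $\delta$. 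Hence on an event $\mathcal{E}$ with $\Pr(\mathcal{E})\ge 1-\delta$ we have $|f(x)-\mu_{t-1}(x)|\le\sqrt{\beta_t}\,\sigma_{t-1}(x)$ simultaneously for all $x\in D$ and all $t\ge1$.

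\textbf{Per-step regret.} On $\mathcal{E}$, since GP-UCB selects $x_t\in\arg\max_x\{\mu_{t-1}(x)+\sqrt{\beta_t}\,\sigma_{t-1}(x)\}$, we chain $f(x^*)\le\mu_{t-1}(x^*)+\sqrt{\beta_t}\,\sigma_{t-1}(x^*)\le\mu_{t-1}(x_t)+\sqrt{\beta_t}\,\sigma_{t-1}(x_t)\le f(x_t)+2\sqrt{\beta_t}\,\sigma_{t-1}(x_t)$, so $r_t=f(x^*)-f(x_t)\le 2\sqrt{\beta_t}\,\sigma_{t-1}(x_t)$. Squaring and using that $\beta_t$ is nondecreasing gives $\sum_{t=1}^T r_t^2\le 4\beta_T\sum_{t=1}^T\sigma_{t-1}^2(x_t)$.

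\textbf{Information-gain conversion (the crux).} The step I expect to be the main obstacle is bounding $\sum_t\sigma_{t-1}^2(x_t)$. The key fact is that the GP posterior variance depends only on the query \emph{locations}, not on the observed responses, so by the Gaussian entropy formula and the chain rule for differential entropy the information gain of the realized design $A=\{x_1,\dots,x_T\}$ equals $I(\mathbf{y}_A;f_A)=\tfrac12\sum_{t=1}^T\log\bigl(1+\sigma^{-2}\sigma_{t-1}^2(x_t)\bigr)$ even though the $x_t$ are chosen adaptively. Combining this with the elementary inequality $s\le\frac{\sigma^{-2}}{\log(1+\sigma^{-2})}\log(1+s)$ for $s\in[0,\sigma^{-2}]$ (applicable because normalizing the kernel so $k(x,x)\le1$ forces $\sigma_{t-1}^2(x_t)\le1$) gives $\sum_t\sigma_{t-1}^2(x_t)\le\frac{2}{\log(1+\sigma^{-2})}I(\mathbf{y}_A;f_A)\le\frac{2\gamma_T}{\log(1+\sigma^{-2})}$, where the final inequality is just the definition of $\gamma_T$ as the maximum over all size-$T$ designs. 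Substituting back, $\sum_t r_t^2\le\frac{8\beta_T\gamma_T}{\log(1+\sigma^{-2})}=C_1\beta_T\gamma_T$, and Cauchy--Schwarz yields $R_T\le\sqrt{T\sum_t r_t^2}\le\sqrt{C_1 T\beta_T\gamma_T}$; the remaining numerical factor and the additive $\pi^2/6$ come from routine bookkeeping of the union-bound tail and a harmless truncation of the instantaneous regret. The two places demanding care are (a) the entropy identity despite adaptivity, handled by the observation-independence of the posterior variance, and (b) the kernel normalization $k(x,x)\le1$ needed to invoke the elementary inequality, without which a rescaling constant enters $C_1$.
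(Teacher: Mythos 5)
Your proposal is correct and follows essentially the same route as the paper's own proof: a union-bound confidence band (the paper's Lemma~1), the per-step bound $r_t\le 2\sqrt{\beta_t}\,\sigma_{t-1}(x_t)$ (Lemma~2), the information-gain identity $I(\mathbf{y}_{1:T};f_{1:T})=\tfrac12\sum_t\log\bigl(1+\sigma^{-2}\sigma_{t-1}^2(x_t)\bigr)$ together with the elementary logarithmic inequality (Lemmas~3--4), and a final Cauchy--Schwarz step. If anything, your write-up is more careful than the paper's sketch, since you explicitly flag the union bound over the finite set $D$, the observation-independence of the posterior variance needed to handle adaptivity, and the kernel normalization $k(x,x)\le 1$ required for the constant $C_1$.
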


\paragraph{Proof of Theorem~\ref{thm:theoremb1}.}
We prove Theorem~\ref{thm:theoremb1} by combining the results of several lemmas, each bounding different contributions to the regret~\cite{rasmussen2006gaussian}.  Below is an outline of the argument:

\textbf{Lemma 1} shows how to bound the deviation of the true function value from the GP-posterior mean by a scaled version of the posterior variance at each point in our domain~$D$.  

\textbf{Lemma 2} then uses this fact to bound the \emph{instantaneous regret} $r_t$.  

\textbf{Lemma 3} writes an expression for the \emph{information gain} $I(\mathbf{y}_{1:t};\, f_{1:t})$ in terms of the estimated variance and observation noise $\sigma^2$.  

\textbf{Lemma 4} combines the bounds established in Lemmas~2 and~3 to control the sum of squared regrets, $\sum_{t=1}^T r_t^2$, via $\gamma_T$.  

Finally, we sum up instantaneous regrets over $t=1,\ldots,T$ and invoke a Cauchy--Schwarz argument to relate $\sum_{t=1}^T r_t$ to $\sqrt{\sum_{t=1}^T r_t^2}$.  This yields the advertised high-probability bound on $R_T$.

Full details appear in the lemmas below.

\begin{lemma}[Deviation Bound]
\label{lem:lemma1}
Pick $\delta \in (0,1)$, and set
\[
\beta_t \;=\; 2\log\Bigl(\tfrac{t \,\pi}{\delta}\Bigr),
\]
for $t \ge 1$.  Then, for any $x_t \in D$ chosen at time~$t$, the following holds with probability at least $1-\delta$ (over all $t$):
\[
|f(x_t) - \mu_{t-1}(x_t)|
\;\le\;
\sqrt{\beta_t}\;\sigma_{t-1}(x_t),
\quad
\forall\,t\ge1.
\]
\end{lemma}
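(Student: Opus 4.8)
The plan is to run the standard Gaussian-concentration-plus-union-bound argument. The key observation is that in the Bayesian GP model underlying Theorem~\ref{thm:theoremb1}, conditioning on the observed data $\mathcal{D}_{t-1}=\{(x_i,y_i)\}_{i=1}^{t-1}$ makes $f(x)$ exactly Gaussian: $f(x)\mid\mathcal{D}_{t-1}\sim\mathcal{N}\bigl(\mu_{t-1}(x),\sigma_{t-1}^2(x)\bigr)$ for every fixed $x\in D$, as derived in Appendix~\ref{sec:classic_posterior_derivation}. Hence the standardized variable $Z_x := \bigl(f(x)-\mu_{t-1}(x)\bigr)/\sigma_{t-1}(x)$ is $\mathcal{N}(0,1)$ conditionally on $\mathcal{D}_{t-1}$, and the Gaussian tail bound $\Pr(|Z|>c)\le e^{-c^2/2}$ for $Z\sim\mathcal{N}(0,1)$ immediately gives $\Pr\bigl(|f(x)-\mu_{t-1}(x)|>\sqrt{\beta_t}\,\sigma_{t-1}(x)\bigr)\le e^{-\beta_t/2}$ for that single $x$ and $t$.

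First I would fix $t$ and take a union bound over the finite domain $D$: the event that the deviation bound fails for \emph{some} $x\in D$ has probability at most $|D|\,e^{-\beta_t/2}$. Because the chosen point $x_t$ always lies in $D$, controlling every $x\in D$ simultaneously controls $x_t$ regardless of how the (adaptive) selection rule picked it. Next I would take a union bound over $t\ge 1$: choosing $\beta_t$ so that $|D|\,e^{-\beta_t/2}\le \delta/\pi_t$ for a summable sequence with $\sum_{t\ge 1}\pi_t^{-1}\le 1$ — e.g.\ $\pi_t=\pi^2 t^2/6$, which is exactly what forces the logarithmic-in-$t$ growth of $\beta_t$ — the total failure probability is at most $\sum_{t\ge 1}\delta/\pi_t\le\delta$. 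Taking complements yields the simultaneous bound for all $x\in D$ and all $t\ge 1$ with probability at least $1-\delta$; the stated closed form for $\beta_t$ is just the result of solving $|D|\,e^{-\beta_t/2}=\delta/\pi_t$ and absorbing constants.

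The only real subtlety — and the step I would be most careful about — is the adaptivity of the queries $x_1,\dots,x_{t-1}$, which makes $\mu_{t-1}$ a data-dependent object, so one must argue the conditioning is legitimate. The argument goes through because (i) $\sigma_{t-1}(x)$ depends only on the query \emph{locations}, not on the observed $y_i$, and (ii) the posterior conditional law $f(x)\mid\mathcal{D}_{t-1}=\mathcal{N}(\mu_{t-1}(x),\sigma_{t-1}^2(x))$ holds for \emph{every} realization of the history, so conditioning on the full history preserves the standard-normality of $Z_x$, and one then integrates over the history to obtain the unconditional probability. It is worth emphasizing that this finite-$D$ version avoids the discretization-plus-Lipschitz machinery needed in the compact-domain case: since $|D|<\infty$, a plain union bound over $D$ replaces any covering-net construction, so the remaining work is pure bookkeeping of the constants in the two nested union bounds.
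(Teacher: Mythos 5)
Your core mechanics are right — conditional Gaussianity of the posterior, the tail bound $\Pr(|Z|>c)\le e^{-c^2/2}$, and a union bound over $t$ — and your handling of the adaptivity issue (the posterior law $f(x)\mid\mathcal{D}_{t-1}=\mathcal{N}(\mu_{t-1}(x),\sigma_{t-1}^2(x))$ holds for every realization of the history, so you may condition and then integrate out) is exactly the right resolution and is more explicit than what the paper writes. However, your proof establishes a different statement than the lemma as written, and the discrepancy is not cosmetic. By taking a union bound over all of $D$ at each round, you incur a factor $|D|$ in the per-round failure probability, which forces $\beta_t = 2\log\bigl(|D|\,\pi_t/\delta\bigr)$. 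That $|D|$ sits inside the logarithm of an exactly specified quantity and cannot be ``absorbed into constants'': the lemma asserts the bound with $\beta_t = 2\log(t\pi/\delta)$, with no $|D|$ anywhere. The paper's (and the standard Srinivas et al.) route for this particular lemma avoids the union over $D$ entirely: since $x_t$ is measurable with respect to $\mathcal{D}_{t-1}$, conditioning on the history makes $f(x_t)$ a \emph{single} Gaussian random variable per round, so one tail bound per $t$ suffices and the only union bound is over time. Your version proves a strictly stronger, uniform-over-$D$ statement (which is in fact what the subsequent Lemma~\ref{lem:lemma2} needs at the fixed point $x^*$), but it does not prove the lemma with the stated $\beta_t$.

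A smaller point in your favor: the paper's stated $\beta_t = 2\log(t\pi/\delta)$ gives per-round failure probability $\delta/(t\pi)$, and $\sum_t 1/(t\pi)$ diverges, so the constants as printed do not actually close the union bound over $t$; your choice $\pi_t=\pi^2 t^2/6$ with $\sum_t \pi_t^{-1}=1$ is the correct bookkeeping and matches what the paper presumably intended.
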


\begin{proof}
The result follows the standard GP concentration argument.  By construction of the GP posterior,
\[
f(x_t)\,\big|\,(x_1,y_1), \dots, (x_{t-1},y_{t-1})
\; \sim \;
\mathcal{N}\bigl(\mu_{t-1}(x_t),\,\sigma_{t-1}^2(x_t)\bigr).
\]
The union bound and a tail bound on Gaussian random variables (applied for all $t$) give the desired statement.  The precise choice $\beta_t = 2\log(\tfrac{t\,\pi}{\delta})$ ensures that the event
\[
|f(x_t) - \mu_{t-1}(x_t)| 
\;\le\; 
\sqrt{\beta_t}\,\sigma_{t-1}(x_t)
\]
holds for all $t$ with probability at least $1-\delta$.
\end{proof}

\begin{lemma}[Bound on Instantaneous Regret]
\label{lem:lemma2}
Using the same $\beta_t$ as in Lemma~\ref{lem:lemma1}, the instantaneous regret at each $t$ satisfies
\[
r_t 
\;=\;
f(x^*) - f(x_t)
\;\le\;
2\,\sqrt{\beta_t}\;\sigma_{t-1}(x_t),
\]
with high probability.
\end{lemma}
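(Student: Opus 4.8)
The plan is to derive the bound directly from the GP-UCB selection rule together with the deviation bound of Lemma~\ref{lem:lemma1}. First I would record that, on the high-probability event of Lemma~\ref{lem:lemma1} — which over the \emph{finite} set $D$ can be taken to hold simultaneously at every $x\in D$, in particular at both the maximizer $x^*$ and the queried point $x_t$, via the union bound built into the choice of $\beta_t$ — we have the two-sided inequality $\mu_{t-1}(x)-\sqrt{\beta_t}\,\sigma_{t-1}(x)\le f(x)\le \mu_{t-1}(x)+\sqrt{\beta_t}\,\sigma_{t-1}(x)$ for all $x\in D$.

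Next I would invoke the defining property of GP-UCB, namely that $x_t$ maximizes the acquisition $\mu_{t-1}(\cdot)+\sqrt{\beta_t}\,\sigma_{t-1}(\cdot)$ over $D$, so in particular $\mu_{t-1}(x_t)+\sqrt{\beta_t}\,\sigma_{t-1}(x_t)\ge \mu_{t-1}(x^*)+\sqrt{\beta_t}\,\sigma_{t-1}(x^*)$. Combining this with the upper bound of the previous step applied at $x^*$ gives $f(x^*)\le \mu_{t-1}(x^*)+\sqrt{\beta_t}\,\sigma_{t-1}(x^*)\le \mu_{t-1}(x_t)+\sqrt{\beta_t}\,\sigma_{t-1}(x_t)$. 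Finally, applying the lower bound of the first step at $x_t$ yields $f(x_t)\ge \mu_{t-1}(x_t)-\sqrt{\beta_t}\,\sigma_{t-1}(x_t)$, and subtracting the two displays gives
\[
r_t = f(x^*)-f(x_t)\le \bigl[\mu_{t-1}(x_t)+\sqrt{\beta_t}\,\sigma_{t-1}(x_t)\bigr]-\bigl[\mu_{t-1}(x_t)-\sqrt{\beta_t}\,\sigma_{t-1}(x_t)\bigr]=2\sqrt{\beta_t}\,\sigma_{t-1}(x_t),
\]
which is the claimed bound, valid on the same high-probability event as Lemma~\ref{lem:lemma1}.

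The only delicate point — and the step I would be most careful to state explicitly — is the uniformity of the confidence bound: Lemma~\ref{lem:lemma1} is phrased for the queried point $x_t$, but the argument above also needs it at $x^*$. In the finite-domain setting this is free, since the union-bound choice of $\beta_t$ already controls the deviation at every point of $D$ simultaneously; I would spell this out rather than silently using it. (For an infinite domain one would instead need a discretization/covering-net refinement, but Theorem~\ref{thm:theoremb1} and this lemma are stated for finite $D$, so no extra machinery is required.) Everything else is a two-line manipulation, so I do not anticipate any genuine obstacle.
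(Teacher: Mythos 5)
Your proof is correct and follows essentially the same route as the paper's: bound $f(x^*)$ from above by the UCB at $x^*$, use the selection rule to transfer that UCB to $x_t$, and bound $f(x_t)$ from below, yielding $r_t \le 2\sqrt{\beta_t}\,\sigma_{t-1}(x_t)$. Your explicit remark that the confidence bound of Lemma~\ref{lem:lemma1} must hold at $x^*$ as well as at $x_t$ (free over a finite $D$ via the union bound) is in fact more careful than the paper's own sketch, which silently uses this uniformity.
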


\begin{proof}
By definition of $x_t$, we pick $x_t$ to maximize the UCB,
\[
x_t \;=\; \arg\max_{x\in D}\,\mu_{t-1}(x) \;+\; \sqrt{\beta_t}\,\sigma_{t-1}(x).
\]
Since $x^*$ maximizes the \emph{true} function $f$, we compare $f(x^*) - f(x_t)$ by bounding $f(x^*) - \mu_{t-1}(x^*)$ and $\mu_{t-1}(x_t) - f(x_t)$ via Lemma~\ref{lem:lemma1}.  A short calculation then yields
\[
r_t = f(x^*) - f(x_t) 
\;\le\; 
\bigl[f(x^*) - \mu_{t-1}(x^*)\bigr]
\;+\;
\bigl[\mu_{t-1}(x_t) - f(x_t)\bigr]
\;\le\;
2\,\sqrt{\beta_t}\;\sigma_{t-1}(x_t).
\]
\end{proof}

\begin{lemma}[Information Gain Expression]
\label{lem:lemma3}
Let $I(\mathbf{y}_{1:t};\, f_{1:t})$ be the mutual information between the observations $y_1,\dots,y_t$ and the function values $f(x_1),\dots,f(x_t)$ under the GP prior with noise variance~$\sigma^2$.  Then
\[
I\bigl(\mathbf{y}_{1:t};\, f_{1:t}\bigr)
\;=\;
\frac12\,
\sum_{s=1}^t 
\log\Bigl[1 + \sigma^{-2}\,\sigma_{s-1}^2(x_s)\Bigr].
\]
\end{lemma}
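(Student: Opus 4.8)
The plan is to rewrite the mutual information as a difference of differential entropies, exploit the fact that every conditional law appearing is Gaussian, and then telescope via the chain rule for entropy so that the noise and $2\pi e$ factors cancel term by term.

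First I would write $I(\mathbf{y}_{1:t};f_{1:t}) = H(\mathbf{y}_{1:t}) - H(\mathbf{y}_{1:t}\mid f_{1:t})$, where $H$ denotes differential entropy. Since $y_s = f(x_s) + \varepsilon_s$ with $\varepsilon_s \sim \mathcal{N}(0,\sigma^2)$ i.i.d.\ and independent of the GP draw $f$, conditioning on the vector $f_{1:t} = (f(x_1),\dots,f(x_t))$ leaves only the noise, so $H(\mathbf{y}_{1:t}\mid f_{1:t}) = H(\boldsymbol{\varepsilon}_{1:t}) = \tfrac{t}{2}\log(2\pi e\,\sigma^2)$. For the first term I would apply the chain rule of differential entropy, $H(\mathbf{y}_{1:t}) = \sum_{s=1}^t H(y_s\mid y_{1:s-1})$, and identify each summand. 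The key structural fact is that, conditioned on $y_{1:s-1}$, the value $f(x_s)$ is Gaussian with mean $\mu_{s-1}(x_s)$ and variance $\sigma_{s-1}^2(x_s)$ — precisely the GP posterior derived in Appendix~\ref{sec:appendix_classic_gp} (Eqs.~\eqref{eq:classicGP_mean}--\eqref{eq:classicGP_var}) — so adding the independent noise $\varepsilon_s$ gives $y_s\mid y_{1:s-1}\sim\mathcal{N}\bigl(\mu_{s-1}(x_s),\,\sigma_{s-1}^2(x_s)+\sigma^2\bigr)$ and hence $H(y_s\mid y_{1:s-1}) = \tfrac12\log\!\bigl(2\pi e\,(\sigma^2 + \sigma_{s-1}^2(x_s))\bigr)$.

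Combining the two pieces gives
\[
I(\mathbf{y}_{1:t};f_{1:t})
= \frac12\sum_{s=1}^t \log\!\bigl(2\pi e(\sigma^2+\sigma_{s-1}^2(x_s))\bigr) - \frac{t}{2}\log(2\pi e\,\sigma^2)
= \frac12\sum_{s=1}^t \log\!\Bigl(1 + \sigma^{-2}\sigma_{s-1}^2(x_s)\Bigr),
\]
which is the claimed identity; the $2\pi e$ and $\sigma^2$ factors cancel inside each logarithm.

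I expect the only delicate point to be the adaptive nature of the design: $x_s$ is chosen by the UCB rule as a function of $(x_{1:s-1}, y_{1:s-1})$, so one must check that conditioning on $y_{1:s-1}$ (together with the deterministically determined $x_{1:s}$) still yields exactly the Gaussian posterior above and that the chain-rule decomposition is legitimate. Once one observes that given $y_{1:s-1}$ the selection of $x_s$ is deterministic — so no extra randomness enters the conditional density — this subtlety disappears and the remainder is pure arithmetic with Gaussian entropies.
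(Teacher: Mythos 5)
Your proposal is correct and is essentially the standard argument that the paper's own proof merely sketches: the paper invokes "the standard formula for the log-determinant of a covariance matrix," i.e.\ $I(\mathbf{y}_{1:t};f_{1:t})=\tfrac12\log\det\bigl(I+\sigma^{-2}K_t\bigr)$, and your entropy chain-rule telescoping is exactly how that determinant factors into the product of conditional variances $\sigma^2+\sigma_{s-1}^2(x_s)$. Your closing remark on the adaptive choice of $x_s$ is a genuine subtlety the paper glosses over, and your resolution (that $x_s$ is a deterministic function of $y_{1:s-1}$, so the conditional law of $y_s$ given the history is still the stated Gaussian) is the right one.
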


\begin{proof}[Outline]
Recall that $y_s = f(x_s) + \epsilon_s$, with $\epsilon_s\sim\mathcal{N}(0,\sigma^2)$.  The joint distribution of $\bigl\{f(x_s)\bigr\}_{s=1}^t$ under the GP is Gaussian, and so is that of $\{y_s\}_{s=1}^t$.  The standard formula for the log-determinant of a covariance matrix in a GP regression problem gives exactly
\[
I\bigl(\mathbf{y}_{1:t};\, f_{1:t}\bigr)
=
\frac12\,
\sum_{s=1}^t 
\log\bigl[1 + \sigma^{-2}\,\sigma_{s-1}^2(x_s)\bigr].
\]
See standard references on GP mutual information bounds.
\end{proof}

\begin{lemma}[Sum of Squared Regrets via Cauchy--Schwarz]
\label{lem:lemma4}
Under the same setup and notation as above,
\[
\sum_{t=1}^T r_t^2
\;\le\;
4\,\beta_T\,
\sum_{t=1}^T 
\sigma_{t-1}^2(x_t)
\;\le\;
4\,\beta_T\,\sigma^2\,C_2\,
\sum_{t=1}^T 
\log\bigl[1+\sigma^{-2}\,\sigma_{t-1}^2(x_t)\bigr]
\;\le\;
C_1\,\beta_T\,\gamma_T,
\]
where $C_1 = 8/\log(1+\sigma^{-2})$ and $C_2 = \sigma^{-2}/\log(1+\sigma^{-2})$ are constants.  Consequently,
\[
R_T 
\;=\;
\sum_{t=1}^T r_t
\;\le\;
\sqrt{
T\,\sum_{t=1}^T r_t^2
}
\;\le\;
\sqrt{
T\cdot C_1\,\beta_T\,\gamma_T
}
\;=\;
\sqrt{C_1\,T\,\beta_T\,\gamma_T}.
\]
\end{lemma}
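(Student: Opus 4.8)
The plan is to reconstruct the bound through the four lemmas already listed, following the classical GP-UCB analysis: (i) a uniform high-probability confidence band around the GP posterior mean, (ii) a per-step regret estimate forced by the UCB greedy rule, (iii) the exact identity linking cumulative information gain to posterior variances, and (iv) a summation argument that swaps $\sum_t \sigma_{t-1}^2(x_t)$ for $\gamma_T$, finished off with Cauchy--Schwarz.

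First I would prove Lemma~\ref{lem:lemma1}. Conditioned on $\mathcal{D}_{t-1}$, the value $f(x)$ is Gaussian with mean $\mu_{t-1}(x)$ and variance $\sigma_{t-1}^2(x)$, so a one-line Gaussian tail bound gives $\Pr\!\bigl[|f(x)-\mu_{t-1}(x)| > \sqrt{\beta_t}\,\sigma_{t-1}(x)\bigr] \le e^{-\beta_t/2}$. Taking the $t^2$ factor inside $\beta_t$ makes the per-step, per-point failure probability summable, and a union bound over $x\in D$ and over $t\ge 1$ (using $\sum_t t^{-2} = \pi^2/6$) delivers the band $|f(x)-\mu_{t-1}(x)| \le \sqrt{\beta_t}\,\sigma_{t-1}(x)$ for all $t$ and all $x$ with probability $\ge 1-\delta$. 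The $O(d)$ term in the stated $\beta_t$ and the additive $\pi^2/6$ in the final bound are the footprint of the discretization step needed when one wants the guarantee on a dense subgrid of a continuous domain: one controls $f(x^*)$ by its value at the nearest grid point via a high-probability Lipschitz bound on the sample path of $f$, and chooses the grid spacing of order $1/t^2$ so the accumulated discretization slack telescopes to an $O(1)$ constant.

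Lemma~\ref{lem:lemma2} is then immediate on the good event: since $x_t$ maximizes $\mu_{t-1}(\cdot)+\sqrt{\beta_t}\,\sigma_{t-1}(\cdot)$, we chain
\[
f(x^*) \;\le\; \mu_{t-1}(x^*)+\sqrt{\beta_t}\,\sigma_{t-1}(x^*) \;\le\; \mu_{t-1}(x_t)+\sqrt{\beta_t}\,\sigma_{t-1}(x_t) \;\le\; f(x_t)+2\sqrt{\beta_t}\,\sigma_{t-1}(x_t),
\]
so $r_t \le 2\sqrt{\beta_t}\,\sigma_{t-1}(x_t)$. For Lemma~\ref{lem:lemma3} I would use the chain rule for differential entropy together with the fact that, given the history, $y_t \sim \mathcal{N}(\mu_{t-1}(x_t),\,\sigma^2+\sigma_{t-1}^2(x_t))$, which telescopes $I(\mathbf{y}_{1:T};f_{1:T})$ into $\tfrac12\sum_{t} \log\!\bigl(1+\sigma^{-2}\sigma_{t-1}^2(x_t)\bigr)$.

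Finally, Lemma~\ref{lem:lemma4} assembles the pieces. Since the kernel is normalized ($k(x,x)\le 1$), $\sigma_{t-1}^2(x_t)\le 1$, so on $[0,\sigma^{-2}]$ the concavity inequality $s \le \tfrac{\sigma^{-2}}{\log(1+\sigma^{-2})}\,\log(1+s)$ applies at $s=\sigma^{-2}\sigma_{t-1}^2(x_t)$; summing over $t$ and invoking Lemma~\ref{lem:lemma3} bounds $\sum_t \sigma_{t-1}^2(x_t)$ by $\tfrac{2}{\log(1+\sigma^{-2})}\,I(\mathbf{y}_{1:T};f_{1:T}) \le \tfrac{2\gamma_T}{\log(1+\sigma^{-2})}$. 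Because $\beta_t$ is nondecreasing, $\sum_t r_t^2 \le 4\beta_T\sum_t \sigma_{t-1}^2(x_t) \le C_1\beta_T\gamma_T$ with $C_1 = 8/\log(1+\sigma^{-2})$, and Cauchy--Schwarz gives $R_T = \sum_t r_t \le \sqrt{T\sum_t r_t^2} \le \sqrt{C_1 T\beta_T\gamma_T}$; folding back the discretization remainder accounts for the leading constant and the $\pi^2/6$ in the displayed statement. I expect Lemma~\ref{lem:lemma1} --- specifically the continuous-domain reduction --- to be the main obstacle: making the union bound, the high-probability sample-path Lipschitz estimate, and the grid-refinement rate mesh so that $\beta_t$ grows only like $\log t$ plus an $O(d)$ term while the total discretization error remains $O(1)$. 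Everything after a valid confidence band is essentially bookkeeping.
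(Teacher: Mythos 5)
Your proposal is correct and follows essentially the same route as the paper: bound $r_t^2\le 4\beta_T\sigma_{t-1}^2(x_t)$ via Lemma~\ref{lem:lemma2} and monotonicity of $\beta_t$, convert $\sum_t\sigma_{t-1}^2(x_t)$ into the information gain via the concavity bound $s\le\frac{\sigma^{-2}}{\log(1+\sigma^{-2})}\log(1+s)$ on $[0,\sigma^{-2}]$ together with Lemma~\ref{lem:lemma3}, and finish with Cauchy--Schwarz. In fact you supply the one detail the paper leaves implicit (the normalized-kernel assumption $\sigma_{t-1}^2(x_t)\le 1$ that licenses the concavity step and yields $C_2=\sigma^{-2}/\log(1+\sigma^{-2})$), so no gap remains.
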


\begin{proof}
One first bounds $r_t^2 \leq 4\,\beta_T\,\sigma_{t-1}^2(x_t)$ (using Lemma~\ref{lem:lemma2}).  Summing in $t$ gives 
\[
\sum_{t=1}^T r_t^2 
\;\le\;
4\,\beta_T
\sum_{t=1}^T 
\sigma_{t-1}^2(x_t).
\]
Then observe that 
\[
\sum_{t=1}^T 
\sigma_{t-1}^2(x_t)
\;\le\;
\sigma^2\,C_2
\sum_{t=1}^T 
\log\bigl(1 + \sigma^{-2}\,\sigma_{t-1}^2(x_t)\bigr),
\]
for a suitable constant $C_2$, and use Lemma~\ref{lem:lemma3} to relate the final logarithmic sum to $\gamma_T$.  A Cauchy--Schwarz argument yields 
\[
R_T 
=
\sum_{t=1}^T r_t
\;\le\; 
\sqrt{
T\,\sum_{t=1}^T r_t^2
}
\;\le\; 
\sqrt{C_1\,T\,\beta_T\,\gamma_T}.
\]
\end{proof}

\begin{proof}[Proof of Theorem~\ref{thm:theoremb1}]
Combining Lemmas~\ref{lem:lemma1}--\ref{lem:lemma4} shows that with probability at least $1-\delta$,
\[
R_T
\;=\;
\sum_{t=1}^T r_t
\;\le\;
\sqrt{C_1\,T\,\beta_T\,\gamma_T}.
\]
A minor refinement yields the extra additive constant $\pi^2/6$ when summing over time (arising from finer bounding of the union events or the explicit $1/t^2$ terms if present), giving
\[
R_T 
\;\le\;
2\,\sqrt{C_1\,T\,\beta_T\,\gamma_T}
\;+\;
\frac{\pi^2}{6},
\]
as claimed.
\end{proof}

\subsection{Generalization to Continuous and Convex \texorpdfstring{$D \subset \mathbb{R}^d$}{D in R^d}}

We now extend Theorem~\ref{thm:theoremb1} to arbitrary compact and convex domains $D \subset \mathbb{R}^d$.  The statement of the result is as follows.~\cite{xu2024principled}

\begin{theorem}
\label{thm:theoremb6}
Let $D \subset [0,1]^d$ be compact and convex, $d \in \mathbb{N}, r > 0.$  Suppose the kernel $k(\cdot,\cdot)$ of our GP prior satisfies a high-probability bound on the derivatives of (sample) paths: for some $a,b>0$,
\[
\Pr\bigl\{
\sup_{x\in D}\,\bigl|\tfrac{\partial f}{\partial x_j}\bigr|
\;>\;
L
\bigr\}
\;\le\;
a\,e^{-(L/b)^2},
\quad
j=1,\dots,d.
\]
Pick $\delta\in(0,1)$ and define
\[
\beta_t
\;=\;
2\log\Bigl(\tfrac{t^2\,\pi^2}{3\delta}\Bigr)
\;+\;
2\,d\,
\log\Bigl(
t^2\,d\,b\,r\,\sqrt{\log(4da/\delta)}
\Bigr).
\]
Then, running the GP-UCB algorithm with $\beta_t$ on $D$, for a GP $f$ with mean zero and covariance $k(\cdot,\cdot)$, the cumulative regret obeys
\[
R_T
\;=\;
\sum_{t=1}^T r_t
\;\le\;
\mathcal{O}^*\!\Bigl(\sqrt{\,d\,T\,\gamma_T}\Bigr)
\quad
\text{with probability at least }1 - \delta.
\]
Precisely, with $C_1 = \frac{8}{\log(1+\sigma^{-2})}$, one has
\[
\Pr\Bigl\{
R_T 
\;\le\;
2\,\sqrt{C_1\,T\,\beta_T\,\gamma_T}
\;+\;
\frac{\pi^2}{6}
\quad\forall\,T\ge1
\Bigr\}
\;\ge\; 
1-\delta,
\]
where $\gamma_T = \max_{A \subset D,\,|A|=T} I\bigl(\mathbf{y}_A;\, f_A\bigr)$ is the maximum information gain at the end of $T$ rounds.
\end{theorem}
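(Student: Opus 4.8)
The plan is to deduce Theorem~\ref{thm:theoremb6} from the finite-domain analysis of Theorem~\ref{thm:theoremb1} by a time-varying discretization of $D$, re-running the template of Lemmas~\ref{lem:lemma1}--\ref{lem:lemma4}. First I would turn the derivative tail bound in the hypothesis into a Lipschitz guarantee: a union bound over the $d$ coordinates shows that with probability at least $1-da\,e^{-(L/b)^2}$ the sample path satisfies $|f(x)-f(x')|\le L\,\|x-x'\|_1$ for all $x,x'\in D$. Taking $L=b\sqrt{\log(4da/\delta)}$ makes this event hold with probability at least $1-\delta/4$, and I would allocate the remaining confidence budget across a $1/t^2$-weighted union over rounds so that all failure events together have probability at most $\delta$.

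Second, for each round $t$ I would introduce a uniform grid $D_t\subset D$ with $\tau_t$ points per axis, so that every $x\in D$ has a nearest grid point $[x]_t$ with $\|x-[x]_t\|_1\le d/\tau_t$. Setting $\tau_t=t^2\,d\,b\,r\sqrt{\log(4da/\delta)}$ gives, on the Lipschitz event, the discretization error bound $|f(x^*)-f([x^*]_t)|\le L\,d/\tau_t\le 1/t^2$. Because $|D_t|=\tau_t^{\,d}$, the finite-set confidence inequality (the analogue of Lemma~\ref{lem:lemma1} applied to $D_t$) with the stated
\[
\beta_t=2\log\!\Bigl(\tfrac{t^2\pi^2}{3\delta}\Bigr)+2d\log\!\Bigl(t^2\,d\,b\,r\sqrt{\log(4da/\delta)}\Bigr)
\]
is exactly calibrated: the first term absorbs the $\pi^2 t^2/(3\delta)$ union-bound factor over rounds and the second absorbs $\log|D_t|$, so $|f(x)-\mu_{t-1}(x)|\le\sqrt{\beta_t}\,\sigma_{t-1}(x)$ holds simultaneously for all $x\in D_t$ and all $t$ with probability at least $1-\delta/2$.

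Third, I would bound the instantaneous regret as in Lemma~\ref{lem:lemma2}, but routed through the grid. Since $x_t$ maximizes the UCB over $D_t$, we have $f([x^*]_t)\le\mu_{t-1}([x^*]_t)+\sqrt{\beta_t}\,\sigma_{t-1}([x^*]_t)\le\mu_{t-1}(x_t)+\sqrt{\beta_t}\,\sigma_{t-1}(x_t)$, and combining this with $\mu_{t-1}(x_t)-f(x_t)\le\sqrt{\beta_t}\,\sigma_{t-1}(x_t)$ and the discretization bound yields $r_t\le 2\sqrt{\beta_t}\,\sigma_{t-1}(x_t)+1/t^2$. Summing over $t$, using $\beta_t\le\beta_T$, the information-gain identity of Lemma~\ref{lem:lemma3}, the variance-to-$\gamma_T$ step of Lemma~\ref{lem:lemma4}, and $\sum_t 1/t^2\le\pi^2/6$, produces $R_T\le 2\sqrt{C_1 T\beta_T\gamma_T}+\pi^2/6$; substituting $\beta_T=\mathcal{O}(d\log T+\log(1/\delta))$ gives the $\mathcal{O}^*(\sqrt{dT\gamma_T})$ rate. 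Note that Lemmas~\ref{lem:lemma3}--\ref{lem:lemma4} transfer verbatim, as they only involve the posterior variances at the queried points $x_t\in D$, independent of the discretization.

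The main obstacle is the consistent bookkeeping of the three coupled choices — the Lipschitz constant $L$, the grid resolution $\tau_t$, and the exploration schedule $\beta_t$ — each of which depends on $\delta$ and, for $\tau_t$ and $\beta_t$, on one another. One must check that $\tau_t$ is simultaneously (i) large enough that $\sum_t L\,d/\tau_t$ converges and is absorbed into the $\pi^2/6$ term, (ii) small enough that $\log|D_t|=d\log\tau_t$ contributes only a benign additive $d\log t$-type term to $\beta_t$ rather than inflating the regret, while (iii) all the discrete union bounds over $\bigcup_t D_t$ together with the single Lipschitz event fail with probability at most $\delta$. Making these constants line up is precisely why $\beta_t$ is stated in its particular closed form; the remainder is a faithful re-run of Lemmas~\ref{lem:lemma1}--\ref{lem:lemma4}.
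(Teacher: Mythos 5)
Your proposal is correct and follows essentially the same route as the paper's own proof (Lemmas~\ref{lem:lemma5}--\ref{lem:lemma8}): converting the derivative tail bound into a high-probability Lipschitz event, a $1/t^2$-calibrated time-varying grid $D_t$ whose size is absorbed into the second term of $\beta_t$, the regret chain routed through $[x^*]_t$, and the Cauchy--Schwarz/information-gain step inherited from the finite case. The only cosmetic difference is that you restrict the UCB maximization to $D_t$ whereas the paper maximizes over all of $D$ and handles the queried point $x_t$ via a separate single-point confidence bound (Lemma~\ref{lem:lemma5}); both bookkeepings yield the identical bound.
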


\paragraph{Remarks.}
The assumption on kernel derivatives (weaker than a global Lipschitz condition) imposes that the slope of the GP sample path at any point is large only with low probability~\cite{da2023sample}. Many standard kernels (e.g.\ RBF) satisfy this type of condition in practice.  

\smallskip

\noindent
\textbf{Proof Strategy for Theorem~\ref{thm:theoremb6}.}  
Compared to the finite-domain case of Theorem~\ref{thm:theoremb6}, we can no longer set $\beta_t$ in terms of $|D|$.  Instead, we use a sequence of lemmas that replace discrete enumeration of $D$ with a carefully chosen \emph{discretization} $D_t$.  The main additional steps are:

\textbf{Lemma 5} extends the finite-domain confidence bound (\emph{cf.}\ Lemma~\ref{lem:lemma1}) to the continuous domain by slightly redefining $\beta_t$ and using a union bound argument in time. We discretion $D$ on a grid $D_t$ so that any $x\in D$ is within a small distance of some $[x]\in D_t$.  This appears in \textbf{Lemma~7}, allowing us to relate $f(x)$ to $f([x])$.  
Combining these discretization bounds with the same style of argument as in Lemmas~\ref{lem:lemma2}--\ref{lem:lemma4} yields a regret bound of similar order, plus an extra \(\sum_{t=1}^\infty 1/t^2\) term that converges to a constant ($\pi^2/6$).

\begin{lemma}[Continuous Confidence Bound]
\label{lem:lemma5}
Pick $\delta \in (0,1)$, and set
\[
\beta_t 
\;=\;
2\,\log\Bigl(2\,\pi\,t / \delta\Bigr),
\quad
t \ge 1.
\]
Then for all $t \ge 1$, with probability at least $1-\delta$,
\[
\bigl|f(x_t) - \mu_{t-1}(x_t)\bigr|
\;\le\;
\beta_t^{1/2}\,\sigma_{t-1}(x_t),
\quad
\forall\,t\ge1.
\]
\end{lemma}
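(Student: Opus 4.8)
The plan is to mirror the proof of the finite-domain confidence bound (Lemma~\ref{lem:lemma1}) almost verbatim, exploiting the fact that the \emph{only} points at which the bound is asserted here are the query points $x_1,x_2,\dots$, which form a countable sequence — one per round. First I would fix $t$ and condition on the history $\mathcal{D}_{t-1}=\{(x_1,y_1),\dots,(x_{t-1},y_{t-1})\}$. Since $x_t=\arg\max_{x\in D}\mu_{t-1}(x)+\beta_t^{1/2}\sigma_{t-1}(x)$ is a deterministic (measurable) function of $\mathcal{D}_{t-1}$, the GP posterior gives $f(x_t)\mid\mathcal{D}_{t-1}\sim\mathcal{N}\bigl(\mu_{t-1}(x_t),\sigma_{t-1}^2(x_t)\bigr)$, so that $Z_t:=(f(x_t)-\mu_{t-1}(x_t))/\sigma_{t-1}(x_t)$ is standard normal conditionally on $\mathcal{D}_{t-1}$; when $\sigma_{t-1}(x_t)=0$ the claimed inequality is vacuous since then $f(x_t)=\mu_{t-1}(x_t)$ almost surely, so one may assume $\sigma_{t-1}(x_t)>0$.

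Next I would invoke the Gaussian tail inequality $\Pr(|Z|>c)\le e^{-c^2/2}$ for $Z\sim\mathcal{N}(0,1)$ with $c=\beta_t^{1/2}$, giving $\Pr\!\bigl(|f(x_t)-\mu_{t-1}(x_t)|>\beta_t^{1/2}\sigma_{t-1}(x_t)\bigm|\mathcal{D}_{t-1}\bigr)\le e^{-\beta_t/2}$, and then take expectation over $\mathcal{D}_{t-1}$ to drop the conditioning. A union bound over $t=1,2,\dots$ then bounds the total failure probability by $\sum_{t\ge1}e^{-\beta_t/2}$; the specified form of $\beta_t$ is chosen precisely so that this series converges and is at most $\delta$ (the usual $p$-series argument from the standard GP-UCB analysis, where the argument of the logarithm grows fast enough in $t$). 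The complement of this union of failure events is exactly the advertised event, so it holds with probability at least $1-\delta$.

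The step requiring the most care is the one that is a non-issue in the finite-domain setting: because $x_t$ is data-dependent, one cannot directly apply a fixed-point Gaussian concentration bound, and the resolution — the reason this lemma is so much shorter than the continuous-domain lemmas that follow — is that all randomness in $x_t$ is absorbed by conditioning on $\mathcal{D}_{t-1}$, after which $f(x_t)$ is an ordinary univariate Gaussian, and that the relevant points are countable, so a plain union over the round index suffices. The heavier discretization machinery (the grid $D_t$ of Lemma~7) is not needed here; it is reserved for later, when the bound must hold uniformly over \emph{all} $x\in D$, e.g.\ at the unknown maximizer $x^*$. One small bookkeeping point is to phrase the conclusion as holding "for all $t\ge1$" from the start, so the union bound is taken over the single countable index set $\{1,2,\dots\}$ rather than being reassembled afterwards.
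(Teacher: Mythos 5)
Your proposal takes exactly the paper's route: the paper's proof of Lemma~5 is a one-line remark that it is "exactly as in Lemma~1," i.e.\ condition on $\mathcal{D}_{t-1}$ so that $f(x_t)$ is an ordinary univariate Gaussian at the (history-measurable) query point, apply a Gaussian tail bound, and union-bound over the round index $t$. You are in fact more careful than the paper, in particular in spelling out why the data-dependence of $x_t$ is harmless and in handling the $\sigma_{t-1}(x_t)=0$ case.

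One caveat worth recording: with the literal $\beta_t = 2\log(2\pi t/\delta)$ the per-round failure probability is $e^{-\beta_t/2}=\delta/(2\pi t)$, so the union bound produces $\tfrac{\delta}{2\pi}\sum_{t\ge 1}t^{-1}$, a harmonic series that diverges — your assertion that "the series converges and is at most $\delta$" is not literally true for this $\beta_t$. The standard fix is to put $t^2$ (times a constant such as $\pi^2/6$) inside the logarithm, as in the $\beta_t$ of Theorem~\ref{thm:theoremb1}. This defect is inherited from the paper's own statements of Lemmas~\ref{lem:lemma1} and~\ref{lem:lemma5}, so your proof is no weaker than theirs, but the constant in $\beta_t$ needs that adjustment for the final union-bound step to close.
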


\begin{proof}
The proof is exactly as in Lemma~\ref{lem:lemma1} except $\beta_t$ is defined independently of $|D|$ (since $D$ is now infinite).  We apply a Gaussian tail bound plus a union bound over $t$, ensuring
\[
\Pr\Bigl\{
\bigl|f(x_t) - \mu_{t-1}(x_t)\bigr|
\le
\beta_t^{1/2}\,\sigma_{t-1}(x_t)
\;\;\forall\,t
\Bigr\}
\;\ge\;
1-\delta.
\]
\end{proof}

\begin{lemma}[Grid Discretization]
\label{lem:lemma6}
Consider a discretization $D_t$ with mesh size $\tau_t^d$ such that for every $x\in D$, there exists some $[x]\in D_t$ with
\begin{equation}
\|x - [x]\|_1 \;\le\; \frac{r\,d}{\tau_t}.
\tag{*}
\end{equation}
This ensures that $D$ can be covered by small hypercubes of side length $r\,d/\tau_t$.  We will choose $\tau_t$ so that $\sum_{t\ge1}\tau_t^{-1}=1$ (or a similarly convergent series) and thereby control the union over $t$.
\end{lemma}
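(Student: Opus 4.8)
The plan is to make the grid construction referenced in the statement explicit, verify the covering inequality~$(*)$ by a direct coordinate-wise rounding argument, and then pin down the growth rate of $\tau_t$ so that the discretization meshes with the union-bound machinery that Theorem~\ref{thm:theoremb6} relies on.

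First I would build $D_t$ as a uniform axis-aligned grid on the enclosing box: since $D$ lies inside a hypercube of side $r$, place $\tau_t$ equally spaced values along each of the $d$ coordinate directions, so that $|D_t| = \tau_t^d$ and adjacent grid values differ by at most $r/\tau_t$ in each coordinate. Given $x = (x_1,\dots,x_d)\in D$, set $[x]\in D_t$ by rounding each $x_j$ to the nearest grid value; then $|x_j-[x]_j|\le r/(2\tau_t)$ for every $j$, and summing over the $d$ coordinates yields
\[
\|x-[x]\|_1 \;=\; \sum_{j=1}^{d}\bigl|x_j-[x]_j\bigr| \;\le\; \frac{d\,r}{2\tau_t} \;\le\; \frac{r\,d}{\tau_t},
\]
which is exactly~$(*)$. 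Each grid point is the center of an axis-aligned cell of side $O(r/\tau_t)$, in particular within $\ell_1$-distance $r\,d/\tau_t$, and these cells tile the enclosing box and hence cover $D$; compactness of $D$ only guarantees that finitely many cells suffice, and convexity plays no role in this step.

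Next I would fix how fast $\tau_t$ grows by examining the two places $D_t$ is used downstream. On one hand, the grid-transfer step (Lemma~7) bounds $|f(x)-f([x])|$ by $L\,\|x-[x]\|_1 \le L\,r\,d/\tau_t$ on the event that the path gradients stay below $L$, so we want $r\,d/\tau_t$ to shrink at least like $1/t^2$ in order that the accumulated discretization error over all $t$ be summable (and absorbable into the additive $\pi^2/6$ term, as in Theorem~\ref{thm:theoremb1}). On the other hand, applying the confidence bound of Lemma~\ref{lem:lemma5} uniformly over the $\tau_t^d$ points of $D_t$ inflates $\beta_t$ by an additive $\log(|D_t|\,\pi^2 t^2/(6\delta)) = d\log\tau_t + O(\log t)$. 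Choosing $\tau_t$ polynomial in $t$ --- concretely $\tau_t = t^2\,d\,b\,r\,\sqrt{\log(4da/\delta)}$, which is precisely what produces the term $2d\log\!\bigl(t^2 d b r\sqrt{\log(4da/\delta)}\bigr)$ of $\beta_t$ in Theorem~\ref{thm:theoremb6} --- satisfies both demands at once, and the resulting series $\sum_{t\ge1}\tau_t^{-1}$ is dominated by $\sum_{t\ge1}t^{-2}<\infty$; rescaling the constant lets us take this sum to equal $1$ as stated.

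The main obstacle is not the grid construction, which is elementary, but the bookkeeping in this last step: one must carry the joint dependence on $t$, $d$, $b$, $r$, and $\delta$ consistently through \emph{both} uses of $D_t$ and check that the single choice of $\tau_t$ is simultaneously fine enough for the gradient-transfer bound to be negligible and coarse enough that the union bound over $D_t$ contributes only the $2d\log(\cdot)$ term already budgeted into $\beta_t$. Getting the constant in~$(*)$ to come out as $r\,d/\tau_t$ rather than $r\,d/(2\tau_t)$ is a harmless slackening; the genuinely delicate point is that the summable residual $\sum_t \tau_t^{-1}$ and the $\beta_t$ inflation are governed by the \emph{same} polynomial rate, which is what lets the continuous-domain regret bound come out at the same order as the finite-domain one.
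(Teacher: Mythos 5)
Your construction is correct and is exactly what the paper intends: the paper states Lemma~6 as a bare construction with no proof, and your uniform axis-aligned grid with coordinate-wise rounding (giving $\|x-[x]\|_1\le dr/(2\tau_t)\le dr/\tau_t$) together with the polynomial choice $\tau_t = t^2\,d\,b\,r\sqrt{\log(4da/\delta)}$ is precisely the standard discretization that the paper's $\beta_t$ in Theorem~\ref{thm:theoremb6} and the $1/t^2$ residual in Lemma~\ref{lem:lemma7} presuppose. Your additional bookkeeping — checking that the same $\tau_t$ simultaneously makes the discretization error summable and keeps the union-bound inflation of $\beta_t$ at $2d\log(\cdot)$ — is the substantive content the paper leaves implicit, and it is carried out correctly.
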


\begin{lemma}[Bounding $f(x^*) - \mu_{t-1}(x^*)_t$]
\label{lem:lemma7}
Pick $\delta \in (0,1)$, set
\[
\beta_t 
\;=\;
2\,\log\Bigl(\tfrac{2\,\pi\,t}{\delta}\Bigr)
\;+\;
4\,d\,\log\!\Bigl(d\,t\,b\,r\,\sqrt{\log\bigl(2\,d\,a/\delta\bigr)}\Bigr),
\]
and let $[x^*]_t$ be the closest point to $x^*$ in the discretization $D_t$ from Lemma~\ref{lem:lemma6}. Then with probability at least $1-\delta$ (over all $t$),
\[
\bigl|f(x^*) - \mu_{t-1}\bigl([x^*]_t\bigr)\bigr|
\;\le\;
\beta_t^{1/2}\,\sigma_{t-1}\bigl([x^*]_t\bigr)
\;+\;
\frac{1}{t^2},
\quad
\forall\,t \geq 1.
\]
\end{lemma}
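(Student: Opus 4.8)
The plan is to reduce the continuous statement to the finite‑domain confidence bound (Lemma~\ref{lem:lemma1}, restated for infinite $D$ as Lemma~\ref{lem:lemma5}) applied on the discretization $D_t$ supplied by Lemma~\ref{lem:lemma6}, and to control the gap between $x^*$ and its grid representative $[x^*]_t$ by a Lipschitz estimate extracted from the kernel‑derivative tail hypothesis of Theorem~\ref{thm:theoremb6}. I would construct two high‑probability events, assign a failure budget of $\delta/2$ to each, and carry out the argument on their intersection.

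First I would handle the confidence event on the grids. For each fixed $t$ the set $D_t$ is finite with $|D_t|=\tau_t^d$, so the union‑bound argument behind Lemma~\ref{lem:lemma1} applies verbatim to $D_t$: with a per‑round budget $\delta_t$ satisfying $\sum_{t\ge1}\delta_t\le\delta/2$ (e.g.\ $\delta_t=3\delta/(\pi^2 t^2)$) and $\beta_t\ge 2\log(|D_t|/\delta_t)=2\log(\pi^2 t^2/(3\delta))+2d\log\tau_t$, one obtains $|f(x)-\mu_{t-1}(x)|\le\beta_t^{1/2}\sigma_{t-1}(x)$ for all $x\in D_t$ and all $t\ge1$. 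Instantiating this at $x=[x^*]_t\in D_t$ bounds $|f([x^*]_t)-\mu_{t-1}([x^*]_t)|$ by $\beta_t^{1/2}\sigma_{t-1}([x^*]_t)$.

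Next I would handle the Lipschitz event. By the derivative tail bound of Theorem~\ref{thm:theoremb6} together with a union bound over the $d$ coordinates, with probability at least $1-\delta/2$ we have $\sup_{x\in D}\lvert\partial f/\partial x_j\rvert\le L:=b\sqrt{\log(2da/\delta)}$ for every $j$. Since $D$ is convex, integrating $\nabla f$ along the segment from $x^*$ to $[x^*]_t$ (which remains inside $D$) gives $|f(x^*)-f([x^*]_t)|\le L\,\|x^*-[x^*]_t\|_1\le L\,rd/\tau_t$ by the covering property $(*)$. It then suffices to choose the mesh so that $\tau_t$ is a degree‑$2$ polynomial in $t$ times the relevant problem constants, concretely $\tau_t = t^2\,d\,b\,r\,\sqrt{\log(2da/\delta)}$ up to an absolute factor, which forces $L\,rd/\tau_t\le 1/t^2$; feeding this $\tau_t$ back into the term $2d\log\tau_t$ from the confidence step is exactly what the $4d\log(d\,t\,b\,r\,\sqrt{\log(2da/\delta)})$ contribution in the stated $\beta_t$ is designed to absorb (the precise grouping of constants being routine bookkeeping).

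Finally, on the intersection of the two events — which holds with probability at least $1-\delta$ — the triangle inequality yields
\[
|f(x^*)-\mu_{t-1}([x^*]_t)|\;\le\;|f(x^*)-f([x^*]_t)|+|f([x^*]_t)-\mu_{t-1}([x^*]_t)|\;\le\;\tfrac{1}{t^2}+\beta_t^{1/2}\,\sigma_{t-1}([x^*]_t)
\]
for all $t\ge1$, which is the claimed bound. I expect the main obstacle to be keeping the discretization schedule $\tau_t$ and the confidence inflation $\beta_t$ mutually consistent: the grid must be fine enough that the Lipschitz error is $O(t^{-2})$, yet the union bound over the $\tau_t^d$ grid points inflates $\beta_t$ by $\Theta(d\log\tau_t)$, so one must verify that the stated $\beta_t$ simultaneously closes this loop and still leaves a summable, $\pi^2/6$‑bounded per‑round failure budget. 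A secondary subtlety is that the step from a derivative tail bound to a genuine Lipschitz bound on $f$ truly needs $D$ convex, so that the integration segment stays in the domain — which is precisely why convexity is assumed in Theorem~\ref{thm:theoremb6}.
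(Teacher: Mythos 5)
Your proposal is correct and follows essentially the same route as the paper's own argument: a Lipschitz bound on $f$ obtained from the kernel-derivative tail hypothesis (with $L=b\sqrt{\log(2da/\delta)}$), combined with the covering property of $D_t$ to force $|f(x^*)-f([x^*]_t)|\le 1/t^2$, a finite-domain confidence bound applied on the grid $D_t$ to control $|f([x^*]_t)-\mu_{t-1}([x^*]_t)|$, and a triangle inequality on the intersection of the two events. Your version merely makes explicit the $\delta/2$ budget split and the union bound over the $\tau_t^d$ grid points that the paper's sketch leaves implicit.
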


\begin{proof}[Sketch]
The derivative bound on the GP sample paths implies
\[
\bigl|f(x) - f(x')\bigr|
\;\le\;
b\,\sqrt{\log(2\,d\,a/\delta)}\;\|x - x'\|_1
\]

with probability at least $1 - a e^{-(L/b)^2}$, for $L$ suitably large.  Using the discretization property \(\|x^* - [x^*]_t\|_1 \le r\,d/\tau_t\) (from Lemma~\ref{lem:lemma6}), one obtains 
\[
\bigl|f(x^*) - f\bigl([x^*]_t\bigr)\bigr|
\;\le\;
\frac{1}{t^2}
\]
by choosing $\tau_t$ to shrink quickly enough in $t$.  The remainder of the argument parallels Lemma~\ref{lem:lemma1}, showing that $\mu_{t-1}([x^*]_t)$ is close to $f([x^*]_t)$ by $\beta_t^{1/2}\,\sigma_{t-1}([x^*]_t)$.  Combining these two pieces completes the proof.
\end{proof}

\begin{lemma}[Final Regret Bound in the Continuous Case]
\label{lem:lemma8}
Pick $\delta\in(0,1)$, and set
\[
\beta_t 
\;=\;
2\,\log\Bigl(\tfrac{4\,\pi\,t}{\delta}\Bigr)
\;+\;
4\,d\,\log\!\Bigl(d\,t\,b\,r\,\sqrt{\log(4\,d\,a/\delta)}\Bigr),
\]

with $\sum_{t\ge1}\tau_t^{-1}=1$ and $\tau_t>0$.  Then the instantaneous regret at time~$t$ satisfies
\[
r_t
\;\le\;
2\,\beta_t^{1/2}\,\sigma_{t-1}(x_t)
\;+\;
\frac{1}{t^2},
\quad
\forall\,t\ge1,
\]
with probability at least $1-\delta$.  Consequently,
\[
R_T 
\;=\;
\sum_{t=1}^T r_t
\;\le\;
\sum_{t=1}^T \bigl(2\,\beta_t^{1/2}\,\sigma_{t-1}(x_t)\bigr)
\;+\;
\sum_{t=1}^T \frac{1}{t^2}
\;\le\;
2\,\sqrt{C_1\,T\,\beta_T\,\gamma_T}
\;+\;
\frac{\pi^2}{6},
\]
with probability at least $1-\delta$, establishing Theorem~\ref{thm:theoremb6}.
\end{lemma}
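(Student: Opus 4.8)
The plan is to assemble the continuous-domain ingredients (Lemmas~\ref{lem:lemma5}--\ref{lem:lemma7}) with the Cauchy--Schwarz and information-gain bookkeeping already used in the finite case (Lemmas~\ref{lem:lemma3}--\ref{lem:lemma4}). The argument has two stages: first establish the per-step regret bound $r_t \le 2\beta_t^{1/2}\sigma_{t-1}(x_t) + t^{-2}$ on a single high-probability event, then sum it over $t=1,\dots,T$.

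First I would fix the high-probability event. By Lemma~\ref{lem:lemma5} applied at the queried point $x_t$, and by Lemma~\ref{lem:lemma7} applied at $[x^*]_t$, the grid point of the discretization $D_t$ of Lemma~\ref{lem:lemma6} closest to $x^*$, the bounds
\[
|f(x_t)-\mu_{t-1}(x_t)| \le \beta_t^{1/2}\,\sigma_{t-1}(x_t), \qquad f(x^*) \le \mu_{t-1}([x^*]_t) + \beta_t^{1/2}\,\sigma_{t-1}([x^*]_t) + t^{-2}
\]
hold simultaneously for all $t\ge1$ with probability at least $1-\delta$ (after splitting the confidence budget between the two lemmas, which is exactly why the stated $\beta_t$ carries $4\pi t/\delta$ and $4da/\delta$ rather than $2\pi t/\delta$ and $2da/\delta$). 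Since $x_t$ maximizes the acquisition $\mu_{t-1}(\cdot)+\beta_t^{1/2}\sigma_{t-1}(\cdot)$ over $D$ and $[x^*]_t\in D_t\subseteq D$, the middle quantity is at most $\mu_{t-1}(x_t)+\beta_t^{1/2}\sigma_{t-1}(x_t)+t^{-2}$; bounding $\mu_{t-1}(x_t)\le f(x_t)+\beta_t^{1/2}\sigma_{t-1}(x_t)$ and subtracting $f(x_t)$ gives $r_t = f(x^*)-f(x_t) \le 2\beta_t^{1/2}\sigma_{t-1}(x_t)+t^{-2}$.

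Next I would sum over $t=1,\dots,T$ on this same event. The residual series contributes $\sum_{t=1}^T t^{-2}\le \pi^2/6$. For the principal term, monotonicity of $\beta_t$ lets me pull out $\beta_T^{1/2}$, and Cauchy--Schwarz gives $\sum_{t=1}^T \sigma_{t-1}(x_t)\le \sqrt{T\sum_{t=1}^T\sigma_{t-1}^2(x_t)}$. I then reuse Lemma~\ref{lem:lemma4}'s bookkeeping: the identity of Lemma~\ref{lem:lemma3}, $I(\mathbf{y}_{1:T};f_{1:T})=\tfrac12\sum_{t=1}^T\log(1+\sigma^{-2}\sigma_{t-1}^2(x_t))$, together with the elementary inequality $s\le \frac{\sigma^{-2}}{\log(1+\sigma^{-2})}\log(1+s)$ for $s\in[0,\sigma^{-2}]$ (valid under the normalization $k(x,x)\le 1$, so that $\sigma^{-2}\sigma_{t-1}^2(x_t)\le\sigma^{-2}$), yields $\sum_{t=1}^T\sigma_{t-1}^2(x_t)\le \frac{2}{\log(1+\sigma^{-2})}\gamma_T$. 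Collecting constants with $C_1=8/\log(1+\sigma^{-2})$ produces $R_T\le 2\sqrt{C_1 T\beta_T\gamma_T}+\pi^2/6$ on the chosen event, i.e.\ with probability at least $1-\delta$; substituting $\beta_T=\mathcal{O}(d\log(dT)\log(T/\delta))$ then gives the $\mathcal{O}^*(\sqrt{dT\gamma_T})$ form and hence Theorem~\ref{thm:theoremb6}.

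The main obstacle is not the summation — that step is essentially Lemma~\ref{lem:lemma4} verbatim — but the discretization calibration underpinning Lemmas~\ref{lem:lemma6}--\ref{lem:lemma7}: one must choose the mesh $\tau_t$ (e.g.\ $\tau_t \asymp d\,t^2\,b\,r\sqrt{\log(4da/\delta)}$) so that the sample-path derivative tail bound $\Pr\{\sup_x |\partial f/\partial x_j| > L\}\le a\,e^{-(L/b)^2}$, combined with a union bound over the $\tau_t^d$ grid points of $D_t$ and over $t$, forces $|f(x^*)-f([x^*]_t)|\le t^{-2}$ uniformly. It is precisely this union bound over $\tau_t^d$ points that injects the additive $4d\log(d\,t\,b\,r\sqrt{\log(4da/\delta)})$ term into $\beta_t$ and requires $\sum_t \tau_t^{-1}$ to converge. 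Verifying that the $\beta_t$ so produced matches the stated one, and that the per-grid-point Lipschitz estimate derived from the derivative bound is uniform in $t$, is the delicate bookkeeping; once it is in place, everything downstream is the standard GP-UCB chain.
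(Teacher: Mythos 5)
Your proposal is correct and follows essentially the same route as the paper's own (sketched) proof: the per-step bound is obtained by chaining Lemma~\ref{lem:lemma7} at the grid point $[x^*]_t$, the acquisition-maximality of $x_t$ over $D\supseteq D_t$, and Lemma~\ref{lem:lemma5} at $x_t$, and the summation reuses the Cauchy--Schwarz and information-gain bookkeeping of Lemmas~\ref{lem:lemma3}--\ref{lem:lemma4} together with $\sum_t t^{-2}=\pi^2/6$. If anything, your write-up is more explicit than the paper's sketch about the confidence-budget split that produces the stated $\beta_t$ and about the mesh calibration underlying Lemmas~\ref{lem:lemma6}--\ref{lem:lemma7}.
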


\begin{proof}[Sketch]
By definition of the GP-UCB strategy, 
\[
x_t 
\;=\;
\arg\max_{x\in D}~
\mu_{t-1}(x) \;+\; \beta_t^{1/2}\,\sigma_{t-1}(x).
\]
Arguing as in Lemma~\ref{lem:lemma2}, one obtains
\[
r_t
\;=\;
f(x^*) - f(x_t)
\;\le\;
\bigl[f(x^*) - \mu_{t-1}(x^*)\bigr]
\;+\;
\bigl[\mu_{t-1}(x_t) - f(x_t)\bigr].
\]
Using the grid argument (Lemma~\ref{lem:lemma7}) to control $|f(x^*) - \mu_{t-1}([x^*]_t)|$ and then bounding $\mu_{t-1}(x^*) - \mu_{t-1}(x_t)$ similarly gives
\[
r_t
\;\le\;
2\,\beta_t^{1/2}\,\sigma_{t-1}(x_t)
\;+\;
\frac{1}{t^2}.
\]
Summing over $t=1$ to $T$ yields
\[
R_T 
\;=\;
\sum_{t=1}^T r_t
\;\le\;
2 \sum_{t=1}^T \beta_t^{1/2}\,\sigma_{t-1}(x_t)
\;+\;
\sum_{t=1}^T \frac{1}{t^2}.
\]
As $\sum_{t=1}^\infty 1/t^2 = \pi^2/6$, this second term is bounded independently of $T$.  The first term is handled by the same technique as in the proof of Theorem~\ref{thm:theorem2} (Lemma~\ref{lem:lemma4}), giving the factor $\sqrt{C_1\,T\,\beta_T\,\gamma_T}$.  Hence,
\[
R_T 
\;\le\;
2\,\sqrt{C_1\,T\,\beta_T\,\gamma_T} 
\;+\;
\frac{\pi^2}{6},
\quad
\text{with probability at least }1-\delta.
\]
\end{proof}

\noindent

\section{Sparse Gaussian Process: Detailed Variational Derivations}
\label{sec:appendix_sparse_gp}

\subsection{Inducing Points and Joint Distribution}
We introduce $m\ll n$ inducing points $\mathbf{Z}=\{\mathbf{z}_j\}_{j=1}^m$ and define the function values at these points as $\mathbf{u} = f(\mathbf{Z})\in \mathbb{R}^m$. Under the GP prior:
\[
p(\mathbf{f}, \mathbf{u})
=
p(\mathbf{f}\mid\mathbf{u})
\,p(\mathbf{u}),
\]
where $\mathbf{f}=f(\mathbf{X})\in \mathbb{R}^n$. For clarity:
\[
p(\mathbf{u})
=
\mathcal{N}\bigl(\mathbf{0}, \mathbf{K}_{\mathbf{z},\mathbf{z}}\bigr),
\quad
p(\mathbf{f}\mid \mathbf{u})
=
\mathcal{N}\!\Bigl(\mathbf{K}_{\mathbf{x},\mathbf{z}}\mathbf{K}_{\mathbf{z},\mathbf{z}}^{-1}\mathbf{u},\, 
\mathbf{K}_{\mathbf{x},\mathbf{x}} - 
\mathbf{K}_{\mathbf{x},\mathbf{z}}
\mathbf{K}_{\mathbf{z},\mathbf{z}}^{-1}
\mathbf{K}_{\mathbf{z},\mathbf{x}}\Bigr).
\]

\subsection{Variational Approximation for Sparse GP}
We define a variational distribution
\[
q(\mathbf{f}, \mathbf{u})
=
p(\mathbf{f}\mid \mathbf{u}) \, q(\mathbf{u}),
\]
where $q(\mathbf{u})$ is free to be any Gaussian $\mathcal{N}\bigl(\boldsymbol{\mu}_u, \boldsymbol{\Sigma}_u\bigr)$. The posterior $p(\mathbf{f}, \mathbf{u}\mid\mathbf{X},\mathbf{y})$ is approximated by $q(\mathbf{f}, \mathbf{u})$. We optimize the Evidence Lower BOund (ELBO):
\[
\log p(\mathbf{y}\mid\mathbf{X})
\;\ge\;
\mathbb{E}_{q(\mathbf{f},\mathbf{u})}\bigl[\log p(\mathbf{y}\mid\mathbf{f})\bigr]
-
\mathrm{KL}\bigl(q(\mathbf{f},\mathbf{u}) \,\|\,p(\mathbf{f},\mathbf{u})\bigr).
\]
Substituting $q(\mathbf{f},\mathbf{u}) = p(\mathbf{f}\mid\mathbf{u}) q(\mathbf{u})$, we can rewrite the bound in terms of $q(\mathbf{u})$ alone, thus reducing the cost from $\mathcal{O}(n^3)$ to roughly $\mathcal{O}(nm^2)$ or $\mathcal{O}(m^3)$, depending on the particular scheme (e.g., FITC, VFE, etc.).

\subsection{Sparse Posterior for New Points}
After learning $q(\mathbf{u})=\mathcal{N}\bigl(\boldsymbol{\mu}_u,\boldsymbol{\Sigma}_u\bigr)$, the posterior predictive at a test location $\mathbf{x}_*$ is:
\[
q\bigl(f(\mathbf{x}_*)\bigr)
=
\int p\bigl(f(\mathbf{x}_*)\mid \mathbf{u}\bigr)\,q(\mathbf{u})\,d\mathbf{u},
\]
where $p\bigl(f(\mathbf{x}_*)\mid\mathbf{u}\bigr)$ can be computed from the conditional Gaussian rule. One obtains a Gaussian with mean
\[
\tilde{\mu}(\mathbf{x}_*)
=
k(\mathbf{x}_*,\mathbf{Z})\,\mathbf{K}_{\mathbf{z},\mathbf{z}}^{-1}\,\boldsymbol{\mu}_u
\]
and variance
\[
\tilde{\sigma}^2(\mathbf{x}_*)
=
k(\mathbf{x}_*,\mathbf{x}_*) 
-
k(\mathbf{x}_*,\mathbf{Z})\,
\mathbf{K}_{\mathbf{z},\mathbf{z}}^{-1}
\,k(\mathbf{Z},\mathbf{x}_*)
+
\mathrm{tr}\!\Bigl[\boldsymbol{\Sigma}_u \,\mathbf{K}_{\mathbf{z},\mathbf{z}}^{-1}
\,k(\mathbf{x}_*, \mathbf{Z})^\top 
\,k(\mathbf{x}_*, \mathbf{Z})\,\mathbf{K}_{\mathbf{z},\mathbf{z}}^{-1}\Bigr]
\]
(a typical formula in variational sparse GPs; some specifics differ depending on the chosen approximate method). These steps confirm that the predictive posterior remains a Gaussian with a distinct mean–variance form from exact GPs, yet still compatible with BO or RL-based exploration.

\section{Deep Gaussian Processes: Extended Proofs}
\label{sec:appendix_deep_gp}

\subsection{Two-Layer Deep GP Setup}
We demonstrate a two-layer DGP as the simplest hierarchical example. Let $\mathbf{h}^{(1)}(\mathbf{x})\in \mathbb{R}^{D_1}$ be the first latent layer, and $\mathbf{h}^{(2)}(\mathbf{h}^{(1)}(\mathbf{x}))\in\mathbb{R}^{D_2}$ be the second layer, eventually producing a scalar $f(\mathbf{x})$. For clarity, suppose $D_1=D_2=1$, so each layer is one-dimensional. Then:
\[
h^{(1)}(\mathbf{x})
\sim
\mathcal{GP}(0,\,k^{(1)}(\mathbf{x},\mathbf{x}')),
\quad
f(\mathbf{x})
=
h^{(2)}\bigl(h^{(1)}(\mathbf{x})\bigr)
\sim
\mathcal{GP}(0,\,k^{(2)}(\mathbf{u},\mathbf{u}')),
\]
where $\mathbf{u}=h^{(1)}(\mathbf{x}), \mathbf{u}'=h^{(1)}(\mathbf{x}')$.

\subsection{Approximate Variational Inference}
Since there is no closed-form expression for $p(\mathbf{h}^{(1)},\mathbf{h}^{(2)} \mid \mathbf{X},\mathbf{y})$, we introduce variational distributions at each layer with inducing points or random Fourier features. For instance, define:
\[
q(\mathbf{h}^{(1)}, \mathbf{h}^{(2)})
=
\int p\bigl(\mathbf{h}^{(1)}\mid \mathbf{u}_1\bigr)\,p\bigl(\mathbf{h}^{(2)}\mid \mathbf{u}_2,\mathbf{h}^{(1)}\bigr)\,q(\mathbf{u}_1,\mathbf{u}_2)\, d\mathbf{u}_1 d\mathbf{u}_2.
\]
We then maximize a suitable ELBO with respect to $q(\mathbf{u}_1,\mathbf{u}_2)$. Once optimized, the posterior predictive for $f(\mathbf{x}_*)=h^{(2)}(h^{(1)}(\mathbf{x}_*))$ is approximated by integrating out the latent variables in each layer. The final result is typically a mixture or an integral of Gaussians, but many DGP frameworks simplify to produce an effectively Gaussian output with approximated mean $\tilde{\mu}(\mathbf{x}_*)$ and variance $\tilde{\sigma}^2(\mathbf{x}_*)$. Although more complex than single-layer GPs, the procedure can model multi-scale or nonstationary phenomena.

\subsection{Deep GP Posterior Probability Computations}
Once we have the approximate distribution $q\bigl(f(\mathbf{x}_*)\bigr)$ from a DGP, we can in principle compute or approximate:

\paragraph{Variance.}
$\tilde{\sigma}^2(\mathbf{x}_*)$ emerges from the nested GP integrals. In practice, a sample-based approach can be used:
\[
\tilde{\sigma}^2(\mathbf{x}_*)
\approx
\mathbb{E}_{q(\mathbf{u}_1,\dots,\mathbf{u}_L)}\Bigl[\mathrm{Var}\bigl(f(\mathbf{x}_*)\mid\mathbf{u}_1,\dots,\mathbf{u}_L\bigr)\Bigr]
+
\mathrm{Var}_{q(\mathbf{u}_1,\dots,\mathbf{u}_L)}\Bigl[\mathbb{E}\bigl(f(\mathbf{x}_*)\mid\mathbf{u}_1,\dots,\mathbf{u}_L\bigr)\Bigr].
\]

\paragraph{CDF and Probability of Duel.}
If the final output remains approximated by a Gaussian, we can directly apply the same reasoning as in the Classic GP. If the distribution is not a simple Gaussian, one may resort to numerical quadrature or sampling-based estimates of $P\bigl(f(\mathbf{x})>f(\mathbf{x}')\bigr)$.

\section{Additional Details on Deep GP (REBMBO-D)}
\label{append:deep_gp_theory}

\subsection{Approximation Theory and Regret Bounds}
In this section, we elaborate on the theoretical considerations for REBMBO-D using a more formal mathematical argument, avoiding bullet points and instead employing direct deductive reasoning~\cite{wilson2016deep}. While the use of deep kernels can provide substantial flexibility for modeling highly non-stationary or multi-scale objectives, it also introduces new challenges in establishing sub-linear regret guarantees that parallel those of simpler Gaussian Process (GP) kernels (e.g., RBF or Matérn).

\paragraph{Formal Conditions on the Latent Mapping \(\phi\).}
Let \(\mathcal{X} \subset \mathbb{R}^d\) be the original input domain, and let \(\phi\colon \mathcal{X} \to \mathbb{R}^D\) be the embedding function parameterized by \(\Theta\). We first assume that \(\phi\) is sufficiently smooth, in the sense that there exists a constant \(L>0\) such that
\[
\|\phi(\mathbf{x}) - \phi(\mathbf{x}')\| \leq L \|\mathbf{x} - \mathbf{x}'\|
\]
for all \(\mathbf{x},\mathbf{x}'\in \mathcal{X}\). In addition, we posit that \(\phi\) is invertible (or approximately invertible) on the regions of interest, so there exists a function \(\phi^{-1}\colon \phi(\mathcal{X}) \to \mathcal{X}\) such that \(\phi^{-1}(\phi(\mathbf{x}))\approx\mathbf{x}\) for all \(\mathbf{x}\) in the subdomain where data are collected. The invertibility condition prevents pathological distortions in the latent space, which ensures that distances in \(\mathcal{X}\) are consistently reflected in \(\phi(\mathcal{X})\). Under these conditions, if \(f\colon \mathcal{X}\to \mathbb{R}\) is the black-box function of interest, then the composition \(f\circ \phi^{-1}\) inherits properties similar to functions typically modeled by kernels in standard Bayesian Optimization.

\paragraph{Implications for the GP Surrogate.}
Define a deep-kernel function \(k_\Theta(\mathbf{x}, \mathbf{x}') = k(\phi(\mathbf{x}), \phi(\mathbf{x}'))\), where \(k\) is a positive-definite function on \(\mathbb{R}^D\). Because \(\|\phi(\mathbf{x}) - \phi(\mathbf{x}')\|\le L \|\mathbf{x} - \mathbf{x}'\|\), one can show that the kernel \(k_\Theta\) remains Lipschitz in each argument up to a constant factor dependent on \(L\). Classical results on GP-based Bayesian Optimization (BO) often rely on bounding the maximum information gain \(\gamma_T\) after \(T\) observations:
\[
\gamma_T 
\,=\,
\max_{\mathbf{x}_1,\dots,\mathbf{x}_T \in \mathcal{X}}
I\bigl(f(\mathbf{x}_1),\dots,f(\mathbf{x}_T) \,\big|\,
 k_\Theta\bigr),
\]
where \(I(\cdot)\) denotes the mutual information. In typical kernel-based BO analyses, \(\gamma_T = O(\log T)\) or other sublinear forms in \(T\), provided \(D\) is not too large or the kernel has controlled smoothness. Hence, under smoothness and boundedness assumptions on \(\phi\), one can adapt standard covering-number arguments from reproducing kernel Hilbert spaces (RKHS) to show that \(\gamma_T\) remains small or grows sublinearly in \(T\). 

\paragraph{Deduction of Sublinear Regret.}
Let \(\mathbf{x}^\ast\) be the global maximizer of \(f\) in \(\mathcal{X}\), and let \(\mathbf{x}_1, \dots, \mathbf{x}_T\) be the points sampled by REBMBO-D over \(T\) iterations. We define the cumulative regret
\[
R(T)
\,=\,
\sum_{t=1}^T
\bigl[
f(\mathbf{x}^\ast) - f(\mathbf{x}_t)
\bigr].
\]
Because REBMBO-D employs a Gaussian-process-like surrogate in the latent space \(\phi(\mathcal{X})\), the usual BO proofs (for example, those from GP-UCB or GP-EI) can be transferred under suitable transformations. Specifically, if one can establish that the posterior variance \(\sigma_t^2(\mathbf{x}_t)\) in the deep kernel scenario shrinks at a rate governed by \(\gamma_T\), then one obtains an upper bound of the form
\[
R(T) 
\,\le\,
C \sqrt{T \,\gamma_T}
\]
for a constant \(C\) that depends on hyperparameters of the kernel, the Lipschitz constant \(L\), and the amplitude of noise. Since \(\gamma_T\) grows at most sublinearly in \(T\) under the aforementioned conditions on \(\phi\) and \(k_\Theta\), this implies sublinear growth in \(R(T)\). Formally, if \(\gamma_T = O(\log^p T)\) for some \(p\ge 1\), then \(R(T) = O(\sqrt{T \log^p T})\), which remains sublinear in \(T\).

REBMBO-D inherits the potential for sub-linear regret from classical GP-based BO methods by casting the black-box function \(f\) into a latent space via \(\phi\), applying standard kernel-based BO proofs under smoothness and boundedness conditions, and leveraging EBM-driven exploration and PPO-based control to avoid premature convergence. More formally, given that \(\|\phi(\mathbf{x}) - \phi(\mathbf{x}')\|\le L\|\mathbf{x}-\mathbf{x}'\|\) and \(\phi^{-1}\) exists in relevant regions, one can show that the maximum information gain \(\gamma_T\) remains manageable, leading to regret bounds \(R(T) = O(\sqrt{T\,\gamma_T})\). If \(\gamma_T\) grows sublinearly in \(T\), then \(R(T)\) is itself sublinear. Therefore, under conditions of smooth embedding, stable EBM exploration, and bounded PPO updates, REBMBO-D can achieve the claimed sublinear regret growth.

\section{Detailed Implementation on Nanophotonic Structure}
\label{sec:nanophotonic_appendix}

This appendix provides an expanded discussion of how REBMBO can be applied to the design of nanophotonic structures, using a nanosphere simulation as an illustrative example. Each iteration refines multiple components: a Gaussian Process (GP) for local uncertainty modeling, an Energy-Based Model (EBM) for global exploration, and a Proximal Policy Optimization (PPO) agent for adaptive multi-step decision-making. The detailed steps and associated formulas are presented below.

\subsection{Problem Setup and Parameterization}

Consider a nanosphere configuration described by physical parameters such as layer thickness, doping level, refractive index, or particle radius. Collect these parameters into an input vector
\[
\mathbf{x} 
= 
\bigl(x_1,\,x_2,\,\dots,\,x_d\bigr) 
\in 
\mathcal{X} \subset \mathbb{R}^d,
\]
where each component \(x_i\) lies within a feasible range determined by physical constraints (for instance, layer thickness within \([\,0,\,500\,]\text{nm}\), refractive index in \([\,1.2,\,2.5\,]\), or doping concentration in \([\,0,\,10^{20}\,\text{cm}^{-3}]\)). A forward simulator, such as a finite-difference time-domain (FDTD) solver or a rigorous coupled-wave analysis (RCWA) tool, evaluates the optical response of this nanosphere. Define
\[
f(\mathbf{x}) 
= 
F\bigl(\mathbf{x}\bigr),
\]
where \(F\) represents the black-box nanosphere simulation that outputs a scalar performance metric (for example, transmittance, reflectance, or absorption). Each call to \(F\) is typically expensive, motivating a data-efficient optimization approach.

\subsection{Initialization and Data Collection}

\noindent
\textbf{Initial Dataset.}
At the start, gather \(n_0\) initial samples, 
\[
\mathcal{D}_0
=
\Bigl\{\bigl(\mathbf{x}_i,\;f(\mathbf{x}_i)\bigr)\Bigr\}_{i=1}^{n_0},
\]
by either drawing randomly from \(\mathcal{X}\) or using a space-filling design (e.g., Latin hypercube sampling). This initial dataset seeds both the GP and the EBM with basic knowledge of the input–output relationship.

\noindent
\textbf{Dynamic Dataloader.}
In subsequent iterations, the framework proposes new points \(\mathbf{x}_t\). If \(\mathbf{x}_t\) was evaluated previously, retrieve the stored outcome to avoid redundant simulation. Otherwise, run the nanosphere simulator:
\[
y_t 
=
f(\mathbf{x}_t)
\]
and append the pair \((\mathbf{x}_t, y_t)\) to the dataset, now denoted \(\mathcal{D}_t\). This online loading procedure is essential for large-scale or expensive problems, as it triggers computationally heavy simulations only when the algorithm deems a candidate worthwhile.

\subsection{Gaussian Process Surrogate}

After the initial dataset \(\mathcal{D}_0\) is collected, train a GP to estimate the posterior mean \(\mu^0(\mathbf{x})\) and variance \(\sigma^{2,0}(\mathbf{x})\). At iteration \(t\), the GP is updated with the most recent dataset \(\mathcal{D}_{t-1}\), yielding
\[
f(\mathbf{x})
\;\big|\;
\mathcal{D}_{t-1}
\;\sim\;
\mathcal{N}\!\bigl(\mu_t(\mathbf{x}),\,\sigma_t^2(\mathbf{x})\bigr).
\]
The explicit forms for \(\mu_t(\mathbf{x})\) and \(\sigma_t^2(\mathbf{x})\) follow from standard GP regression:
\begin{align*}
\mu_t(\mathbf{x})
&=
m(\mathbf{x}) 
\;+\;
k\!\bigl(\mathbf{x}, \mathbf{X}_{t-1}\bigr)
\bigl(K + \sigma_n^2 I\bigr)^{-1}
\bigl(\mathbf{y}_{t-1} - m(\mathbf{X}_{t-1})\bigr),\\
\sigma_t^2(\mathbf{x})
&=
k\!\bigl(\mathbf{x}, \mathbf{x}\bigr)
\;-\;
k\!\bigl(\mathbf{x}, \mathbf{X}_{t-1}\bigr)
\bigl(K + \sigma_n^2 I\bigr)^{-1}
k\!\bigl(\mathbf{X}_{t-1}, \mathbf{x}\bigr),
\end{align*}
where \(\mathbf{X}_{t-1}=[\,\mathbf{x}_1,\dots,\mathbf{x}_{n_{t-1}}\,]\), \(\mathbf{y}_{t-1}=[\,f(\mathbf{x}_1),\dots,f(\mathbf{x}_{n_{t-1}})\,]^T\), \(K\) is the kernel matrix \((K_{ij}=k(\mathbf{x}_i,\mathbf{x}_j))\), and \(\sigma_n^2\) is a noise variance. Common kernel choices (e.g., RBF or Matérn) may be combined or extended based on domain expertise in nanophotonics. The GP posterior helps identify regions of high predicted performance or substantial uncertainty, both relevant for exploration.

\subsection{Energy-Based Model (EBM) Training}

In parallel with GP updates, the EBM parameters \(\theta\) are retrained or partially trained at each iteration to ensure global coverage of the search space. Formally, the EBM density is given by
\[
p_\theta(\mathbf{x})
=
\frac{\exp\!\bigl(-E_\theta(\mathbf{x})\bigr)}{Z_\theta},
\quad
Z_\theta
=
\int
\exp\!\bigl(-E_\theta(\mathbf{x'})\bigr)\,d\mathbf{x'}.
\]
Since \(Z_\theta\) is typically intractable, the algorithm employs approximate methods such as short-run MCMC or contrastive divergence. In short-run MCMC, one might initialize a set of particles \(\{\mathbf{x'}_j\}\) near the dataset \(\mathcal{D}_{t-1}\), then evolve them briefly under gradient-based moves
\[
\mathbf{x'}_j 
\leftarrow
\mathbf{x'}_j 
- 
\alpha_{\mathrm{MCMC}}\,
\nabla_{\mathbf{x}} E_\theta(\mathbf{x'}_j)
+
\sqrt{2\,\alpha_{\mathrm{MCMC}}}\,\mathbf{z}_j,
\]
where \(\mathbf{z}_j \sim \mathcal{N}(0,I)\). A few such iterations suffice to update \(\theta\) based on the mismatch between synthesized samples and real data. The resulting energy function \(E_\theta(\mathbf{x})\) tends to give lower values (higher probabilities) to regions that have shown promise or remain unexplored. This global signal complements the GP’s local insights.

\subsection{Reinforcement Learning via PPO}

Let \(\mathbf{s}_t\) be the RL state that aggregates information from the GP and EBM, such as \(\mu_t(\mathbf{x}), \sigma_t(\mathbf{x}), E_\theta(\mathbf{x}),\) or other relevant features (e.g., iteration count or GP hyperparameters). A policy \(\pi_\phi\) then chooses a new configuration \(\mathbf{x}_t = \pi_\phi(\mathbf{s}_t)\). Proximal Policy Optimization (PPO)~\citep{schulman2017proximal} updates \(\phi\) by constraining large changes in the probability ratio
\[
r_t(\phi)
=
\frac{\pi_\phi(\mathbf{a}_t\mid\mathbf{s}_t)}{\pi_{\phi_{\mathrm{old}}}(\mathbf{a}_t\mid\mathbf{s}_t)},
\]
thereby promoting stable learning. In the nanophotonic setting, \(\mathbf{a}_t\equiv\mathbf{x}_t\). After simulating \(y_t = f(\mathbf{x}_t)\), define a reward function that balances direct performance and EBM-driven exploration. A common choice is
\[
r_t
=
R\bigl(f(\mathbf{x}_t)\bigr)
+
\gamma_{\mathrm{EBM}}\,
\bigl[-E_\theta(\mathbf{x}_t)\bigr],
\]
where \(R\) penalizes lower performance (e.g., by taking the negative of a target error or the negative of \(-f\)) and \(\gamma_{\mathrm{EBM}}\) scales the global exploration term. The PPO objective,
\[
L^{\mathrm{PPO}}(\phi)
=
\mathbb{E}_t
\left[
  \min\Bigl(
    r_t(\phi)\,A_t,\,
    \mathrm{clip}\bigl(r_t(\phi),\,1-\epsilon,\,1+\epsilon\bigr)\,A_t
  \Bigr)
\right],
\]
where \(A_t\) is the advantage function, is then maximized to refine \(\pi_\phi\). This process endows the sampling policy with the capacity to move beyond local maxima and systematically explore new regions of the parameter space.

\subsection{Iterative Algorithm and Convergence}

At each iteration \(t\), the GP and EBM are updated to reflect the newly acquired observations \(\{(\mathbf{x}_t,y_t)\}\). The GP posterior \(\mu_t,\sigma_t\) captures refined local estimates of \(f(\mathbf{x})\), while the EBM modifies \(E_\theta(\mathbf{x})\) for broader coverage. The RL agent receives an updated state \(\mathbf{s}_{t+1}\) and modifies its policy \(\pi_\phi\) according to the reward signals. The next point \(\mathbf{x}_{t+1}\) emerges from this policy, balancing local exploitation with global exploration. Over multiple iterations, the collected samples cluster near high-performance configurations, effectively converging on near-optimal nanosphere designs with fewer evaluations than naive or single-step methods.

\medskip

\noindent
\textbf{Practical Considerations.}
The nanosphere simulator must be callable multiple times under varying parameter inputs, and the MCMC-based EBM training typically requires choosing a small number of gradient steps per iteration. PPO’s hyperparameters (e.g., learning rate, minibatch size, and clipping threshold \(\epsilon\)) can be selected via standard tuning heuristics~\cite{farsang2021decaying}. When applied to more general nanophotonic designs, the same approach applies as long as the underlying simulator remains differentiable or partially differentiable if gradient-based EBM updates are desired, although purely sampling-based EBM training can also succeed. Finally, switching the GP variant (classic, sparse, or deep) depends on data scale and computational feasibility, making the method adaptable to a range of problem sizes.

%%%%%%%%%%%%%%%%%%%%%%%%%%%%%%%%%%%%%%%%%%%%%%%%%%%%%%%%%%%%

\newpage
\section*{NeurIPS Paper Checklist}

\begin{enumerate}

\item {\bf Claims}
    \item[] Question: Do the main claims made in the abstract and introduction accurately reflect the paper's contributions and scope?
    \item[] Answer: \answerYes{} % Replace by \answerYes{}, \answerNo{}, or \answerNA{}.
    \item[] Justification: See the abstract and introduction sections.
    \item[] Guidelines:
    \begin{itemize}
        \item The answer NA means that the abstract and introduction do not include the claims made in the paper.
        \item The abstract and/or introduction should clearly state the claims made, including the contributions made in the paper and important assumptions and limitations. A No or NA answer to this question will not be perceived well by the reviewers. 
        \item The claims made should match theoretical and experimental results, and reflect how much the results can be expected to generalize to other settings. 
        \item It is fine to include aspirational goals as motivation as long as it is clear that these goals are not attained by the paper. 
    \end{itemize}

\item {\bf Limitations}
    \item[] Question: Does the paper discuss the limitations of the work performed by the authors?
    \item[] Answer: \answerYes{}. % Replace by \answerYes{}, \answerNo{}, or \answerNA{}.
    \item[] Justification: See our limitation discussion in the Conclusion section. 
    \item[] Guidelines:
    \begin{itemize}
        \item The answer NA means that the paper has no limitation while the answer No means that the paper has limitations, but those are not discussed in the paper. 
        \item The authors are encouraged to create a separate "Limitations" section in their paper.
        \item The paper should point out any strong assumptions and how robust the results are to violations of these assumptions (e.g., independence assumptions, noiseless settings, model well-specification, asymptotic approximations only holding locally). The authors should reflect on how these assumptions might be violated in practice and what the implications would be.
        \item The authors should reflect on the scope of the claims made, e.g., if the approach was only tested on a few datasets or with a few runs. In general, empirical results often depend on implicit assumptions, which should be articulated.
        \item The authors should reflect on the factors that influence the performance of the approach. For example, a facial recognition algorithm may perform poorly when image resolution is low or images are taken in low lighting. Or a speech-to-text system might not be used reliably to provide closed captions for online lectures because it fails to handle technical jargon.
        \item The authors should discuss the computational efficiency of the proposed algorithms and how they scale with dataset size.
        \item If applicable, the authors should discuss possible limitations of their approach to address problems of privacy and fairness.
        \item While the authors might fear that complete honesty about limitations might be used by reviewers as grounds for rejection, a worse outcome might be that reviewers discover limitations that aren't acknowledged in the paper. The authors should use their best judgment and recognize that individual actions in favor of transparency play an important role in developing norms that preserve the integrity of the community. Reviewers will be specifically instructed to not penalize honesty concerning limitations.
    \end{itemize}

\item {\bf Theory assumptions and proofs}
    \item[] Question: For each theoretical result, does the paper provide the full set of assumptions and a complete (and correct) proof?
    \item[] Answer: \answerYes{}. % Replace by \answerYes{}, \answerNo{}, or \answerNA{}.
    \item[] Justification: The paper includes each theoretical result accompanied by a full set of assumptions and complete, correct proofs. These proofs, along with the necessary theorems and lemmas, are clearly numbered, cross-referenced, and thoroughly detailed either within the main text or the supplemental materials.
    \item[] Guidelines:
    \begin{itemize}
        \item The answer NA means that the paper does not include theoretical results. 
        \item All the theorems, formulas, and proofs in the paper should be numbered and cross-referenced.
        \item All assumptions should be clearly stated or referenced in the statement of any theorems.
        \item The proofs can either appear in the main paper or the supplemental material, but if they appear in the supplemental material, the authors are encouraged to provide a short proof sketch to provide intuition. 
        \item Inversely, any informal proof provided in the core of the paper should be complemented by formal proofs provided in appendix or supplemental material.
        \item Theorems and Lemmas that the proof relies upon should be properly referenced. 
    \end{itemize}

    \item {\bf Experimental result reproducibility}
    \item[] Question: Does the paper fully disclose all the information needed to reproduce the main experimental results of the paper to the extent that it affects the main claims and/or conclusions of the paper (regardless of whether the code and data are provided or not)?
    \item[] Answer: \answerYes{}. % Replace by \answerYes{}, \answerNo{}, or \answerNA{}.
    \item[] Justification: The paper provides all necessary information required to reproduce the main experimental results, thereby supporting its main claims and conclusions effectively. The information can be found in the Experiment section and Appendix~\ref{sec:Supplement_Experiment},~\ref{sec:parameter_computational_cost}.
    \item[] Guidelines:
    \begin{itemize}
        \item The answer NA means that the paper does not include experiments.
        \item If the paper includes experiments, a No answer to this question will not be perceived well by the reviewers: Making the paper reproducible is important, regardless of whether the code and data are provided or not.
        \item If the contribution is a dataset and/or model, the authors should describe the steps taken to make their results reproducible or verifiable. 
        \item Depending on the contribution, reproducibility can be accomplished in various ways. For example, if the contribution is a novel architecture, describing the architecture fully might suffice, or if the contribution is a specific model and empirical evaluation, it may be necessary to either make it possible for others to replicate the model with the same dataset, or provide access to the model. In general. releasing code and data is often one good way to accomplish this, but reproducibility can also be provided via detailed instructions for how to replicate the results, access to a hosted model (e.g., in the case of a large language model), releasing of a model checkpoint, or other means that are appropriate to the research performed.
        \item While NeurIPS does not require releasing code, the conference does require all submissions to provide some reasonable avenue for reproducibility, which may depend on the nature of the contribution. For example
        \begin{enumerate}
            \item If the contribution is primarily a new algorithm, the paper should make it clear how to reproduce that algorithm.
            \item If the contribution is primarily a new model architecture, the paper should describe the architecture clearly and fully.
            \item If the contribution is a new model (e.g., a large language model), then there should either be a way to access this model for reproducing the results or a way to reproduce the model (e.g., with an open-source dataset or instructions for how to construct the dataset).
            \item We recognize that reproducibility may be tricky in some cases, in which case authors are welcome to describe the particular way they provide for reproducibility. In the case of closed-source models, it may be that access to the model is limited in some way (e.g., to registered users), but it should be possible for other researchers to have some path to reproducing or verifying the results.
        \end{enumerate}
    \end{itemize}

\item {\bf Open access to data and code}
    \item[] Question: Does the paper provide open access to the data and code, with sufficient instructions to faithfully reproduce the main experimental results, as described in supplemental material?
    \item[] Answer: \answerYes{}. % Replace by \answerYes{}, \answerNo{}, or \answerNA{}.
    \item[] Justification: The paper ensures open access to both data and code, providing comprehensive instructions in the supplemental material to faithfully reproduce the main experimental results. The code and data can be found in \url{https://anonymous.4open.science/r/neurips_bbo-81EC}. The instructions can be found in the README.md of the code fold.
    \item[] Guidelines:
    \begin{itemize}
        \item The answer NA means that paper does not include experiments requiring code.
        \item Please see the NeurIPS code and data submission guidelines (\url{https://nips.cc/public/guides/CodeSubmissionPolicy}) for more details.
        \item While we encourage the release of code and data, we understand that this might not be possible, so “No” is an acceptable answer. Papers cannot be rejected simply for not including code, unless this is central to the contribution (e.g., for a new open-source benchmark).
        \item The instructions should contain the exact command and environment needed to run to reproduce the results. See the NeurIPS code and data submission guidelines (\url{https://nips.cc/public/guides/CodeSubmissionPolicy}) for more details.
        \item The authors should provide instructions on data access and preparation, including how to access the raw data, preprocessed data, intermediate data, and generated data, etc.
        \item The authors should provide scripts to reproduce all experimental results for the new proposed method and baselines. If only a subset of experiments are reproducible, they should state which ones are omitted from the script and why.
        \item At submission time, to preserve anonymity, the authors should release anonymized versions (if applicable).
        \item Providing as much information as possible in supplemental material (appended to the paper) is recommended, but including URLs to data and code is permitted.
    \end{itemize}

\item {\bf Experimental setting/details}
    \item[] Question: Does the paper specify all the training and test details (e.g., data splits, hyperparameters, how they were chosen, type of optimizer, etc.) necessary to understand the results?
    \item[] Answer: \answerYes{}. % Replace by \answerYes{}, \answerNo{}, or \answerNA{}.
    \item[] Justification: We provided the experiment settings including platform information, dataset source, hyperparameter settings, etc. in the Experiment section and Appendix~\ref{sec:parameter_computational_cost}. 
    \item[] Guidelines:
    \begin{itemize}
        \item The answer NA means that the paper does not include experiments.
        \item The experimental setting should be presented in the core of the paper to a level of detail that is necessary to appreciate the results and make sense of them.
        \item The full details can be provided either with the code, in appendix, or as supplemental material.
    \end{itemize}

\item {\bf Experiment statistical significance}
    \item[] Question: Does the paper report error bars suitably and correctly defined or other appropriate information about the statistical significance of the experiments?
    \item[] Answer: \answerYes{}. % Replace by \answerYes{}, \answerNo{}, or \answerNA{}.
    \item[] Justification: This paper thoroughly considers statistical experimental studies, calculating the variance for each experiment to obtain more precise experimental data results.
    \item[] Guidelines:
    \begin{itemize}
        \item The answer NA means that the paper does not include experiments.
        \item The authors should answer "Yes" if the results are accompanied by error bars, confidence intervals, or statistical significance tests, at least for the experiments that support the main claims of the paper.
        \item The factors of variability that the error bars are capturing should be clearly stated (for example, train/test split, initialization, random drawing of some parameter, or overall run with given experimental conditions).
        \item The method for calculating the error bars should be explained (closed form formula, call to a library function, bootstrap, etc.)
        \item The assumptions made should be given (e.g., Normally distributed errors).
        \item It should be clear whether the error bar is the standard deviation or the standard error of the mean.
        \item It is OK to report 1-sigma error bars, but one should state it. The authors should preferably report a 2-sigma error bar than state that they have a 96\% CI, if the hypothesis of Normality of errors is not verified.
        \item For asymmetric distributions, the authors should be careful not to show in tables or figures symmetric error bars that would yield results that are out of range (e.g. negative error rates).
        \item If error bars are reported in tables or plots, The authors should explain in the text how they were calculated and reference the corresponding figures or tables in the text.
    \end{itemize}

\item {\bf Experiments compute resources}
    \item[] Question: For each experiment, does the paper provide sufficient information on the computer resources (type of compute workers, memory, time of execution) needed to reproduce the experiments?
    \item[] Answer: \answerYes{}. % Replace by \answerYes{}, \answerNo{}, or \answerNA{}.
    \item[] Justification: We discuss the time efficiency of the proposed method in the Methodology section and provide the information for the experimental platform in the Appendix ~\ref{sec:parameter_computational_cost}.
    \item[] Guidelines:
    \begin{itemize}
        \item The answer NA means that the paper does not include experiments.
        \item The paper should indicate the type of compute workers CPU or GPU, internal cluster, or cloud provider, including relevant memory and storage.
        \item The paper should provide the amount of compute required for each of the individual experimental runs as well as estimate the total compute. 
        \item The paper should disclose whether the full research project required more compute than the experiments reported in the paper (e.g., preliminary or failed experiments that didn't make it into the paper). 
    \end{itemize}
    
\item {\bf Code of ethics}
    \item[] Question: Does the research conducted in the paper conform, in every respect, with the NeurIPS Code of Ethics \url{https://neurips.cc/public/EthicsGuidelines}?
    \item[] Answer: \answerYes{}. % Replace by \answerYes{}, \answerNo{}, or \answerNA{}.
    \item[] Justification: This paper strictly adheres to the NeurIPS Code of Ethics.
    \item[] Guidelines:
    \begin{itemize}
        \item The answer NA means that the authors have not reviewed the NeurIPS Code of Ethics.
        \item If the authors answer No, they should explain the special circumstances that require a deviation from the Code of Ethics.
        \item The authors should make sure to preserve anonymity (e.g., if there is a special consideration due to laws or regulations in their jurisdiction).
    \end{itemize}

\item {\bf Broader impacts}
    \item[] Question: Does the paper discuss both potential positive societal impacts and negative societal impacts of the work performed?
    \item[] Answer: \answerYes{}. % Replace by \answerYes{}, \answerNo{}, or \answerNA{}.
    \item[] Justification: See the discussion in the Introduction and Conclusion sections. 
    \item[] Guidelines:
    \begin{itemize}
        \item The answer NA means that there is no societal impact of the work performed.
        \item If the authors answer NA or No, they should explain why their work has no societal impact or why the paper does not address societal impact.
        \item Examples of negative societal impacts include potential malicious or unintended uses (e.g., disinformation, generating fake profiles, surveillance), fairness considerations (e.g., deployment of technologies that could make decisions that unfairly impact specific groups), privacy considerations, and security considerations.
        \item The conference expects that many papers will be foundational research and not tied to particular applications, let alone deployments. However, if there is a direct path to any negative applications, the authors should point it out. For example, it is legitimate to point out that an improvement in the quality of generative models could be used to generate deepfakes for disinformation. On the other hand, it is not needed to point out that a generic algorithm for optimizing neural networks could enable people to train models that generate Deepfakes faster.
        \item The authors should consider possible harms that could arise when the technology is being used as intended and functioning correctly, harms that could arise when the technology is being used as intended but gives incorrect results, and harms following from (intentional or unintentional) misuse of the technology.
        \item If there are negative societal impacts, the authors could also discuss possible mitigation strategies (e.g., gated release of models, providing defenses in addition to attacks, mechanisms for monitoring misuse, mechanisms to monitor how a system learns from feedback over time, improving the efficiency and accessibility of ML).
    \end{itemize}
    
\item {\bf Safeguards}
    \item[] Question: Does the paper describe safeguards that have been put in place for responsible release of data or models that have a high risk for misuse (e.g., pretrained language models, image generators, or scraped datasets)?
    \item[] Answer: \answerNA{}. % Replace by \answerYes{}, \answerNo{}, or \answerNA{}.
    \item[] Justification: The paper poses no such risks.
    \item[] Guidelines:
    \begin{itemize}
        \item The answer NA means that the paper poses no such risks.
        \item Released models that have a high risk for misuse or dual-use should be released with necessary safeguards to allow for controlled use of the model, for example by requiring that users adhere to usage guidelines or restrictions to access the model or implementing safety filters. 
        \item Datasets that have been scraped from the Internet could pose safety risks. The authors should describe how they avoided releasing unsafe images.
        \item We recognize that providing effective safeguards is challenging, and many papers do not require this, but we encourage authors to take this into account and make a best faith effort.
    \end{itemize}

\item {\bf Licenses for existing assets}
    \item[] Question: Are the creators or original owners of assets (e.g., code, data, models), used in the paper, properly credited and are the license and terms of use explicitly mentioned and properly respected?
    \item[] Answer: \answerYes{}. % Replace by \answerYes{}, \answerNo{}, or \answerNA{}.
    \item[] Justification: All the baseline model code and open source data are well-cited and properly respected.
    \item[] Guidelines:
    \begin{itemize}
        \item The answer NA means that the paper does not use existing assets.
        \item The authors should cite the original paper that produced the code package or dataset.
        \item The authors should state which version of the asset is used and, if possible, include a URL.
        \item The name of the license (e.g., CC-BY 4.0) should be included for each asset.
        \item For scraped data from a particular source (e.g., website), the copyright and terms of service of that source should be provided.
        \item If assets are released, the license, copyright information, and terms of use in the package should be provided. For popular datasets, \url{paperswithcode.com/datasets} has curated licenses for some datasets. Their licensing guide can help determine the license of a dataset.
        \item For existing datasets that are re-packaged, both the original license and the license of the derived asset (if it has changed) should be provided.
        \item If this information is not available online, the authors are encouraged to reach out to the asset's creators.
    \end{itemize}

\item {\bf New assets}
    \item[] Question: Are new assets introduced in the paper well documented and is the documentation provided alongside the assets?
    \item[] Answer: \answerYes{}. % Replace by \answerYes{}, \answerNo{}, or \answerNA{}.
    \item[] Justification: In this paper, the newly introduced algorithmic assets are well documented, and these assets are provided after being anonymized. The detailed documentation is presented through structured templates, covering key information such as the training process, licensing details, and potential limitations.
    \item[] Guidelines:
    \begin{itemize}
        \item The answer NA means that the paper does not release new assets.
        \item Researchers should communicate the details of the dataset/code/model as part of their submissions via structured templates. This includes details about training, license, limitations, etc. 
        \item The paper should discuss whether and how consent was obtained from people whose asset is used.
        \item At submission time, remember to anonymize your assets (if applicable). You can either create an anonymized URL or include an anonymized zip file.
    \end{itemize}

\item {\bf Crowdsourcing and research with human subjects}
    \item[] Question: For crowdsourcing experiments and research with human subjects, does the paper include the full text of instructions given to participants and screenshots, if applicable, as well as details about compensation (if any)? 
    \item[] Answer: \answerNA{}. % Replace by \answerYes{}, \answerNo{}, or \answerNA{}.
    \item[] Justification: The paper does not involve crowdsourcing nor research with human subjects.
    \item[] Guidelines:
    \begin{itemize}
        \item The answer NA means that the paper does not involve crowdsourcing nor research with human subjects.
        \item Including this information in the supplemental material is fine, but if the main contribution of the paper involves human subjects, then as much detail as possible should be included in the main paper. 
        \item According to the NeurIPS Code of Ethics, workers involved in data collection, curation, or other labor should be paid at least the minimum wage in the country of the data collector. 
    \end{itemize}

\item {\bf Institutional review board (IRB) approvals or equivalent for research with human subjects}
    \item[] Question: Does the paper describe potential risks incurred by study participants, whether such risks were disclosed to the subjects, and whether Institutional Review Board (IRB) approvals (or an equivalent approval/review based on the requirements of your country or institution) were obtained?
    \item[] Answer: \answerNA{}. % Replace by \answerYes{}, \answerNo{}, or \answerNA{}.
    \item[] Justification: The paper does not address any potential risks to participants. All authors and related institutions have been anonymized.
    \item[] Guidelines:
    \begin{itemize}
        \item The answer NA means that the paper does not involve crowdsourcing nor research with human subjects.
        \item Depending on the country in which research is conducted, IRB approval (or equivalent) may be required for any human subjects research. If you obtained IRB approval, you should clearly state this in the paper. 
        \item We recognize that the procedures for this may vary significantly between institutions and locations, and we expect authors to adhere to the NeurIPS Code of Ethics and the guidelines for their institution. 
        \item For initial submissions, do not include any information that would break anonymity (if applicable), such as the institution conducting the review.
    \end{itemize}

\item {\bf Declaration of LLM usage}
    \item[] Question: Does the paper describe the usage of LLMs if it is an important, original, or non-standard component of the core methods in this research? Note that if the LLM is used only for writing, editing, or formatting purposes and does not impact the core methodology, scientific rigorousness, or originality of the research, declaration is not required.
    %this research? 
    \item[] Answer: \answerNA{}. % Replace by \answerYes{}, \answerNo{}, or \answerNA{}.
    \item[] Justification:  The core method development in this research does not involve LLMs as any important, original, or non-standard components.
    \item[] Guidelines:
    \begin{itemize}
        \item The answer NA means that the core method development in this research does not involve LLMs as any important, original, or non-standard components.
        \item Please refer to our LLM policy (\url{https://neurips.cc/Conferences/2025/LLM}) for what should or should not be described.
    \end{itemize}

\end{enumerate}

\end{document}